\DeclareMathOperator*{\argmax}{arg\,max}
\newtheorem{Lemma}{Lemma}
\newtheorem{Theorem}{Theorem}
\newtheorem*{Sketch}{Sketch of Proof}
\newtheorem{Corollary}{Corollary}
\journal{}
\definecolor{red(ncs)}{rgb}{0.0, 0.0, 0.0}
\newcommand{\rd}[1]{\textcolor{red(ncs)}{#1}}
\definecolor{red(ncss)}{rgb}{0.0, 0.0, 0.0}
\newcommand{\rdd}[1]{\textcolor{red(ncss)}{#1}}
\begin{document}

\begin{frontmatter}



\title{\rd{Addressing Maximization Bias in Reinforcement Learning with Two-Sample Testing}}


\author[1]{Martin Waltz\corref{CorrespondingAuthor}}
\ead{martin.waltz@tu-dresden.de}

\author[1,2]{Ostap Okhrin}

\affiliation[1]{organization={Technische Universität Dresden, Chair of Econometrics and Statistics, esp. in the Transport Sector},
            city={Dresden},
            postcode={01062}, 
            country={Germany}}

\affiliation[2]{organization={\rd{Center for Scalable Data Analytics and Artificial Intelligence (ScaDS.AI)}}, city={\rd{Dresden/Leipzig}}, country={\rd{Germany}}}

\cortext[CorrespondingAuthor]{Corresponding author}
\begin{abstract}
Value-based reinforcement-learning algorithms have shown strong results in games, robotics, and other real-world applications. Overestimation bias is a known threat to those algorithms and can \rdd{sometimes} lead to dramatic performance decreases or even complete algorithmic failure. We frame the bias problem statistically and consider it an instance of estimating the maximum expected value (MEV) of a set of random variables. We propose the $T$-Estimator (TE) based on two-sample testing for the mean, that flexibly interpolates between over- and underestimation by adjusting the significance level of the underlying hypothesis tests. \rdd{We also introduce a} generalization, termed $K$-Estimator (KE), \rdd{that} obeys the same bias and variance bounds as the TE \rdd{and relies} on a nearly arbitrary kernel function. We introduce modifications of $Q$-Learning and the Bootstrapped Deep $Q$-Network (BDQN) using the TE and the KE\rd{, and prove convergence in the tabular setting.} Furthermore, we propose an adaptive variant of the TE-based BDQN that dynamically adjusts the significance level to minimize the absolute estimation bias. All proposed estimators and algorithms are thoroughly tested and validated on diverse tasks and environments, illustrating the bias control and performance potential of the TE and KE.
\end{abstract}



\begin{keyword}
maximum expected value \sep two-sample testing \sep reinforcement learning \sep $Q$-learning \sep estimation bias



\end{keyword}

\end{frontmatter}



\section{Introduction}
Estimating the maximum expected value (MEV) of a set of random variables is a long-standing statistical problem, including early contributions of \cite{blumenthal1968estimation}, \cite{dudewicz1971maximum}, and \cite{ishwaei1985non}. These works show that for various underlying distributions\rdd{, for example, Gaussian and Binomial distributions,} an unbiased estimator does not exist. The problem has recently attained increased attention since it also arises in reinforcement learning (RL), \rd{where an agent interacts with an environment while optimizing for a policy - a mapping from states to actions - that maximizes a numerical reward signal \citep{sutton2018reinforcement}.~\rdd{Many of the frequently} used RL} algorithms define a policy-dependent action-value, also called $Q$-value, for each state-action pair. This value represents the expected sum of discounted rewards when executing the given action in the given state and following a specific policy afterward. In particular, the update rule of the $Q$-Learning algorithm \rdd{\citep{watkins1992q}} is based on adjusting the $Q$-estimate for a given state-action pair towards the observed reward and the maximum of the estimated $Q$-values of the next state. However, this use of the Maximum Estimator (ME) of the MEV leads to overestimations of action-values, which are transmitted throughout the update routine \citep{hasselt2010double}. These can damage the learning performance or even lead to failure of the algorithm \citep{thrun1993issues}, especially when function approximation is used \citep{van2016deep}.

\cite{hasselt2010double} proposed the Double Estimator (DE), which splits the data into independent sets, thereby separating the selection and evaluation of a maximizing value. The corresponding Double $Q$-Learning is a popular choice among practitioners \rdd{\citep{yuan2019double, he2021variational}}. Although the DE introduces underestimation bias, Double $Q$-Learning offers improved robustness and strong performances, especially in highly stochastic environments. Another crucial contribution is \rdd{the work by} \cite{d2016estimating}, in which the Weighted Estimator (WE) alongside Weighted $Q$-Learning is proposed. From a bias perspective, the estimator builds a compromise between the overestimating ME and the underestimating DE. However, the WE does not offer additional flexibility in selecting the level of bias and is computationally demanding since it requires numerical integration or Monte Carlo approximation. \rd{\cite{lan2020maxmin} presented MaxMin $Q$-Learning, which builds on learning multiple approximators for the same action-value. Whether the algorithm over- or underestimates $Q$-values depends on the discrete choice of the number of approximators. We refer to the corresponding estimator of the MEV as the MaxMin Estimator (MME).} \rdd{Notably, the DE, WE, and the MME also} led to modifications of the Deep $Q$-Networks (DQN, \citealt{mnih2015human}). \rdd{The DQN} expanded $Q$-Learning to the deep neural network (DNN) setting and paved the path for the striking success of RL in recent years \citep{silver2017mastering, vinyals2019grandmaster}.

\cite{d2016estimating}, \cite{lan2020maxmin}, among others, have shown that both over- and underestimation of $Q$-values might not always be harmful, depending on, e.g., the stochasticity of the environment, the difference of the action-values, the size of the action space, or the time horizon. We argue that a competitive estimator of the MEV should thus be able to interpolate between over- and underestimation via an interpretable hyperparameter, enabling it to deal with a diverse set of environments. Furthermore, the estimator should obey a variance bound \rdd{similar to competitors like the ME and DE to avoid trading a smaller absolute bias for a significantly increased variance bound. In addition,} for practical application, \rdd{the estimator} should be fast and stable to compute. Fulfilling these criteria, we propose an estimator based on two-sample testing for the mean, named $T$-Estimator (TE). The idea is to get a statistically significant statement of whether one mean is truly larger than others. Consequently, the hyperparameter is the level of significance $\alpha$. The ME is shown to be a special case of the TE with $\alpha = 0.5$. Building on the two-sample test statistic, we further consider a generalization termed $K$-Estimator (KE), which is characterized by a suitable kernel function and can smooth the discontinuities around testing decisions of the TE. We theoretically and empirically analyze the TE and KE regarding their biases and variances, for which general bounds are derived. Using \rdd{these} newly defined estimators, we propose RL algorithms for \rdd{both} the \rd{tabular} case and with DNNs as function approximators. Since the two-sample testing procedure incorporates variance estimates of the involved variables, we employ an online variance update routine \citep{d2019exploiting} in the \rd{tabular} scenario, and the framework of the Boostrapped DQN (BDQN, \citealt{osband2016deep}) in the DNN setting. \rd{Furthermore, we prove the convergence of the algorithm for the tabular case.}

The empirical evidence that over- and underestimation of action-values is not necessarily detrimental to learning performance might be explained by the connection of $Q$-estimates to the exploration procedure of algorithms \citep{fox2016taming}. However, \cite{fox2019toward} and \cite{liang2021temporal} argue that these topics should be addressed separately by focusing firstly on unbiased value-estimation and secondly on improved exploration schemes. We acknowledge this perspective by additionally proposing an adaptive tuning mechanism for the significance level $\alpha$ of the TE in the DNN setting with the objective of minimizing the absolute estimation bias. The approach complements recent proposals of \cite{dorka2021adaptively} and \cite{wang2021adaptive}. The dynamic adjustment of $\alpha$ is realized by running partial greedy episodes and comparing $n$-step returns \citep{sutton2018reinforcement} with the action-value estimates for the visited state-action pairs. Furthermore, through learning $\alpha$, we avoid the \rd{computationally} demanding tuning process of this environment-specific hyperparameter. Finally, we demonstrate the performance potential of all \rdd{the} newly proposed estimators and algorithms by extensively testing them in various tasks and environments, with and without function approximation.

The paper is organized as follows: Section \ref{sec:Estimating_max_mu} formalizes the problem of estimating the MEV. Section \ref{sec:two_sample_testing_MEV} details the proposed estimators\rd{, and Section \ref{sec:deps} analyzes them with and without fulfillment of the underlying independence assumptions.} Section \ref{sec:TE_KE_based_RL} introduces the RL setup and presents the new temporal-difference algorithms, while Section \ref{subsec:adaptive_bias} details the measurement of estimation bias and introduces the adaptive update mechanism of $\alpha$. The experiments are shown and thoroughly discussed in Section \ref{sec:Experiments}, with the code being available at: \url{https://github.com/MarWaltz/TUD_RL}. Section \ref{sec:related_work} provides further literature on the state-of-the-art\rd{,} and Section \ref{sec:conclusion} concludes \rd{this article}.

\sloppy
\section{Estimating the Maximum Expected Value}\label{sec:Estimating_max_mu}
\subsection{Problem Definition}
Let us consider $M \geq 2$ independent \rdd{real-valued} random variables $X_1, \ldots, X_M$ with finite expectations $\mu_1 = \operatorname{E}(X_1), \ldots, \mu_M = \operatorname{E}(X_M)$ and \rdd{finite} variances $\sigma_{1}^2 = \operatorname{Var}(X_1), \ldots, \sigma_{M}^2 = \operatorname{Var}(X_M)$. The corresponding probability density functions (pdfs) and cumulative distribution functions (cdfs) are denoted $f_{X_1}, \ldots, f_{X_M}$ and $F_{X_1}, \ldots, F_{X_M}$, respectively. The quantity of interest is the \emph{maximum expected value}: $\mu_{*} = \max_i \mu_i$. Estimation is performed based on samples $S = \rdd{(}S_1, \ldots, S_M\rdd{)}$\rd{, where $S_i$ is \rdd{a set containing samples from $X_i$} for $i=1,\ldots,M$,} without knowing moments or imposing distributional assumptions. \rdd{For simplicity, we refer to the set of samples $S_i$ simply as sample $S_i$. In addition, we assume to have at least two different elements inside each sample $S_i$ to ensure a non-zero sample variance. Moreover, the} realizations in a sample $S_i$ are assumed to be \rd{independent and identically distributed}. \rd{Consequently, we assume that there are no \emph{in-sample} dependencies inside a sample $S_i$ and no \emph{cross-sample dependencies} between samples $S_i$ and $S_j$, for $i,j = 1, \ldots, M$ and $i \neq j$.} The unbiased sample mean of $S_i$ is denoted $\hat{\mu}_i(S_i)$, while an estimator of the MEV is referred to as $\hat{\mu}_*(S)$. Throughout the paper, we \rdd{use abbreviated notations for conciseness, such as} $\hat{\mu}_i = \hat{\mu}_i(S_i)$, $\hat{\mu}_{*} = \hat{\mu}_{*}(S)$, and similar \rdd{expressions}. Primary evaluation criteria of an estimator are its bias $\operatorname{Bias}(\hat{\mu}_*) = \operatorname{E}(\hat{\mu}_*) - \mu_*$, and variance $\operatorname{Var}(\hat{\mu}_*) = \operatorname{E}\left\{ [\hat{\mu}_* - \operatorname{E}(\hat{\mu}_*)]^2 \right\}$. These can be aggregated to the mean squared error $\operatorname{MSE}(\hat{\mu}_*) = \rdd{\operatorname{E}\left[ (\hat{\mu}_* -\mu_*)^2 \right] = } \operatorname{Bias}(\hat{\mu}_*)^2 + \operatorname{Var}(\hat{\mu}_*)$.

\subsection{Maximum Estimator}\label{subsec:ME}
The ME $\hat{\mu}^{\rdd{\textrm{ME}}}_{*}$ is the classic approach and takes the maximum of unbiased mean estimates:
\begin{equation*}
    \hat{\mu}^{\rdd{\textrm{ME}}}_{*} = \max_i \hat{\mu}_i.
\end{equation*}
Denoting the pdf of $\hat{\mu}_i$ as $\hat{f}_i$ and the corresponding cdf as $\hat{F}_i$, it holds:
\begin{equation}\label{eq:ME_exp}
    \operatorname{E}\left(\hat{\mu}^{\rdd{\textrm{ME}}}_{*}\right) = \sum_{i=1}^{M} \int_{-\infty}^{\infty} x \hat{f}_i(x) \prod\limits_{\substack{j=1 \\ j\neq i}}^M \hat{F}_j(x) dx.
\end{equation}

The ME is positively biased: $\operatorname{E}\left(\hat{\mu}^{\rdd{\textrm{ME}}}_{*}\right) \geq \mu_{*}$\rdd{; see \cite{hasselt2010double}}. \rdd{This bias occurs because $x$ inside the integral positively correlates with the monotonically increasing product $\prod\limits_{\substack{j=1 \\ j\neq i}}^M \hat{F}_j(x)$.} \rdd{F}ollowing \cite{aven1985upper}, a general upper bound for the bias can be given:
\begin{equation}\label{eq:ME_bias_bounds}
    0 \leq \operatorname{Bias}\left(\hat{\mu}^{\rdd{\textrm{ME}}}_{*}\right) \leq \sqrt{\frac{M-1}{M} \sum_{i=1}^{M} \operatorname{Var}\left(\hat{\mu}_i\right)}.
\end{equation}
The bias is particularly large when $\mu_1 \approx \ldots \approx \mu_M$. Furthermore, it can be shown that the variance of $\hat{\mu}^{\rdd{\textrm{ME}}}_{*}$ is bounded from above: $\operatorname{Var}\left(\hat{\mu}^{\rdd{\textrm{ME}}}_{*}\right) \leq \sum_{i=1}^{M} \frac{\sigma_{i}^{2}}{\vert S_i\vert }$, where $|S_i|$ is the size of $S_i$\rdd{;} see \cite{van2013estimating}.

\subsection{Double Estimator}\label{subsec:DE}
\cite{hasselt2010double} introduced the DE, which is thoroughly analyzed in \cite{van2013estimating}. The key idea is to separate the selection of the maximizing random variable and the evaluation of its sample mean, which is performed simultaneously in the ME. The DE splits $S$ randomly into disjoint subsets $S^A = \rdd{(}S^{A}_1, \ldots, S^{A}_M\rdd{)}$ and $S^B = \rdd{(}S^{B}_1, \ldots, S^{B}_M\rdd{)}$, guaranteeing that means based on the subsets are still unbiased. Afterwards, one selects an index which maximizes the sample mean in $S^{A}$: \rd{$a^* \in \{i \mid \hat{\mu}_{i}(S_{i}^{A}) = \max_j \hat{\mu}_{j}(S_{j}^{A})\}$.} The DE is defined by evaluating $a^*$ on $S^{B}$: \rd{$\hat{\mu}^{\rdd{\textrm{DE}}}_{*}(S) = \hat{\mu}_{a^*}(S^{B}_{a^*})$. Similarly}, one can perform the same procedure with $S^{A}$ and $S^{B}$ switched to get a second DE estimate. Averaging both DE estimates yields the 2-fold Cross-Validation estimator (CVE) $\hat{\mu}^{\rdd{\textrm{CVE}}}_{*}$, which has a reduced variance in comparison to a single DE estimate. The expectations of the DE and the CVE are equal since both DE estimates (for $S^A$ and $S^B$) have identical expectations:
\begin{align}
    \operatorname{E}\left(\hat{\mu}^{\rdd{\textrm{CVE}}}_{*}\right) = \operatorname{E}\left(\hat{\mu}^{\rdd{\textrm{DE}}}_{*}\right) &= \sum_{i=1}^{M} \operatorname{E}\left[\rd{\hat{\mu}_{i}(S_{i}^B)}\right] \rdd{\mathbb{P}}(i=a^*) \nonumber\\
    &= \sum_{i=1}^{M} \operatorname{E}\left[\rd{\hat{\mu}_{i}(S_{i}^B)}\right] \int_{-\infty}^{\infty} \hat{f}_i^{A}(x) \prod\limits_{\substack{j=1 \\ j\neq i}}^M \hat{F}_j^{A}(x) dx, \label{eq:DE_exp}
\end{align}
where $\hat{f}_i^{A}$ and $\hat{F}_i^{A}$ are the cdf and pdf of \rd{$\hat{\mu}_i(S_{i}^A)$}, respectively. \cite{hasselt2010double} showed that the DE is prone to underestimation: $\operatorname{E}\left(\hat{\mu}^{\rdd{\textrm{DE}}}_{*}\right) \leq \mu_{*}$, because it might attribute non-zero selection probability to non-maximum variables. Furthermore, \cite{van2013estimating} conjectures the following lower bound for the bias:
\begin{equation*}
    -\frac{1}{2} \left(\sqrt{\sum_{i=1}^{M} \frac{\sigma_{i}^{2}}{|S_{i}^{A}|}} + \sqrt{\sum_{i=1}^{M} \frac{\sigma_{i}^{2}}{|S_{i}^{B}|}} \right) < \operatorname{Bias}(\hat{\mu}^{\rdd{\textrm{DE}}}_{*}) \leq 0,
\end{equation*}
while the variance of the CVE is shown to be bounded as the ME: $\operatorname{Var}\left(\hat{\mu}^{\rdd{\textrm{CVE}}}_{*}\right) \leq \sum_{i=1}^{M} \frac{\sigma_{i}^{2}}{|S_i|}$. Worth mentioning is that the variance of the CVE is not necessarily half the variance of the DE, as there is non-zero covariance between the two DE estimates, see the example in \ref{appendix:analytic_forms}. Throughout the experiments, we follow \cite{d2021gaussian} and use the CVE instead of the DE whenever possible.

\subsection{Weighted Estimator}\label{subsec:WE}
\cite{d2016estimating} introduced the Weighted Estimator (WE) for the MEV, which is a weighted mean of all sample averages. Each weight corresponds to the probability of $\hat{\mu}_i$ being larger than all other means:
\begin{equation*}
    \hat{\mu}^{\rdd{\textrm{WE}}}_{*} = \sum_{i=1}^{M} w_i \hat{\mu}_i = \sum_{i=1}^{M} \rdd{\mathbb{P}} \left(\hat{\mu}_i = \max_j \hat{\mu}_j\right) \hat{\mu}_i.
\end{equation*}
Since the probabilities depend on the unknown mean distributions $\hat{f_i}$, the authors propose a Gaussian approximation based on the central limit theorem:
\begin{equation}\label{eq:WE}
    \hat{\mu}^{\rdd{\textrm{WE}}}_{*} = \sum_{i=1}^{M}\hat{\mu}_i \int_{-\infty}^{\infty}\Tilde{f}_i(x) \prod\limits_{\substack{j=1 \\ j\neq i}}^M \Tilde{F}_j(x) dx,
\end{equation}
where $\Tilde{f}_i$ \rdd{and $\Tilde{F}_i$ are the Gaussian pdf and cdf, respectively,} with mean $\hat{\mu}_i$ and variance $\frac{\hat{\sigma}_{i}^2}{|S_i|}$. The unbiased estimate of ${\sigma}_{i}^2$ is denoted $\hat{\sigma}_{i}^2$ and $|S_i|$ refers to the sample size. Crucially, the bias of the WE is bounded by the ME and DE:
\begin{equation*}
    \operatorname{Bias}(\hat{\mu}^{\rdd{\textrm{DE}}}_*) \leq \operatorname{Bias}(\hat{\mu}^{\rdd{\textrm{WE}}}_*) \leq \operatorname{Bias}(\hat{\mu}^{\rdd{\textrm{ME}}}_*),
\end{equation*}
while it exhibits the same variance bound: $\operatorname{Var}\left(\hat{\mu}^{\rdd{\textrm{WE}}}_{*}\right) \leq \sum_{i=1}^{M} \frac{\sigma_{i}^{2}}{|S_i|}$\rdd{; see \cite{d2016estimating}}. Thus, the bias of the WE might be positive or negative, depending on the distribution of the random variables. A drawback of this estimator lies in increased computation time since calculating the integrals in (\ref{eq:WE}) is a demanding process. Tackling this issue, \cite{d2021gaussian} propose to use Monte Carlo approximations instead, and we follow this approach when computing the WE in the experiments. The Monte Carlo sample sizes in those cases are set to 100.

\subsection{\rd{MaxMin Estimator}}
\rd{\cite{lan2020maxmin} proposed MaxMin $Q$-Learning with the corresponding MaxMin Estimator (MME) for the MEV. Similar to the DE, the MME splits each of the $M$ samples in $S = \rdd{(}S_1, \ldots, S_M\rdd{)}$ into $N$ disjoint, equally-sized subsamples, which we denote as $S_{i}^{j}$ for $i=1,\ldots, M$ and $j=1,\ldots,N$. The estimator is then constructed as follows:
\begin{equation}\label{eq:MME}
    \hat{\mu}^{\rdd{\textrm{MME}}}_{*} = \max_i \min_j \hat{\mu}(S_{i}^{j}),
\end{equation}
where $\hat{\mu}\left(S_{i}^{j}\right)$ is the sample mean of $S_{i}^{j}$. The underlying rationale of (\ref{eq:MME}) is that the underestimation introduced via the minimum operator mitigates the overestimation due to the max operator. \cite{lan2020maxmin} provide analytical results for the expectation and variance of the MME in the particular case that the sample means are uniformly distributed. Following \cite{d2021gaussian}, a general expression for the expectation of the MME is:
$$\operatorname{E}\left(\hat{\mu}^{\rdd{\textrm{MME}}}_{*}\right) = \sum_{i=1}^{M} \sum_{j=1}^{N} \rdd{\mathbb{P}} \left[i = \argmax_i \min_j \hat{\mu}(S_{i}^{j})\right] \mu_i.$$
The MME can control the estimation bias via the number of subsamples $N$. Consequently, this approach necessitates learning $N$ distinct approximators for the same quantity, which in turn limits the tuning capabilities of the method due to the discrete nature of $N$. Moreover, as emphasized by \cite{lan2020maxmin}, a critical subtlety is that the splitting procedure drastically reduces the available sample size for each of the $M \cdot N$ estimators.}

\section{Two-Sample Testing-based Estimators}\label{sec:two_sample_testing_MEV}
\subsection{T-Estimator}
To create a flexible estimator which:
\begin{enumerate}[label=(\alph*)]
    \item is able to interpolate between over- and underestimation,
    \item obeys a variance bound similar to the ME,
    \item has a \rd{continuously tunable and} interpretable hyperparameter,
    \item is fast and easy to compute,
\end{enumerate}
we propose a procedure based on one-sided two-sample testing for the mean. Generally, for two random variables $X_1, X_2$, we consider the hypothesis $H_0: \mu_1 \geq \mu_2$. The test statistic is constructed as follows \citep{MSAwackerly}:
\begin{equation}\label{eq:TST}
    T = \frac{\hat{\mu}_1 - \hat{\mu}_{2}}{\sqrt{ \frac{\hat{\sigma}^2_1}{|S_1|} + \frac{\hat{\sigma}^2_{2}}{|S_{2}|}}},
\end{equation}
\rd{where $\hat{\sigma}^2_i = \frac{1}{\vert S_i \vert -1}\sum_{j=1}^{\vert S_i \vert} \left(S_{i,j} - \hat{\mu}_i \right)^2$ is the unbiased estimator of the variance $\sigma^2_i$ for $i=1,2$. We denote with $S_{i,j}$ the $j$-th observation of sample $S_i$, while $|S_i|$ is the size of $S_i$.} If the realization of $T$ is smaller than the \rdd{respective} $\alpha$-quantile, $z_{\alpha} = \Phi^{-1}(\alpha)$, \rdd{where $\Phi$ is the standard Gaussian cdf,} hypothesis $H_0$ is rejected. The use of the normal distribution as the asymptotic distribution of the test statistics for $H_0$ can be justified via the \rd{Lindeberg-Lévy} central limit theorem, since it holds $\sqrt{|S_i|} \frac{\hat{\mu}_i - \mu_i}{\sigma_i} \xrightarrow[|S_i|\rightarrow \infty]{d} \mathcal{N}(0, 1)$ for $i = 1,2$, and using Slutsky's theorem, as $\hat{\sigma}_i$ converges almost surely to $\sigma_i$ \rdd{\citep{serfling1980approximation}}.

Based on this test, the following procedure for estimating the MEV is proposed: First, we consider the complete set of indices $\mathcal{L} = \{1, \ldots, M\}$ and select an index that corresponds to a variable with the value of the ME: $i^* \in \{i \mid \hat{\mu}_{i} = \max_j \hat{\mu}_{j}\}$. Second, we test for all $i \in \mathcal{L}$ the $H_0$: $\mu_i \geq \mu_{i^*}$. If $H_0$ is rejected for some $i'$, we assert $\mu_{i'} < \mu_{i^*}$ and remove variable index $i'$ from the index set: $\mathcal{L} \leftarrow \mathcal{L} \char`\\ \{i'\}$. Third, we average the remaining $\{\hat{\mu}_i \mid i \in \mathcal{L}\}$. Compactly written:
\begin{equation}\label{eq:T-Estimator}
    \rdd{\hat{\mu}^{\rdd{\textrm{TE}}}_*(\alpha) = \left[\sum_{i=1}^{M}\mathcal{I}\left(T_i\geq z_{\alpha}\right)\right]^{-1}
    \sum_{i=1}^{M}\mathcal{I}\left(T_i\geq z_{\alpha}\right)\hat{\mu}_i, \hspace{0.2cm} \text{where} \hspace{0.2cm} T_i = \frac{\hat{\mu}_i - \hat{\mu}^{\rdd{\textrm{ME}}}_{*}}{\sqrt{ \frac{\hat{\sigma}^2_i}{|S_i|} + \frac{\hat{\sigma}^2_{i^*}}{|S_{i^*}|} }},}
\end{equation}
\rdd{and} $\mathcal{I}(\cdot)$ is the indicator function. We refer to (\ref{eq:T-Estimator}) as \emph{T-Estimator} (TE). In simple words: TE averages the means of all variables, which are statistically not smaller than the one of the ME. Consequently, the selection is a binary decision of rejection or non-rejection of the underlying hypothesis. A key aspect of the TE is the consideration of the values of the sample means together with their uncertainties, expressed by variances. Asymptotically, \rd{when} $|S_i| \rightarrow \infty$ for $i = 1,\ldots,M$, the TE follows a normal distribution since it is an average of asymptotically \rd{normally} distributed variables.

The hyperparameter is the significance level $\alpha$, which is an interpretable quantity for practitioners and researchers, and is naturally restricted to $\alpha \in (0, 0.5]$. One can directly determine the extreme case on the upper domain limit: $\hat{\mu}^{\rdd{\textrm{TE}}}_*(\alpha = 0.5) = \hat{\mu}^{\rdd{\textrm{ME}}}_*$. \rd{This follows from the fact that $z_{0.5} = 0$, meaning that the test statistics, which is either zero or negative, has to be zero to fulfill the condition of the indicator function. Therefore, only the means reaching the maximum value will be averaged. Consequently, the ME is a special case of the TE, being prone to overestimation bias with the bounds given in Section \ref{subsec:ME}. Intuitively,} by reducing $\alpha$, we reduce the bias since we tend to non-reject $H_0$ for smaller sample means. If one would consider a significance level of zero, the TE would collapse into the Average Estimator (AE): $\hat{\mu}^{\rdd{\textrm{AVG}}}_* = M^{-1} \sum_{i=1}^{M} \hat{\mu}_i$. \cite{imagaw2017estimating} provide a similar definition in a multi-armed bandit context. The AE has low variance: $\operatorname{Var}(\hat{\mu}^{\rdd{\textrm{AVG}}}_*) = M^{-2} \sum_{i=1}^{M} \frac{\sigma_{i}^{2}}{|S_i|}$, but severe negative bias: $\operatorname{Bias}(\hat{\mu}^{\rdd{\textrm{AVG}}}_*) = - M^{-1} \sum_{i=1}^{M} (\max_j \mu_j - \mu_i)$. However, we do not include $\alpha = 0$ in our definition domain for the TE since 1) it is statistically not reasonable to consider such hypothesis tests and 2) the uncertainties of the sample means, quantified through variances, are not present anymore.

The following lemma contains the bias bounds of the TE.

\begin{Lemma}\label{lemma:Bias_bound_TE}
    For $\alpha \in (0, 0.5]$, it holds:
    $$\rdd{\min_i \mu_i - \max_i \mu_i - \sqrt{\frac{M-1}{M} \sum_{i=1}^{M} \mathrm{Var}\left(\hat{\mu}_i\right)}} \leq \mathrm{Bias}\left[\hat{\mu}^{\rdd{\textnormal{TE}}}_*(\alpha)\right] \leq \mathrm{Bias}(\hat{\mu}^{\rdd{\textnormal{ME}}}_*).$$
    \rd{Further, if $\mathrm{Var}\left(\hat{\mu}_i\right) = V$ for $i = 1, \ldots, M$ and some $V > 0$, then $\mathrm{Bias}\left[\hat{\mu}^{\rdd{\textnormal{TE}}}_*(\alpha)\right]$ is a monotonically increasing function of $\alpha$.}
\end{Lemma}
\begin{proof}\label{proof:Bias_bound_TE}
The upper bound is straightforward since the TE is a weighted average of sample means, while the ME is the extreme case of weighting the maximum sample mean with one. Regarding the lower bound, we use that per construction:
\begin{equation}\label{eq:lower_bound_TE}
    \hat{\mu}^{\rdd{\textrm{TE}}}_*(\alpha) \geq \rdd{\min_i \hat{\mu}_i}.
\end{equation}
To see this, we first note that the numerator of the test statistics \rdd{$T_i$} in (\ref{eq:T-Estimator}) is always zero for the ME, \rd{leading to a value of one for the corresponding indicator functions for} all $\alpha \in (0, 0.5]$. However, since the test statistics for index $i$ positively correlates with $\hat{\mu}_i$ and $\hat{\sigma}_i^2$, extreme variance scenarios are possible in which \rdd{indices with much smaller means, including the one for the minimum mean, are non-rejected. Considering $M$ might be very large, $\hat{\mu}^{\rdd{\textrm{TE}}}_*(\alpha)$ can thus be arbitrarily close to $\min_i \hat{\mu}_i$, yielding (\ref{eq:lower_bound_TE}).} Building expectations, we have:
\begin{align*}
    \operatorname{E}\left[\hat{\mu}^{\rdd{\textrm{TE}}}_*(\alpha)\right] &\geq \rdd{\operatorname{E}\left(\min_i \hat{\mu}_i\right) \geq \min_i \mu_i - \sqrt{\frac{M-1}{M} \sum_{i=1}^{M} \mathrm{Var}\left(\hat{\mu}_i\right)}},
\end{align*}
where the last \rdd{inequality} uses the bound for the minimum sample average of \cite{aven1985upper}. The bias follows immediately. \rd{The second part regarding the monotonicity in the case of equal variances of the mean estimates follows from the definition in (\ref{eq:T-Estimator}) since $z_{\alpha} \leq z_{\alpha'}$ for $\alpha \leq \alpha'$ and $\alpha, \alpha' \in (0, 0.5]$.}
\end{proof}

\rd{Lemma \ref{lemma:Bias_bound_TE} shows that the TE can achieve negative and positive bias depending on the level of significance $\alpha$. However, we can not make a general statement about the sign of the bias for a particular $\alpha$ since this depends on the number and the distribution of the underlying random variables.}

Furthermore, the TE is consistent for the MEV since, with increasing sample size, each sample mean approaches its population mean, the ME approaches the true MEV, and the variances of the means tend to zero. Consequently, all tests except for the true MEV variable will reject the $H_0$, and only the MEV will be left. 

\rd{The TE involves conducting multiple hypothesis tests, which can potentially lead to an increase in type I error. In such cases, common approaches for correction include the Bonferroni correction \citep{armstrong2014use}, the method proposed by \cite{holm1979simple}, or, for large values of $M$, the procedure developed by \cite{benjamini1995controlling}. However, we have chosen not to incorporate these techniques into the TE for two reasons. Firstly, we consider the significance level $\alpha$ as a freely adjustable hyperparameter. For instance, if we were to employ the Bonferroni correction, which divides the significance level by the number of tests, we could simply choose a different $\alpha$ to achieve a similar effect. Secondly, in the upcoming section, we will introduce a generalization of the TE that generally circumvents the necessity of binary testing decisions.}

Regarding variance, the TE shares the common overly pessimistic variance bound, while the proof relies on the TE being a weighted average of means and is similar to \cite{d2016estimating}:
\begin{Lemma}
    For $\alpha \in (0, 0.5]$, it holds: $\mathrm{Var}\left[\hat{\mu}^{\rdd{\textnormal{TE}}}(\alpha)\right] \leq \sum_{i=1}^{M} \frac{\sigma_{i}^{2}}{|S_i|}$.
\end{Lemma}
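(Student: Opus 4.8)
The plan is to exploit the convex-combination structure of the TE, exactly as in the variance bound of the WE. First I would rewrite (\ref{eq:T-Estimator}) as a weighted average $\hat{\mu}^{TE}_*(\alpha) = \sum_{i=1}^{M} w_i\,\hat{\mu}_i$, where, abbreviating the test statistics by $T_i = (\hat{\mu}_i - \hat{\mu}^{ME}_*)/\sqrt{\hat{\sigma}_i^2/|S_i| + \hat{\sigma}_{i^*}^2/|S_{i^*}|}$, the weights are $w_i = \mathcal{I}(T_i \geq z_\alpha)\big/\sum_{j=1}^{M}\mathcal{I}(T_j \geq z_\alpha)$. I would then record the three properties that make this a genuine convex combination: $w_i \geq 0$, $w_i \leq 1$, and $\sum_{i=1}^{M} w_i = 1$. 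The last two rely on the denominator being at least one, which holds because $T_{i^*} = 0 \geq z_\alpha$ for every $\alpha \in (0, 0.5]$ (as $z_\alpha = \Phi^{-1}(\alpha) \leq 0$), so the maximizing index is never removed and at least one indicator is active.

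Second, I would invoke the independence of the sample means $\hat{\mu}_1,\ldots,\hat{\mu}_M$ (they are computed on the disjoint samples $S_1,\ldots,S_M$) together with $\operatorname{Var}(\hat{\mu}_i)=\sigma_i^2/|S_i|$, and reproduce the computation for a convex combination: with deterministic weights one has $\operatorname{Var}\!\big(\sum_i w_i \hat{\mu}_i\big) = \sum_i w_i^2 \operatorname{Var}(\hat{\mu}_i)$, and since $0 \leq w_i^2 \leq 1$ this is at most $\sum_{i=1}^{M} \sigma_i^2/|S_i|$, matching the claim. This is precisely the ``overly pessimistic'' bound referred to in the text, as it discards the factors $w_i^2 \leq 1$ in favor of $1$.

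The main obstacle is that the weights $w_i$ of the TE are data-dependent and therefore correlated with the very means $\hat{\mu}_i$ they multiply, so the identity $\operatorname{Var}(\sum_i w_i \hat{\mu}_i) = \sum_i w_i^2 \operatorname{Var}(\hat{\mu}_i)$ is not literally available: both the randomness of $w_i$ and the cross-covariances $\operatorname{Cov}(w_i \hat{\mu}_i, w_j \hat{\mu}_j)$ must be controlled, and I expect essentially all the work to lie here. Two routes seem viable. The lighter one follows \cite{d2016estimating} and treats the selection weights as fixed at their expected configuration, accepting the same level of rigor as in the WE analysis. The cleaner one is to bound the variance through a decomposition over the independent sample blocks $S_1,\ldots,S_M$ (an Efron--Stein or Doob-martingale argument), showing that perturbing the data of a single block $S_i$ changes $\hat{\mu}^{TE}_*$ by an amount whose squared expectation is at most $\sigma_i^2/|S_i|$, after which summing the $M$ block contributions yields $\sum_{i=1}^{M}\sigma_i^2/|S_i|$. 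The delicate point in this second route is that $\hat{\mu}^{TE}_*$ is not monotone in each $\hat{\mu}_i$, since changing one mean can alter which hypotheses are rejected; the per-block sensitivity must therefore be bounded using $\sum_i w_i = 1$ and the fact that the estimator always stays sandwiched between $\min_i \hat{\mu}_i$ and $\max_i \hat{\mu}_i$, rather than by a naive Lipschitz estimate.
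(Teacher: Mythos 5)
Your main line of attack is exactly the paper's: the paper gives no proof beyond the remark that the TE is a weighted average of sample means and that the bound follows ``similar to \cite{d2016estimating}'', which is precisely your route (a) --- verify the weights form a convex combination (denominator $\geq 1$ because $T_{i^*}=0 \geq z_\alpha$) and then run the deterministic-weight variance computation at the WE's level of rigor. Your explicit flagging of the data-dependent-weight gap and your sketched Efron--Stein alternative go beyond anything the paper does (and the claimed per-block sensitivity bound there would still need real work near the testing discontinuities), so your proposal matches, and is if anything more candid than, the paper's own treatment.
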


\subsection{K-Estimator}
\rd{By examining the structure of (\ref{eq:T-Estimator}), we can perceive the TE as a re-weighting scheme for sample means that is driven by statistical considerations. The pivotal aspect lies in the determination of weights through indicator functions. Notably, the derivatives of these indicator functions with respect to $\alpha$ are either zero or non-existent, resulting in either an exclusion or an inclusion of specific mean.} Avoiding this behaviour, we propose to apply the standard Gaussian cdf $\Phi$ directly to the test statistics and use the resulting values as a smoothed weighting:
\begin{equation}\label{eq:K-Estimator_Phi}
    \hat{\mu}^{\Phi}_* = \left[\sum_{i=1}^{M}\Phi\left(T_i\right)\right]^{-1} \sum_{i=1}^{M}\Phi\left(T_i\right)\hat{\mu}_i, \quad \text{where} \quad T_i = \frac{\hat{\mu}_i - \hat{\mu}^{\rdd{\textrm{ME}}}_{*}}{\sqrt{ \frac{\hat{\sigma}^2_i}{|S_i|} + \frac{\hat{\sigma}^2_{i^*}}{|S_{i^*}|} }}.
\end{equation}
In fact, it can be generalized even further by considering a  weighting kernel $\kappa(\cdot)$:
\begin{equation}\label{eq:K-Estimator}
    \hat{\mu}^{\rdd{\textrm{KE}}}_* = \left[\sum_{i=1}^{M}\kappa\left(T_i\right)\right]^{-1} \sum_{i=1}^{M}\kappa\left(T_i\right)\hat{\mu}_i.
\end{equation}
with \rd{$\kappa: (-\infty;0] \rightarrow [0; \infty)$}. We require that $\kappa(\cdot)$ is monotonically increasing to build a reasonable kernel since $T_i \leq 0$, $\forall i = 1, \ldots, M$, and that $\lim\limits_{T_i \rightarrow -\infty} \kappa(T_i) = 0$ for consistency. Similar kernel functions are considered in \cite{mammen1991estimating} for isotonic regressions. We refer to (\ref{eq:K-Estimator}) as the \emph{K-Estimator} (KE). Crucially, the hyperparameter of the KE is not a fixed scalar anymore, but the specification of $\kappa(\cdot)$. For example, \rd{the TE is a special case of the KE obtained by setting $\kappa(T_i;\alpha) = \mathcal{I}(T_i \geq z_{\alpha})$}. \rd{We emphasize that re-weighting schemes of sample means are widely known in the form of softmax operators in the RL literature \citep{asadi2017alternative, sutton2018reinforcement}. Although structurally similar, the KE uses the test statistics $T_i$ to determine the weight of mean $\hat{\mu}_i$. The weights in conventional softmax approaches are determined only by the mean itself, thereby neglecting the uncertainty of the mean estimates.}

Further options for $\kappa(\cdot)$ are listed in Table \ref{tbl:g_specifications}. Additionally, more flexible parametrized specifications are available. Consider, for example, the cdf of the beta distribution $\mathcal{B}_{\mathfrak{a}, \mathfrak{b}}$ with two shape parameters $\mathfrak{a}$, $\mathfrak{b}$. Although the latter is naturally defined on $[0,1]$, one could simply scale and shift it to, e.g., $[-5,0]$ to generate a more valid specification. This particular case is used in the example of Section \ref{sec:Analysis_Bias_Var_MSE}. Apart from that, we primarily apply the standard Gaussian cdf throughout the paper.

\renewcommand{\arraystretch}{2}
\begin{table}[h!]
\centering
\begin{tabular}{l|c} 
Kernel & $\kappa(T)$ for $T \leq 0$ \\  
 \hline
 cdf: Gaussian $\Phi_{\lambda}$ & $\int_{-\infty}^{T} \frac{1}{\sqrt{2\pi\lambda^2}} \exp{\left[-\frac{1}{2} \left(\frac{t}{\lambda}\right)^2\right]}dt$\\
 cdf: $t$-distribution $t_{\nu}$& $\int_{-\infty}^{T} \frac{\Gamma(\frac{\nu +1}{2})}{\sqrt{\nu \pi} \Gamma(\frac{\nu}{2})} \left(1 + \frac{t^2}{\nu}\right)^{-\frac{\nu+1}{2}}dt$\\
 Epanechnikov & $\frac{3}{4}(1-T^2) \mathcal{I}(|T|\leq 1)$ \\
 \rd{Softmax} & \rd{$\exp{(T)}$}\\
 Triangle & $(1-|T|) \mathcal{I}(|T|\leq 1)$
\end{tabular}
\caption{Exemplary kernel functions. \rd{The parameter} $\lambda$ is the standard deviation of the Gaussian cdf, \rd{$\nu$ is the degree} of freedom of the $t$-distribution, and $\Gamma(x) = \int_{0}^{\infty}t^{x-1}\exp(-t)dt$ denotes the gamma function. We abbreviate the standard Gaussian kernel $\Phi_{\lambda=1}$ with $\Phi$.}
\label{tbl:g_specifications}
\end{table}

Bias and variance of the KE depend on the chosen kernel specification, but the bounds of the TE are still valid as long as the kernel function fulfills the requirements stated above.

\begin{Corollary}\label{lemma:Bias_bound_KE}
    For the KE, it holds:
    $$\rdd{\min_i \mu_i - \max_i \mu_i - \sqrt{\frac{M-1}{M} \sum_{i=1}^{M} \mathrm{Var}\left(\hat{\mu}_i\right)}} \leq \mathrm{Bias}\left(\hat{\mu}^{\rdd{\textnormal{KE}}}_*\right) \leq \mathrm{Bias}(\hat{\mu}^{\rdd{\textnormal{ME}}}_*).$$
\end{Corollary}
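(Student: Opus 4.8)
The plan is to reuse the structure of the proof of Lemma~\ref{lemma:Bias_bound_TE}, exploiting that the KE, exactly like the TE, is a normalised weighted average of the sample means in which the maximal mean always carries the heaviest weight. Indeed, the TE is recovered from the KE by the choice $\kappa(T)=\mathcal{I}(T\geq z_\alpha)$, so the two estimators differ only in whether the weights are binary or continuous, and the same two inequalities should drive both directions of the bound.

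For the upper bound I would first note that the requirements on $\kappa$ make $\hat{\mu}^{KE}_*$ a genuine convex combination of $\hat{\mu}_1,\dots,\hat{\mu}_M$: since $\kappa$ maps into $\mathbb{R}^{+}=[0,\infty)$ all weights $\kappa(T_i)$ are non-negative, and because $T_{i^*}=0$ gives $\kappa(T_{i^*})=\kappa(0)>0$ — a monotone kernel with $\lim_{T\to-\infty}\kappa(T)=0$ that is not identically zero must satisfy $\kappa(0)>0$ — the normalising denominator $\sum_i\kappa(T_i)$ is strictly positive. Any convex combination of the sample means is bounded above by their maximum, so pointwise $\hat{\mu}^{KE}_*\leq\max_i\hat{\mu}_i=\hat{\mu}^{ME}_*$; taking expectations and subtracting $\mu_*$ yields $\operatorname{Bias}(\hat{\mu}^{KE}_*)\leq\operatorname{Bias}(\hat{\mu}^{ME}_*)$. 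This half requires no new ingredients beyond the defining properties of $\kappa$.

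For the lower bound I would establish the pointwise analogue of (\ref{eq:lower_bound_TE}), namely $\hat{\mu}^{KE}_*\geq\frac{1}{2}\left(\max_i\hat{\mu}_i+\min_i\hat{\mu}_i\right)$. The driving fact is monotonicity of $\kappa$ together with $T_{i^*}=0\geq T_i$ for every $i$, so that the maximal sample mean always receives the largest weight $\kappa(0)$ among all means, while the binding configuration is the extreme-variance scenario already identified in Lemma~\ref{lemma:Bias_bound_TE}, in which the mass effectively splits between the maximal and the minimal mean. With the pointwise bound in hand, I would take expectations, bound $\operatorname{E}(\max_i\hat{\mu}_i)$ from below by $\max_i\mu_i$ and $\operatorname{E}(\min_i\hat{\mu}_i)$ from below by $\min_i\mu_i-\sqrt{\frac{M-1}{M}\sum_i\operatorname{Var}(\hat{\mu}_i)}$ using the bounds of \cite{aven1985upper} underlying (\ref{eq:ME_bias_bounds}), and subtract $\mu_*=\max_i\mu_i$ to recover the stated lower bound.

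The main obstacle is the pointwise lower bound itself. Unlike the TE, whose indicator weights are binary, the KE assigns strictly positive weight to every mean through the continuous kernel, so one cannot simply discard the intermediate means and must argue that the dominant weight $\kappa(0)$ on the maximum keeps the estimate from dropping below the midpoint of the extreme pair. Making this step fully rigorous, rather than relying on the extremal heuristic, is where the care is needed, and I would carry it out in complete parallel with the corresponding step for the TE, since the Corollary deliberately asserts the identical bounds and inherits the identical reasoning.
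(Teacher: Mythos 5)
Your proposal matches the paper's proof essentially step for step: the upper bound from the KE being a nonnegatively weighted average of the sample means (hence pointwise at most the ME, with $\kappa(0)>0$ ensuring a well-defined, strictly positive denominator), and the lower bound from the pointwise inequality $\hat{\mu}^{KE}_* \geq \frac{1}{2}\left(\max_i \hat{\mu}_i + \min_i \hat{\mu}_i\right)$, justified via the weight $\kappa(0)$ on the maximal mean together with the extreme-variance scenario in which the minimal mean's weight also tends to $\kappa(0)$, after which taking expectations, applying the bounds of \cite{aven1985upper}, and subtracting $\mu_*$ proceed exactly as in Lemma~\ref{lemma:Bias_bound_TE}. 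The one step you flag as requiring extra care --- making the pointwise bound fully rigorous when every mean carries strictly positive weight --- is treated no more rigorously in the paper itself, which likewise asserts the inequality from the extremal max--min weight configuration and then defers the remaining steps to the proof of Lemma~\ref{lemma:Bias_bound_TE}.
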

\begin{proof}\label{proof:Bias_bound_KE}
The KE cannot exceed the ME, thus the upper bound holds. For the lower bound, we note the same relationship as for the TE:
\begin{equation*}
    \hat{\mu}^{\rdd{\textrm{KE}}}_* \geq \rdd{\min_i \hat{\mu}_i}.
\end{equation*}
\rdd{The} sample mean corresponding to the ME is per construction weighted with $\kappa(0)$ (before normalization). Simultaneously, for extreme variance scenarios, it is possible that the weight \rdd{of smaller means tends to $\kappa(0)$ as well, and, for sufficiently large $M$, the $\hat{\mu}^{\rdd{\textrm{KE}}}_*$ thus might be arbitrarily close to $\min_i \hat{\mu}_i$.} The remaining steps are identical to the proof of Lemma \ref{lemma:Bias_bound_TE}.
\end{proof}

\begin{Corollary}
    For the KE, it holds: $\mathrm{Var}\left(\hat{\mu}^{\rdd{\textnormal{KE}}}\right) \leq \sum_{i=1}^{M} \frac{\sigma_{i}^{2}}{|S_i|}$.
\end{Corollary}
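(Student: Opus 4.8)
The plan is to reduce the statement to the variance bound already established for the TE, since both estimators share the only structural feature that bound exploits: being a genuine weighted average of the sample means. First I would verify that the KE weights $w_i = \kappa(T_i)/\sum_{j=1}^M \kappa(T_j)$ form a valid probability vector. Because $i^*$ attains $\hat{\mu}^{ME}_*$, we have $T_{i^*}=0$, and the kernel conditions ($\kappa:\mathbb{R}^-\to\mathbb{R}^+$, monotonically increasing, $\lim_{T\to-\infty}\kappa(T)=0$) give $\kappa(0)>0$, so the normalizer is strictly positive and well defined. Monotonicity together with $T_j\le 0$ then yields $0\le\kappa(T_j)\le\kappa(0)$, whence $w_i\in[0,1]$ and $\sum_{i=1}^M w_i=1$. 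This is exactly the representation $\hat{\mu}^{KE}_*=\sum_{i=1}^M w_i\hat{\mu}_i$ already used in the proof of Corollary \ref{lemma:Bias_bound_KE}.

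Given this, the core step is the same weighted-average variance estimate as for the TE (and as in \citealt{d2016estimating}): using that the $\hat{\mu}_i$ are independent across $i$ with $\operatorname{Var}(\hat{\mu}_i)=\sigma_i^2/|S_i|$, and that the weights lie in $[0,1]$, one bounds $\operatorname{Var}\left(\sum_{i=1}^M w_i\hat{\mu}_i\right)$ by $\sum_{i=1}^M \sigma_i^2/|S_i|$. In the idealized situation of deterministic weights this is immediate, since by independence $\operatorname{Var}\left(\sum_i w_i\hat{\mu}_i\right)=\sum_i w_i^2\operatorname{Var}(\hat{\mu}_i)\le\sum_i\operatorname{Var}(\hat{\mu}_i)$, the last inequality following from $w_i^2\le w_i\le 1$. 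The resulting bound is thus the same overly pessimistic one shared by the ME, the DE/CVE, and the WE.

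The main obstacle is that the KE weights are not constants: they depend on the data through the statistics $T_i$, hence are correlated with the very means $\hat{\mu}_i$ they multiply, so the cross-term cancellation of the deterministic case is not automatic. I would treat this at the same level of rigour as the TE lemma and \cite{d2016estimating}, noting as a sanity check that the most extreme value-following weighting -- placing all mass on the maximizing index, i.e. the ME itself -- already obeys the bound (Section \ref{subsec:ME}). A fully rigorous argument for such value-correlated weights could be pursued via an Efron--Stein variance inequality applied to the independent samples $S_1,\dots,S_M$, bounding the contribution of resampling each coordinate; verifying the requisite per-coordinate stability of $\hat{\mu}^{KE}_*$ is the genuinely delicate part, but it is not required to match the derivation used for the TE, to which all remaining steps are identical.
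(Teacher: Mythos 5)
Your proposal follows the paper's own route exactly: the paper proves this corollary only by remarking that the argument is "again similar to \cite{d2016estimating}," i.e.\ that the KE, like the TE, is a weighted average of sample means with nonnegative weights summing to one, which is precisely the content of your first two paragraphs. Your third paragraph, which flags that the data-dependent weights $\kappa(T_i)/\sum_j \kappa(T_j)$ are correlated with the $\hat{\mu}_i$ so the deterministic cross-term cancellation is not automatic, and which sketches an Efron--Stein route to close this, is in fact more careful than the paper itself, which passes over that subtlety in silence.
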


The proof is again similar to \cite{d2016estimating}. \rd{Moreover, \rdd{in \ref{appendix:Exp_TEKE}}, we derive a general expression for the expectation of the KE for arbitrary $M$ in the case of known variances.}

Finally, we highlight three main differences between the KE (including the TE as a special case) and the WE of Section \ref{subsec:WE} since both are constructed as a weighted sum of sample means. First, the weights of the WE are probabilities, while the weights of the KE do not have a probabilistic interpretation. Second, while the KE allows for a multitude of specifications, the WE is not tunable and thus cannot be adjusted to a given scenario in a practical problem. Third, the computation of the WE requires integration or Monte Carlo approximation schemes, while the KE's computation is extremely fast.

\section{\rd{On the Role of Dependencies}}\label{sec:deps}
\rd{The previous exposition and the analytical properties of the estimators were predicated on the assumption of acquiring independent samples from independent random variables. However, in the realm of reinforcement learning, where we intend to apply these estimators, both of these independence assumptions often do not hold true. Specifically, the recursive nature of the algorithms and the bootstrapping of target values, which will be detailed in Section \ref{sec:TE_KE_based_RL}, can lead to the emergence of \emph{in-sample dependencies}. For instance, the observations $S_1$ of a random variable $X_1$ might display auto-correlations. Moreover, given that different states can share successor states, \emph{cross-sample dependencies} can manifest, implying correlations between the samples $S_1$ and $S_2$ of random variables $X_1$ and $X_2$, respectively.}

\rd{Deriving general analytical bounds for the bias of the estimators under the presence of these two distinct types of dependencies is a challenging task. Therefore, we will undertake an empirical investigation of bias and variance of the considered estimators by introducing in-sample and cross-sample dependencies into two Gaussian random variables. At first, however, we will analyze the example if all independence assumptions are fulfilled.}

\subsection{\rd{Independence: A Gaussian Example}}\label{sec:Analysis_Bias_Var_MSE}
\rd{We consider a similar setup to \cite{d2016estimating} with $M = 2$ Gaussian random variables $X_1 \sim \mathcal{N}(\mu_1, \sigma^2)$, $X_2 \sim \mathcal{N}(\mu_2, \sigma^2)$, where $\sigma^2 = 100$ is the common known variance, and we have $|S_1| = |S_2| = 100$ observations of each variable. To stress again: We assume to have independent samples from independent random variables.} We fix $\mu_2 = 0$ and compute bias, variance, and MSE for different $\mu_1 \in [0, 5]$. For completeness, we report the analytic forms for the expectation and variance of the estimators in this case in \ref{appendix:analytic_forms}. For the TE, we select significance levels $\alpha \in \{0.05, 0.10, 0.15\}$ and for the KE, we analyze the standard Gaussian kernel $\Phi$, the Epanechnikov kernel, and the shifted and scaled $\mathcal{B}_{\mathfrak{a}, \mathfrak{b}}$ cdf kernel with $\mathfrak{a} = 2$, $\mathfrak{b} = 0.5$ as described above. Results are displayed in Figures \ref{fig:MSE_TE} and \ref{fig:MSE_KE}.

\begin{figure}[h]
\centering
\includegraphics[width=\linewidth]{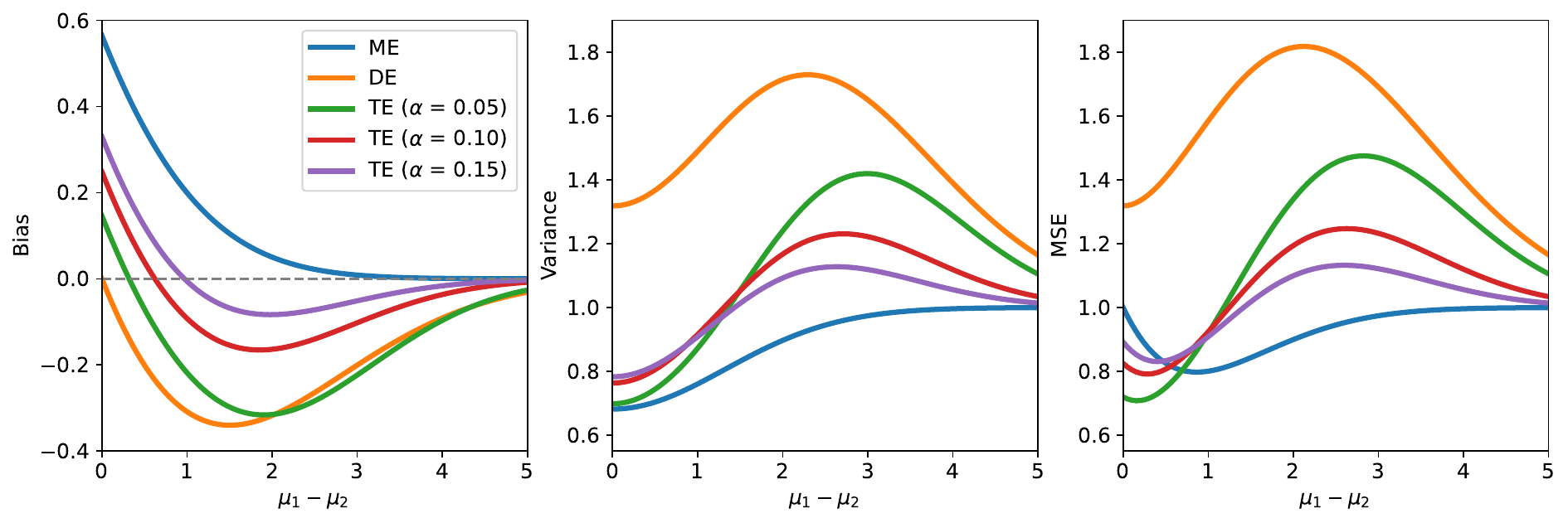}
\caption{Comparison of the ME, DE, and TE with level of significance in parentheses.}
\label{fig:MSE_TE}
\end{figure}
\begin{figure}[h]
\centering
\includegraphics[width=\linewidth]{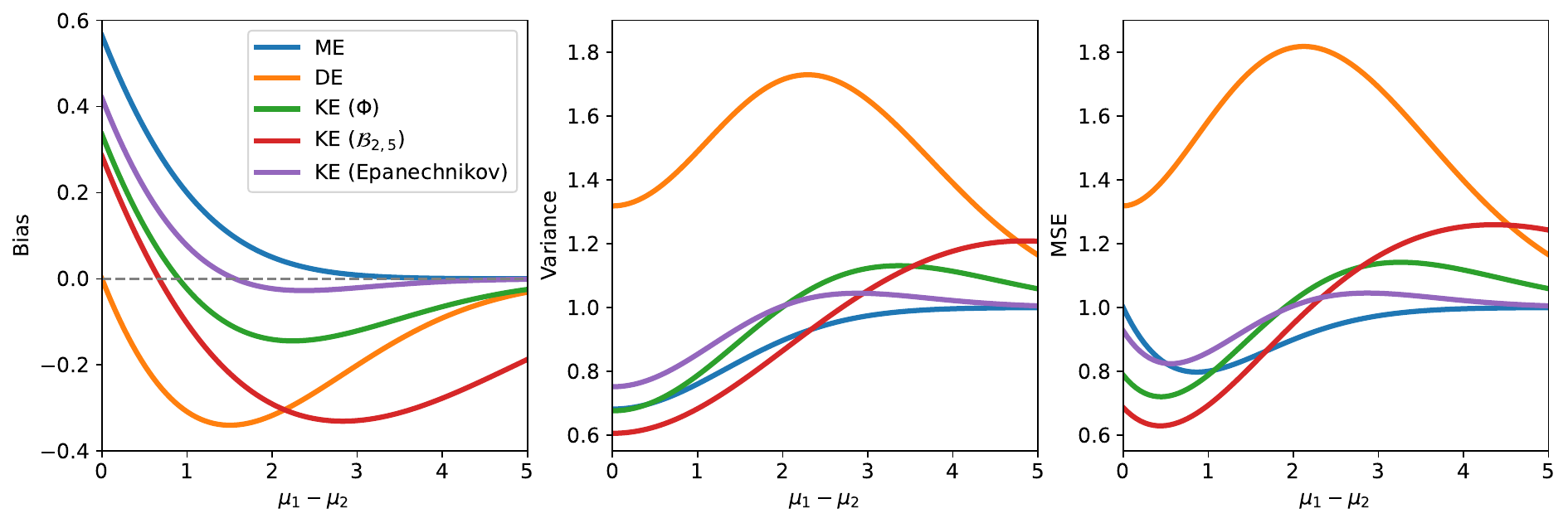}
\caption{Comparison of the ME, DE, and KE with kernel in parentheses.}
\label{fig:MSE_KE}
\end{figure}

Regarding the TE, we see how the bias decreases with a smaller significance level. In general, the estimator operates between the biases of ME and DE, although the bias of the DE is not necessarily a lower bound (see $\alpha = 0.05$). In the mean equality case, $\mu_1 = \mu_2$, the TE can avoid the significant overestimation of the ME while having only slightly increased variance. Consequently, the TE outperforms the conventional competitors for all considered significance levels under the MSE criterion for $\mu_1 = \mu_2$. However, if the difference of the true expectations of the random variables is large, all estimators become unbiased. In this scenario, the ME is the best choice due to its low variance. Regarding the KE in Figure \ref{fig:MSE_KE}, we see that the standard Gaussian and Epanechnikov kernels can achieve a desirable balance between under- and overestimation while maintaining a smaller variance than the considered TE. \rd{The chosen specification of the beta distribution appears sub-optimal for this particular problem. In \ref{appendix:opt_Gaussian}, we optimize for a better fitting parametrization to further illustrate the flexibility of the KE.}

Overall, we have seen through this investigation that both the TE and the KE can achieve flexible trade-offs between bias and variance in the estimation of the MEV. Considering typical levels of significance like $0.05$, $0.10$, or $0.15$ in the TE builds a robust estimator, further enhanced by accurately specifying a suited KE.

\subsection{\rd{In-sample dependencies}}\label{subsec:non_iid_MEV}
\rd{In this section, we introduce in-sample dependencies into our two Gaussian random variables. Specifically, we will examine the following auto-correlated process, motivated from the literature on time series econometrics \citep{tsay2005analysis}:
\vspace{-0.3cm}
\begin{align}\label{eq:AR_process}
    X_{i,t} &= (1-\rho)\mu_i + \rho X_{i, t-1} + \varepsilon_{i,t},\\
    \varepsilon_{i,t} &\sim \mathcal{N}\left[0, (1-\rho^2) \sigma^2\right],\nonumber
\end{align}
where $X_{i,0} = \mu_i + \varepsilon_{i,0}$. We set $\varepsilon_{i,0} \sim \mathcal{N}\left(0, \sigma^2\right)$, for $i=1,2$, and consider time steps $t = 1, \ldots, T$. Throughout the experiments, we set $T = 100$. This formulation ensures that the unconditional first two moments remain unchanged: $\operatorname{E}[X_{i,t}] = \mu_i$ and $\operatorname{Var}[X_{i,t}] = \sigma^2$. Similar to Section \ref{sec:Analysis_Bias_Var_MSE}, there are still no cross-sample dependencies between the two processes. In particular, we have $\operatorname{Cov}(X_{1,t}, X_{2,t}) = 0$ for all $t$. However, there is now auto-correlation within each process, $\operatorname{Cov}(X_{i,t-1}, X_{i,t}) \neq 0$ for $i=1,2$, which can be modulated by the parameter $\rho \in [0,1)$. Figure \ref{fig:Sampled_AR_processes} illustrates exemplary realizations of (\ref{eq:AR_process}) for $\rho \in \{0.0, 0.25, 0.5, 0.75\}$.}

\begin{figure}[h]
    \centering
    \includegraphics[width=\linewidth]{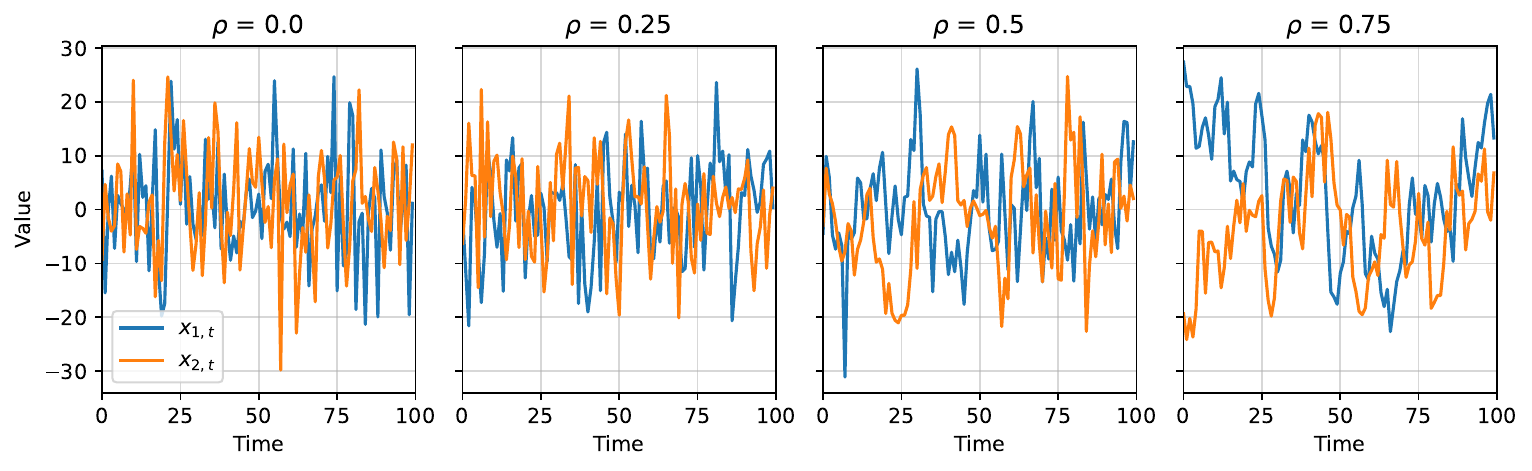}
    \caption{\rd{Realizations of the processes in (\ref{eq:AR_process}) for $\mu_1 = \mu_2 = 0$, $\sigma^2 = 100$, and varying $\rho$. While $\rho = 0.0$ represents independent random noise, the in-sample dependencies are visible for larger $\rho$ through the prolonged increasing or decreasing patterns of the time series.}}
    \label{fig:Sampled_AR_processes}
\end{figure}

\begin{figure}[h]
    \centering\includegraphics[width=\textwidth]{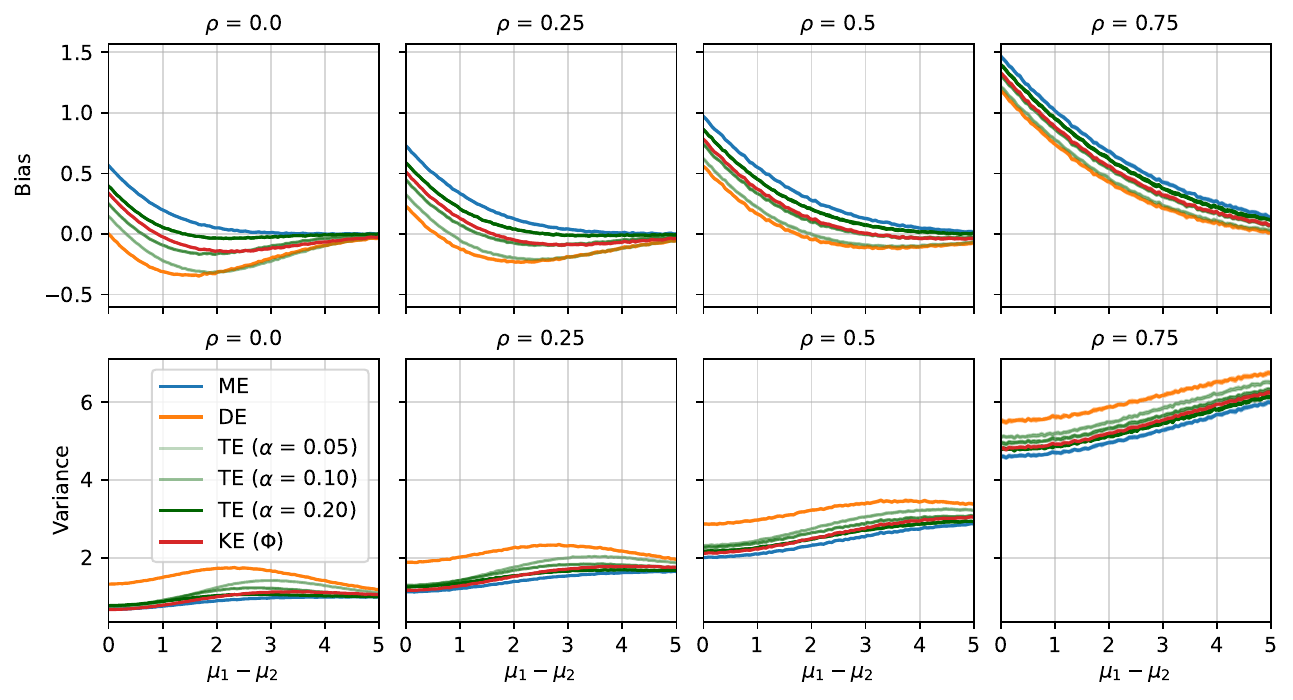}
    \caption{\rd{Bias and variance of the ME, DE, TE, and KE when in-sample dependencies are present. The latter are introduced via autocorrelations, which are modulated by the parameter $\rho \in [0,1)$. The larger $\rho$, the larger the in-sample dependence. Three different levels of significance ($\alpha$) are displayed in green for the TE, where the green gets darker with increasing $\alpha$. Point-wise 95\% confidence intervals are included based on repeating the experiment in 30 independent runs. However, the intervals are very narrow.}}
    \label{fig:stuy_autocorrelated}
\end{figure}

\rd{We compute the ME, DE, TE ($\alpha \in \{0.05, 0.10, 0.20\}$), and the KE (standard Gaussian kernel) after simulating from (\ref{eq:AR_process}) for different $\mu_1 \in [0,5]$, while $\mu_2 = 0$ and $\sigma^2 = 100$ are fixed. The \rdd{simulation and estimation} procedure is repeated 10\, 000 times for each $\mu_1$, and the results are used to estimate the bias and the variance of the respective estimator of the MEV. Figure \ref{fig:stuy_autocorrelated} displays the results.}

\rd{It is evident that the scenario with $\rho = 0.0$ aligns with the analytical outcomes depicted in Figures \ref{fig:MSE_TE} and \ref{fig:MSE_KE}, thus providing a validation of the derivations in \ref{appendix:analytic_forms} through simulation. However, all estimators exhibit increasing bias and variance when the in-sample dependencies increase. This observation can be explained by the fact that introducing autocorrelation has a similar effect as reducing the sample size, which can be seen as follows: If $\rho = 0$, the variance of the mean estimate of the process ($\ref{eq:AR_process}$) is $\operatorname{Var}\left(\frac{1}{T} \sum_{t=1}^{T}X_{i,t}\right) = \frac{1}{T^2} \sum_{t=1}^{T}\operatorname{Var}\left(X_{i,t}\right) = \frac{\sigma^2}{T}$, where the first step \rdd{holds} since the $X_{i,t}$ are independent. Now assume $\rho > 0$. This leads to: ${\operatorname{Var}\left(\frac{1}{T} \sum_{t=1}^{T}X_{i,t}\right) = \frac{1}{T^2} \left[\sum_{t=1}^{T}\operatorname{Var}\left(X_{i,t}\right) + \sum_{t=1}^{T} \sum_{\substack{\Tilde{t}=1\\ \Tilde{t} \neq t}}^{T} \operatorname{Cov}\left(X_{i,t}, X_{i, \Tilde{t}}\right) \right] > \frac{\sigma^2}{T}}$. Due to the positive covariance terms, the variance of the mean estimates is increased. Consequently, the estimates of the MEV are less accurate and have an increased variance. Intuitively, each new realization of the process (\ref{eq:AR_process}) contains less new information \rdd{than} the $\rho = 0$ case due to the dependence on the past.}

\rd{However, we observe that the relative ranking of the estimators in terms of both bias and variance remains unchanged. In general, the ME consistently exhibits the highest bias, while the bias of the TE decreases monotonically with smaller $\alpha$.}

\subsection{\rd{Cross-sample dependencies}}
\rd{Regarding cross-sample dependencies, we model the two random variables with a bivariate Gaussian distribution, instead of considering two independent univariate Gaussians. In particular, we set:
\begin{equation}\label{eq:biv_Gaussian}
    \begin{pmatrix}
        X_1 \\ X_2
    \end{pmatrix} \sim \mathcal{N}\left\{
    \begin{pmatrix} \mu_1 \\ \mu_2 \end{pmatrix}, \begin{pmatrix}
        \sigma^2 & \rho_G \sigma^2\\
        \rho_G \sigma^2 & \sigma^2
    \end{pmatrix}
    \right\},
\end{equation}
where $\rho_G \in [0,1]$ is the correlation parameter of the distribution. The unconditional first two moments are identical to Sections \ref{sec:Analysis_Bias_Var_MSE} and \ref{subsec:non_iid_MEV}, no in-sample dependencies are contained, and the simulation conditions are set to mimic the situation of the previous case. However, the cross-sample dependence can now be increased by increasing $\rho_G$. Figure \ref{fig:copula_sample} shows different realizations of (\ref{eq:biv_Gaussian}), while Figure \ref{fig:study_copula} displays the results.}

\begin{figure}[h]
    \centering
    \includegraphics[width=\linewidth]{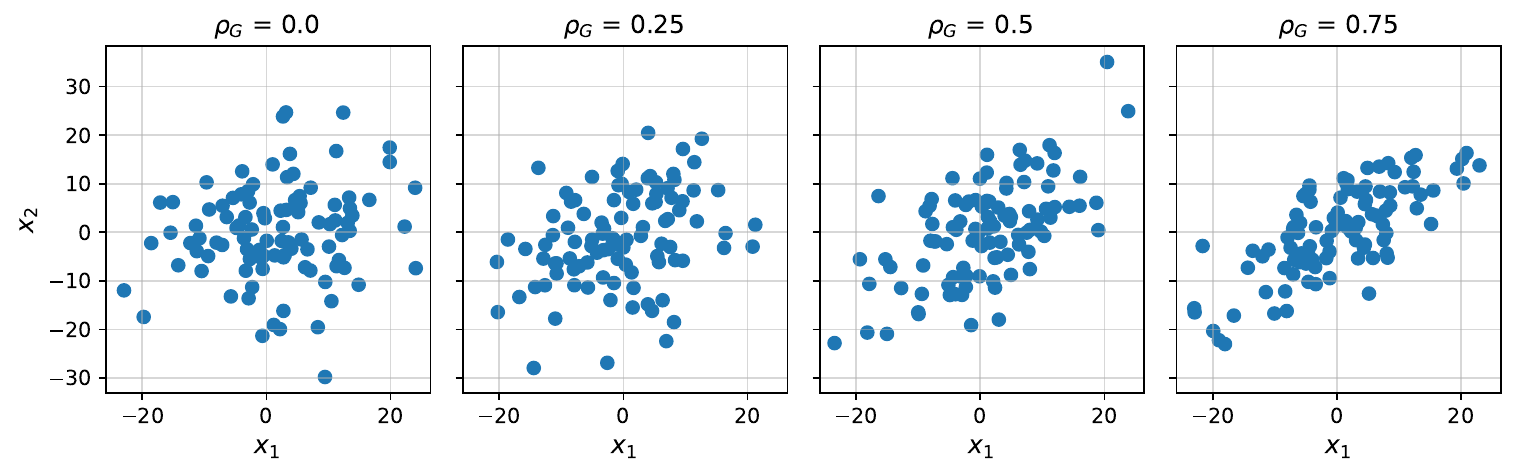}
     \caption{\rd{Realizations of the bivariate Gaussian distribution defined in (\ref{eq:biv_Gaussian}) for ${\mu_1 = \mu_2 = 0}$, $\sigma^2 = 100$, and varying $\rho$. Setting $\rho_G = 0$ means that the two samples are independent, while the cross-sample correlation gets stronger with larger $\rho_G$.}}
    \label{fig:copula_sample}
\end{figure}

\begin{figure}[h]
    \centering
    \includegraphics[width=\textwidth]{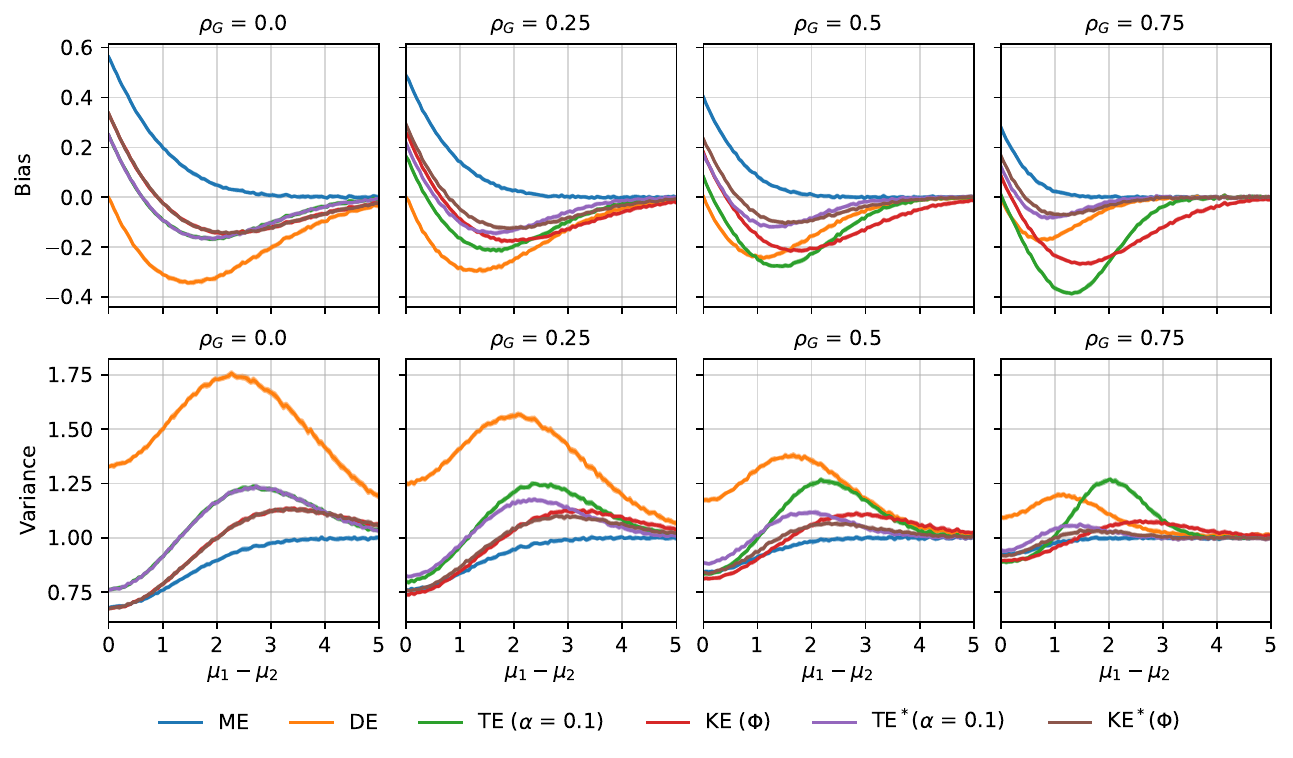}
    \caption{\rd{Bias and variance of the ME, DE, TE, and KE when cross-sample dependencies in the form of the correlation parameter $\rho_G \in [0,1]$ of a bivariate Gaussian are introduced. The larger $\rho_G$, the larger the cross-sample dependence. We include versions of the TE and KE which contain a covariance correction during the computation of the test statistics. These are denoted TE$^*$ and KE$^*$, respectively. \rdd{For the case $\rho_G = 0.0$, the curves of the TE$^*$ and KE$^*$ strongly overlap with those of the TE and KE, respectively.} Point-wise 95\% confidence intervals are included based on repeating the experiment in 30 independent runs. However, the intervals are very narrow.}}
    \label{fig:study_copula}
\end{figure}

\rd{Analyzing the ME and DE, we see that the increased correlation between the samples is generally beneficial in reducing the absolute value of the bias and the variance of the estimators. However, a different behavior is observed in the TE and KE, which are more sensitive to cross-sample dependencies. In particular, a positive correlation leads to an underestimation of the MEV and an increased variance, although the effects of cross-sample dependencies on bias and variance are generally not nearly as pronounced as the ones of in-sample dependencies.}

\rd{This behavior can be elucidated by examining the construction of the estimators. The underlying test statistics, as defined in (\ref{eq:TST}), assumes independence between the random variables, implying $\operatorname{Var}(\hat{\mu}_1 - \hat{\mu}_2) = \operatorname{Var}(\hat{\mu}_1) + \operatorname{Var}(\hat{\mu}_2)$. However, this assumption no longer holds true in scenarios with $\rho_G > 0$. More precisely, in these scenarios, (\ref{eq:TST}) overestimates $\operatorname{Var}(\hat{\mu}_1 - \hat{\mu}_2)$, leading to the consideration of smaller means and, consequently, the observed underestimation of the TE and KE.
To account for this circumstance, one could implement a covariance correction and replace the test statistics of TE and KE as given in (\ref{eq:TST}) with:
\begin{equation}\label{eq:TST_corrected}
    T^* = \frac{\hat{\mu}_1 - \hat{\mu}_{2}}{\sqrt{ \frac{\hat{\sigma}^2_1}{|S_1|} + \frac{\hat{\sigma}^2_{2}}{|S_{2}|} - 2\cdot \widehat{\operatorname{Cov}}(S_1, S_2)}},
\end{equation}
using the notation of Section \ref{sec:Estimating_max_mu} and with $\widehat{\operatorname{Cov}}(S_1, S_2)$ being the covariance estimate from samples $S_1$ and $S_2$. We include the resulting covariance-corrected TE and KE in Figure \ref{fig:study_copula}, denoted as TE$^*$ and KE$^*$. Due to the correction, the dependence sensitivity is alleviated, and the expected desirable behavior materializes.}

\subsection{\rd{Discussion}}
\rd{Both in-sample and cross-sample dependencies can impact the properties of the estimators of the MEV, although the effects of in-sample dependencies on bias and variance appear to be much stronger. Importantly, we observed that the \emph{relative ordering} of the estimators to each other remains the same for these crucial in-sample dependencies, implying that the choice of the estimator of the MEV is still critical in the presence of temporal dependencies. Similar to the works of \cite{hasselt2010double}, \cite{d2016estimating}, and \cite{zhu2021self}, we will transfer the estimators of the MEV to the domain of reinforcement learning in the next section.}

\section{Application to Reinforcement Learning}\label{sec:TE_KE_based_RL}
\subsection{Tabular Version}
Reinforcement learning describes a collection of learning techniques for sequential decision processes, in which an agent aims to maximize its reward while interacting with an environment\rdd{;} see \cite{sutton2018reinforcement} and \cite{bertsekas2019reinforcement}. The problem is modeled as a Markov Decision Process (MDP, \citealt{puterman1994markov}), consisting of a state space $\mathcal{S}$, a finite action space $\mathcal{A}$, a state transition probability distribution $\mathcal{P}: \mathcal{S} \times \mathcal{A} \times \mathcal{S} \rightarrow [0,1]$, a bounded reward \rdd{distribution} $\mathcal{R}: \mathcal{S} \times \mathcal{A} \rightarrow \rdd{\mathcal{P}_{\mathbb{R}}}$\rdd{, where $\mathcal{P}_{\mathbb{R}}$ is the set of probability distributions over $\mathbb{R}$,} and a discount factor $\gamma \in [0,1]$. If $\gamma = 1$, we assume there is a zero-reward absorbing state and that the probability of reaching this state converges to one as time tends to infinity\rdd{;} see \cite{lan2020maxmin}. At each time step $t$, the agent takes an action $a_t \in \mathcal{A}$ based on state information $s_t \in \mathcal{S}$, receives a reward \rd{$r_t \sim \mathcal{R}(s_t, a_t)$}, and transitions with probability $\mathcal{P}(s_{t+1} \mid s_t, a_t)$ to a new state $s_{t+1} \in \mathcal{S}$. Objective is to optimize for a policy $\pi: \mathcal{S} \times \mathcal{A} \rightarrow [0,1]$, a mapping from states to distributions over actions, that maximizes the expected return $\operatorname{E}_{\pi}\left[ \sum_{t=0}^{\infty} \gamma^t r_t\right]$. \rdd{Value-based methods, which are very common RL approaches,} define action-values $Q^{\pi}(s,a) = \operatorname{E}_{\pi}\left[ \sum_{t=0}^{\infty} \gamma^t r_t \mid s_0 = s, a_0 = a \right]$ for a certain policy. Thus, $Q^{\pi}(s,a)$ is the expected return when starting in state $s$, executing $a$, and following policy $\pi$ afterwards. There exists an optimal deterministic stationary policy $\pi^*(s) = \argmax_{a \in \mathcal{A}} Q^*(s,a)$, that is connected with optimal action-values $Q^*(s,a) = \max_{\pi} Q^{\pi}(s,a)$ for all $s \in \mathcal{S}$ and $a \in \mathcal{A}$ if the state space is finite or countably infinite \citep[Theorem 6.2.10]{puterman1994markov}. To optimize for $Q^*(s,a)$, one uses a recursive relationship known as \cite{bellman1954theory} optimality equation:
\begin{equation}\label{eq:Bellman_opt_eq}
    Q^*(s,a) = \mathcal{R}(s,a) + \gamma \sum_{s' \in \mathcal{S}} \mathcal{P}(s' \mid s, a) \max_{a' \in \mathcal{A}} Q^*(s',a'),
\end{equation}
where $s'$ is a successor state after performing action $a$ in state $s$. Since $Q^*(s',a')$ is the expected return from executing $a'$ in $s'$ and following the optimal policy afterward, the problem immediately appears as an instance of the estimation of the MEV of a set of random variables, namely the stochastic returns. Consequently, the methodology of Section \ref{sec:Estimating_max_mu} applies. $Q$-Learning, from \cite{watkins1992q}, translates (\ref{eq:Bellman_opt_eq}) into a sample-based algorithm by using the ME:
\begin{equation*}
\rd{
    \hat{Q}^{*}_{t+1}(s_t, a_t) \leftarrow \hat{Q}^{*}_t(s_t, a_t) + \tau_t(s_t, a_t) \left[y_{t}^{Q} - \hat{Q}^{*}_t(s_{t}, a_t) \right],
}
\end{equation*}
\rd{with target $y_{t}^Q = r_t + \gamma \max_{a \in \mathcal{A}} \hat{Q}^{*}_t(s_{t+1}, a)$ and learning rate $\tau_t(s_t, a_t)$. An estimate of $Q^{*}(s_t,a_t)$ at time $t$ is denoted $\hat{Q}^{*}_t(s_t,a_t)$.} The algorithm is known to converge to the optimal action-values if the conditions of \cite{robbins1951stochastic} on the learning rate are fulfilled, and each state-action pair is visited infinitely often \citep{tsitsiklis1994asynchronous}. However, especially in early stages of the training when the $Q$-estimates are imprecise, the algorithm tends to transmit overestimated values. Overcoming this issue, \cite{hasselt2010double} uses the DE and stores two separate $Q$-tables with estimates $\hat{Q}^{*}_{A,t}$ and $\hat{Q}^{*}_{B,t}$, leading to the target:  $y^{DQ}_t = r_t + \gamma \hat{Q}^{*}_{B,t}\left[s_{t+1}, \argmax_{a \in \mathcal{A}} \hat{Q}^{*}_{A,t}(s_{t+1}, a)\right]$. To apply the TE and KE in a $Q$-Learning setup, we propose to replace the target with $y^{KQ}_t = r_t + \gamma \operatorname{KE_a} \hat{Q}^{*}_{t}(s_{t+1}, a),$ where:
\begin{align}
    \operatorname{KE_a} \hat{Q}^{*}_{t}(s_{t+1}, a) &= \left\{\sum_{a\in \mathcal{A}} \kappa\left[T_{\hat{Q}^{*}_t}(s_{t+1},a)\right]\right\}^{-1} \sum_{a\in \mathcal{A}} \kappa\left[T_{\hat{Q}^{*}_t}(s_{t+1},a)\right] \hat{Q}^{*}_t(s_{t+1},a),\nonumber\\
    T_{\hat{Q}^{*}_t}(s_{t+1},a) &= \frac{\hat{Q}^{*}_t(s_{t+1},a) - \max_{a' \in \mathcal{A}}\hat{Q}^{*}_t(s_{t+1},a')}{\sqrt{\widehat{\operatorname{Var}}_t\left[\hat{Q}^{*}_t(s_{t+1},a)\right] + \widehat{\operatorname{Var}}_t\left[\hat{Q}^{*}_t(s_{t+1}, a^*) \right]}},\label{eq:KQ_target}
\end{align}
for a maximizing action $a^* \in \{a \in \mathcal{A} \mid \hat{Q}^{*}_t(s_{t+1}, a) = \max_{a' \in \mathcal{A}}\hat{Q}^{*}_t(s_{t+1},a')\}$. The variance estimate of $\hat{Q}^{*}_t(s_{t+1}, a)$ for some action $a \in \mathcal{A}$ at time $t$ is denoted $\widehat{\operatorname{Var}}_t\left[\hat{Q}^{*}_t(s_{t+1}, a)\right]$ and will be generated following the proposal of \cite{d2019exploiting}. First, the process variance of the underlying return of the visited state-action pair $(s_t, a_t)$ is estimated via an exponentially-weighted online update:
\begin{equation}\label{eq:proc_sigmaSQ}
\resizebox{1.0 \textwidth}{!} 
{
    $
    \widehat{\sigma^{2}}_{\text{pro}, t+1}(s_t, a_t) \leftarrow  \left[1-\tau_t(s_t,a_t)\right] \left\{\widehat{\sigma^{2}}_{\text{pro}, t}(s_t, a_t) + \tau_t(s_t,a_t)\left[y^{KQ}_t - \hat{Q}^{*}_t(s_t, a_t) \right]^2 \right\}.
    $
}
\end{equation}
Second, to get the variance estimate in (\ref{eq:KQ_target}) for some $a \in \mathcal{A}$, a normalization by \cite{kish1965survey} effective sample size $n_{\text{eff},t}(s_{t+1},a)$ is performed:
\begin{equation*}
    \widehat{\operatorname{Var}}_t\left[\hat{Q}^{*}_t(s_{t+1},a)\right] = \frac{ \widehat{\sigma^{2}}_{\text{pro}, t}(s_{t+1}, a)}{n_{\text{eff},t}(s_{t+1},a)}.
\end{equation*}
The effective sample size weights each sample depending on the learning rate and is computed via $n_{\text{eff},t}(s_{t+1},a) = \frac{\left[\omega_t(s_{t+1},a)\right]^2}{\omega_{t}^2(s_{t+1},a)}$, in which numerator and denominator are incrementally updated for each visited state-action pair $(s_t, a_t)$ :
\begin{align}
\omega_{t+1}(s_{t},a_t) &\leftarrow [1-\tau_t(s_t,a_t)]\omega_t(s_{t},a_t) + \tau_t(s_t,a_t),\nonumber \\
\omega_{t+1}^2(s_{t},a_t) &\leftarrow [1-\tau_t(s_t,a_t)]^2\omega_{t}^2(s_{t},a_t) + [\tau_{t}(s_t,a_t)]^2.\label{eq:proc_2_main}
\end{align}
With this approach, we can introduce TE-$Q$-Learning (TE-$Q$) and KE-$Q$-Learning (KE-$Q$), respectively, being summarized in Algorithm \ref{algo:TE_KE_based_Q_Learning}. The following \rdd{theorem} states the convergence of the algorithm to the optimal action-values with probability 1.
\begin{Theorem}\label{pro:KE_Q_convergence}
    \rdd{Let the following regularity conditions be fulfilled:}
    \begin{enumerate}
    \item \rdd{The MDP is finite.}
    \item \rdd{$\gamma \in [0,1).$}
    \item \rdd{The learning rates satisfy $\tau_t(s,a) \in [0,1]$, $\sum_t \tau_t(s,a) = \infty$, $\sum_t \tau_t^{2}(s,a) < \infty$ all with probability 1 for all $s \in \mathcal{S}, a \in \mathcal{A}$.}
    \item \rdd{The reward function is bounded.}
    \item \rdd{Each state-action pair is visited infinitely often.}
    \end{enumerate}
    \rdd{Then the following holds} for the random sequence of action-value estimates $\hat{Q}^{*}_{t}$ generated by TE/KE-$Q$-Learning:
    $$\mathbb{P}\left[\lim_{t\rightarrow \infty}\hat{Q}^{*}_{t}(s,a)=Q^{*}(s,a)\right]=1 \quad \forall s \in \mathcal{S}, a \in \mathcal{A}.$$
\end{Theorem}
\begin{proof}
    Please refer to \ref{appendix:convergence_proof}.
\end{proof}
\rd{We emphasize that the proof scheme can be similarly applied to Weighted $Q$-Learning \citep{d2016estimating}, for which a proof of convergence has been missing so far.}

\rd{As a note, we mention the possibility of incorporating a covariance correction, as shown in (\ref{eq:TST_corrected}), within the definition of the target in (\ref{eq:KQ_target}). Implementing such an approach would entail updating the covariances between the return estimates for all actions of each state and can possibly constructed analogous to (\ref{eq:proc_sigmaSQ}). However, assume that only two actions, $A$ and $B$, are available in some state. To estimate the return covariance between these actions, one needs to execute $A$, compute its target, and do the same for $B$. Although a parallelized solution is straightforward in a simulation environment, for the sake of simplicity and taking into account the insights from Section \ref{sec:Analysis_Bias_Var_MSE} that suggest cross-sample dependencies are not as critical, we opt to retain the test statistics as defined in (\ref{eq:KQ_target}).}

\begin{algorithm}[ht]
\begin{small}
\setstretch{1.10}
\SetAlgoLined
 \rd{\textbf{initialize} $\forall s \in \mathcal{S}, a \in \mathcal{A}:$ $\hat{Q}^{*}(s,a) = 0, \widehat{\sigma^{2}}_{\rm pro}(s, a) > 0 , \rdd{\omega}(s,a) \in (0,1], \rdd{\omega}^2(s,a) \in (0,1]$}\\
 \DontPrintSemicolon
\Repeat{}{
    Initialize $s$\\
    \Repeat{$s$ is terminal}{
      Choose action $a$ from state $s$ with policy derived from \rd{$\hat{Q}^*$} (e.g., $\epsilon$-greedy)\\
      Take action $a$, observe reward $r$ and next state $s'$\\[0.75ex]
      \emph{Update effective sample size:}\\
      \quad $\omega(s,a) \leftarrow (1-\tau)\omega(s,a) + \tau$\\
      \quad $\omega^2(s,a) \leftarrow (1-\tau)^2\omega^2(s,a) + \tau^2$\\
      \quad $n_{\rm eff}(s,a) \leftarrow \frac{\omega(s,a)^2}{\omega^2(s,a)}$\\[0.75ex]
      Calculate target $y_{KQ}$\\[0.75ex]
      \emph{Update process variance:}\\
      \quad $\rd{\widehat{\sigma^{2}}_{\rm pro}}(s, a) \leftarrow  (1-\tau) \left\{\rd{\widehat{\sigma^{2}}_{\rm pro}}(s, a) + \tau\left[y^{KQ} - \hat{Q}^{*}(s, a) \right]^2 \right\}$\\[0.75ex]
      \emph{Update $Q$-estimate:}\\
      \quad $\hat{Q}^{*}(s, a) \leftarrow \hat{Q}^{*}(s, a) + \tau \left[y^{KQ} - \hat{Q}^{*}(s, a) \right]$\\
      $s \leftarrow s'$\\
    }
}
\caption{TE-$Q$-Learning/KE-$Q$-Learning}
\label{algo:TE_KE_based_Q_Learning}
\end{small}
\end{algorithm}

\subsection{Deep Version}
\rd{Real-world control tasks frequently entail continuous state spaces, which necessitates to employ strategies such as state aggregation \citep{singh1994reinforcement} to make tabular algorithms applicable. In recent research, there has been a growing emphasis on leveraging DNNs as function approximators to parameterize the function \rd{$\hat{Q}^{*}_t(s_t,a_t; \theta_t)$, with $\theta_t$ being the parameter set of the neural network at time $t$}.} The resulting DQN \citep{mnih2015human} and their extensions \citep{van2016deep, dabney2018distributional, hessel2018rainbow} have shown breakthrough performances on various challenging tasks \rd{\citep{bellemare2020autonomous, barata2023reinforcement}}, thereby revolutionizing the capabilities of modern artificial intelligence.

The optimization procedure of these algorithms is still based on the Bellman optimality equation (\ref{eq:Bellman_opt_eq}), but uses gradient \rd{descent} to update \rd{$\theta_t$:
\begin{equation*}
    \theta_{t+1} \leftarrow \theta_t + \tau_t \left[y^{DQN}_t -\hat{Q}^{*}_t(s_t,a_t;\theta_t) \right] \nabla_{\theta_t} \hat{Q}^{*}_t(s_t,a_t;\theta_t), 
\end{equation*}
where $y^{DQN}_t = r_t + \gamma \max_{a \in \mathcal{A}} \hat{Q}^{*}_t(s_{t+1},a;\theta^{-}_t)$. The set $\theta^{-}_t$ refers to the parameters of the target network, which is a time-delayed copy of the main network with parameter $\theta_t$.} Instead of updating fully online, DQN samples minibatches of past experiences from a replay buffer $D$ to stabilize training. \cite{van2016deep} proposed the Double DQN (DDQN) and uses the DE to compute the target: \rd{$y^{DDQN}_t = r_t + \gamma\,\hat{Q}^{*}_t[s_{t+1}, \argmax_{a \in \mathcal{A}} \hat{Q}^{*}_t(s_{t+1},a;\theta_t);\theta^{-}_t]$.} The action selection is performed via the main network, while the evaluation uses the target network like the regular DQN.

To translate the TE and KE to the function approximation case, we require a variance estimate of the $Q$-estimates. We follow \cite{d2019exploiting} and use the framework of the BDQN \citep{osband2016deep} to accomplish this task. Generally, the bootstrap is a method for computing measures of accuracy for statistical estimates \citep{efron1994introduction}. The method trains different regressors of the target function based on bootstrap samples generated by sampling with replacement from the original dataset. The BDQN transfers this idea to the DQN algorithm by maintaining $K \in \mathbb{N}$ differently initialized \rd{$Q$-networks $\hat{Q}^{*}_{k,t}(s_t,a_t;\theta_{k,t})$ with parameter $\theta_{k,t}$ at time $t$ and $k = 1,\ldots,K$, each equipped with its own target network $\hat{Q}^{*}_{k,t}(s_t,a_t;\theta_{k,t}^-)$.} This DQN modification was proposed to improve over the usual $\epsilon$-greedy exploration strategy. At the beginning of each episode, one \rd{$\hat{Q}^{*}_{k,t}$} is selected randomly, and the agent acts \rd{greedily} with relation to this \rd{$\hat{Q}^{*}_{k,t}$}. At test time, the majority vote of the function approximators is used. 

Generally, the BDQN can be implemented using $K$ different networks or maintaining a common network body and specifying $K$ different heads. We pursue the latter approach. Diversification across the heads is achieved by different random initialization of the parameters and random generation of binary masks \rd{$m^{i}_{k} \in \{0,1\}$, where $i$ refers to the $i$-th transition tuple of the replay buffer, and $k = 1,\ldots,K$. The masks indicate which head should be trained on which sample. More precisely, if some tuple $i$ is sampled from the buffer, the $k$-th head of the ensemble will only be trained if $m^{i}_{k}$, the $k$-th mask associated with tuple $i$, takes value 1. All masks are typically sampled from the same distribution, and a Bernoulli distribution with parameter $p$ is a possible choice.}

Crucially, through the $K$ heads, we can directly use the sample variance of the $Q$-estimates and compute the target \rd{$y^{KDQN}_{k,t}$ for the $k$-th network $\hat{Q}^{*}_{k,t}$:
\vspace{-0.3cm}
\begin{equation}\label{eq:y_KDQN}
\resizebox{1.0 \textwidth}{!} 
{
    $
    y^{KDQN}_{k,t} = r_t + \gamma \left\{\sum_{a\in \mathcal{A}} \kappa\left[T_{\hat{Q}^{*}_{k,t}}(s_{t+1},a)\right]\right\}^{-1} \sum_{a\in \mathcal{A}} \kappa\left[T_{\hat{Q}^{*}_{k,t}}(s_{t+1},a)\right] \hat{Q}^{*}_{k,t}(s_{t+1},a;\theta_{k,t}^{-}),
    $
}
\end{equation}
where
\begin{equation}\label{eq:FA_KQ}
    T_{\hat{Q}^{*}_{k,t}}(s_{t+1},a) = \frac{\hat{Q}^{*}_{k,t}(s_{t+1},a;\theta^{-}_{k,t}) - \max_{a' \in \mathcal{A}}\hat{Q}^{*}_{k,t}(s_{t+1},a';\theta_{k,t}^{-})}{\sqrt{\widehat{\operatorname{Var}}_t\left[\hat{Q}^{*}_{k,t}(s_{t+1},a;\theta_{k,t}^{-})\right] + \widehat{\operatorname{Var}}_t\left[\hat{Q}^{*}_{k,t}(s_{t+1}, a^*;\theta_{k,t}^{-}) \right]}},
\end{equation}
for an action ${a^* \in \{a \in \mathcal{A} \mid \hat{Q}^{*}_{k,t}(s_{t+1},a; \theta_{k,t}^-) = \max_{a' \in \mathcal{A}} \hat{Q}^{*}_{k,t}(s_{t+1}, a'; \theta_{k,t}^{-})\}}$.
\smallskip
The resulting gradient $g_{k,t}^{i}$ for the $i$-th tuple from the replay buffer is:
\begin{equation}\label{eq:BootDQN_KE_TE_grad}
    g_{k,t}^{i} = m_{k}^{i}\left[y_{k,t}^{KDQN,i} - \hat{Q}^{*}_{k,t} (s_i, a_i; \theta_{k,t}) \right] \nabla_{\theta_{k,t}}\hat{Q}^{*}_{k,t}(s_i, a_i;\theta_{k,t}).
\end{equation}
The full procedure is termed TE-BDQN and KE-BDQN, respectively, being detailed in Algorithm \ref{algo:TE_KE_based_BootDQN}. Similar to the tabular case, there is the possibility to include a covariance correction in the denominator of (\ref{eq:FA_KQ}), thereby combating possible cross-sample dependencies. Although the implementation is straightforward due to the bootstrap ensemble, we propose a more holistic adaptive bias reduction scheme in the next section, which goes beyond mitigating the consequences of cross-sample dependencies.}

\begin{algorithm}[ht]
\begin{small}
\setstretch{1.10}
\DontPrintSemicolon
\SetAlgoLined
 \textbf{initialize} Action-value estimate networks with $K$ outputs $\left\{\hat{Q}^{*}_k \right\}^{K}_{k=1}$, masking distribution $M$, empty replay buffer $D$\\
\Repeat{}{
    Initialize $s$\\
    Pick a value function to act: $k \sim \text{Uniform}\{1,\ldots,K\}$\\
    \Repeat{$s$ is terminal}{
      Choose action $a$ from state $s$ with greedy policy derived from $\hat{Q}^{*}_k$\\
      Take action $a$, observe reward $r$ and next state $s'$\\
      \rd{Sample bootstrap masks $m = (m_1, \ldots, m_K)$}\\
      Add $(s, a, r, s', m)$ to replay buffer $D$\\
      Sample random minibatch of transitions $\left\{(s_i, a_i, s'_{i}, r_i, \rd{m^i})\right\}_{i=1}^{B}$ from $D$\\
      Perform gradient descent step based on (\ref{eq:BootDQN_KE_TE_grad})\\
      Every $C$ steps reset $\rd{\theta_{k}^{-} = \theta_k}$ for $k = 1,\ldots, K$\\
      $s \leftarrow s'$\\
    }
}
\caption{TE-BDQN/KE-BDQN}
\label{algo:TE_KE_based_BootDQN}
\end{small}
\end{algorithm}

\section{Adaptive Absolute Bias Minimization}\label{subsec:adaptive_bias}
While the TE can interpolate between under- and overestimation by selecting a smaller or larger $\alpha$, it is a priori not known which $\alpha$ is adequate for an unknown environment. The method of choice for practitioners in such situations is a grid search to select a suitable value empirically. Next to the selection issue, a fixed parameter might not be sufficient for controlling the estimation bias, and, e.g., an ascending strategy would be favourable. Leveraging these considerations, we propose an adaptive modification of TE-BDQN to adjust $\alpha$ under the objective of minimizing the absolute estimation bias during training.

\subsection{Bias Estimation}\label{subsec:bias_estimation}
Before introducing the adaptive mechanism, we first outline how to estimate the bias of given action-value estimates at a certain point in training. Following \cite{chen2021randomized}, we consider a current policy $\pi$, which is connected with true action-values $Q^{\pi}(s,a)$ \rd{for $s \in \mathcal{S}, a \in \mathcal{A}$}. The aggregated bias of estimates $\hat{Q}^{\pi}(s,a)$ of $Q^{\pi}(s,a)$ for all $s \in \mathcal{S}, a \in \mathcal{A}$ is defined as: $$\operatorname{Bias}(\hat{Q}^{\pi},\pi) = \operatorname{E}_{s\sim \rho^{\pi}, a\sim\pi}[\hat{Q}^{\pi}(s,a) - Q^{\pi}(s,a)],$$
where $\rho^{\pi}$ is the state-visitation distribution of $\pi$. \cite{chen2021randomized} proposed to repeatedly run analysis episodes from random initial states while following $\pi$. The observed Monte Carlo return of an encountered state-action pair serves as an unbiased estimate of its true $Q$-value. Averaging over all encountered state-action pairs yields the estimate of the estimation bias:
\begin{equation}\label{eq:bias_estimate}
\widehat{\operatorname{Bias}}(\hat{Q}^{\pi},\pi) = \frac{1}{\lvert \mathcal{T} \rvert} \sum_{(s,a,R) \in \mathcal{T}} [\hat{Q}^{\pi}(s,a) - R],    
\end{equation}
where $\mathcal{T}$ is a set of encountered $(s,a,R)$-tuples, where $s$ is the state, $a$ the executed action\rd{,} $R$ the (later) observed Monte Carlo return\rd{, and} $\lvert \mathcal{T} \rvert$ is the cardinality of $\mathcal{T}$. \cite{chen2021randomized} applied this procedure to the Soft-Actor Critic algorithm \citep{haarnoja2018soft}, which uses an actor dictating the policy $\pi$ and a critic providing the estimates $\hat{Q}^{\pi}$. In standard $Q$-Learning, no explicit actor is providing the policy, and the algorithm directly approximates the optimal action-values $Q^*(s, a)$, leading to estimates $\hat{Q}^{*}(s,a)$ \citep{sutton2018reinforcement}. To still generate insights into the action-value estimation accuracy of the algorithm, \cite{van2016deep} compare the $\hat{Q}^{*}(s,a)$ generated \emph{during} training with Monte Carlo returns of the final greedy policy \emph{after} training. Although this approach is certainly instructive, it does not enable an assessment without having a converged baseline.

\rd{We instead propose to use (\ref{eq:bias_estimate}) with the policy being defined as $\pi(s) = \argmax_a \hat{Q}^{*}(s,a)$ for all $s \in \mathcal{S}$, being briefly summarized in Algorithm \ref{algo:Bias_estimation}. Crucially, the algorithm can be run already \emph{during} training, say always after a fixed number of steps, allowing to get an intuition about the development of the value estimation bias over the training time.} The method is justified since $Q$-Learning uses a greedy target policy in its update. Thus, the algorithm \emph{evaluates} the greedy policy with respect to its own action-value estimates, and we can assess whether the $Q$-estimates are too optimistic or pessimistic. \rdd{Crucially, we emphasize that the greedy bias estimation roll-outs of Algorithm \ref{algo:Bias_estimation} are independent of the learning episodes, which rely on an explorative policy.} Transferred to the BDQN and its modifications, the procedure can be similarly applied by assessing each head separately since they constitute different estimates $\hat{Q}^{*}_k$ for $k = 1, \ldots, K$. Thus, we run Algorithm \ref{algo:Bias_estimation} for each head and average the output to receive an aggregated bias estimate for the bootstrap-ensemble.

\begin{algorithm}[ht]
\begin{small}
\setstretch{1.10}
\DontPrintSemicolon
\SetAlgoLined
 \textbf{Input} Estimates $\hat{Q}^{*}(s,a)$ for all $s \in \mathcal{S}, a \in \mathcal{A}$\\
 Set $\mathcal{T} = \emptyset$ and $\pi(s) = \argmax_a \hat{Q}^{*}(s,a)$ for all $s \in \mathcal{S}$\\
\For{number of episodes}{
    Randomly initialize $s$\\
    Play episode following $\pi$ and append encountered state-action-return tuples to $\mathcal{T}$
}
\Return{$\frac{1}{\lvert \mathcal{T} \rvert} \sum_{(s,a,R) \in \mathcal{T}} [\hat{Q}^{*}(s,a) - R]$}
\caption{Bias estimation for $Q$-Learning like algorithms}
\label{algo:Bias_estimation}
\end{small}
\end{algorithm}

\subsection{Adaptive TE-BDQN}\label{sec:Adaptive_TE}
Algorithm \ref{algo:Bias_estimation} will be used to assess the algorithms in the experiments of Section \ref{sec:Experiments}. Furthermore, the approach serves as a basis for dynamically adjusting the $\alpha$ of the TE-BDQN. Intuitively, since larger $\alpha$ leads to larger $Q$-estimates, $\alpha$ is reduced if the $Q$-estimates are too high. Vice versa, we increase $\alpha$ if the $Q$-estimates are too small. Precisely, we perform the following update:
\rd{\begin{equation}\label{eq:AdaTE_update}
    \alpha_{t+1} \leftarrow \alpha_t + \frac{\tau_{\rm Ada}}{K} \sum_{k=1}^{K} \sum_{\Tilde{t} = 1}^{T_{\rm Ada}} \left[R_k(s_{\Tilde{t},k}, a_{\Tilde{t},k}) - \hat{Q}^{*}_{k,t} (s_{\Tilde{t},k}, a_{\Tilde{t},k}; \theta_k) \right],
\end{equation}
with step size $\tau_{\rm Ada}$ and roll-out length $T_{\rm Ada}$. Importantly, we use $n$-steps returns \citep{sutton2018reinforcement}:
\begin{align*}
    R_k(s_{\Tilde{t},k},a_{\Tilde{t},k}) = r_{\Tilde{t},k} + \gamma r_{\Tilde{t}+1,k} + \gamma^2 r_{\Tilde{t}+2,k} + \ldots + \gamma^{T_{\rm Ada} -\Tilde{t}} r_{T_{\rm Ada},k} \\
    + \gamma^{T_{\rm Ada} - \Tilde{t} +1} \max_{a} \hat{Q}^{*}_{k,t}(s_{T_{\rm Ada} +1,k}, a; \theta_k),
\end{align*}
where $\Tilde{t} = 1, \ldots, T_{\rm Ada}$. Consistent with the notation above, we use the index $t$ to refer to step size of the overall experiment, and $\Tilde{t}$ to the step size inside a Monte Carlo roll-out.} We denote state and action at time \rd{$\Tilde{t}$} under head $k$ as \rd{$s_{\Tilde{t},k}$} and \rd{$a_{\Tilde{t},k}$}, respectively. The resulting immediate reward is \rd{$r_{\Tilde{t},k}$} and the initial states $s_{1,k}$ for $k = 1, \ldots, K$ are randomly sampled from the replay buffer. As motivated in Section \ref{subsec:bias_estimation}, the actions \rd{$a_{\Tilde{t},k}$} under head $k$ are selected by acting \rd{greedily} with relation to \rd{$\hat{Q}^{*}_{k,t}$}. Although $n$-step returns are generally not \rd{unbiased estimates} of the expected return of a policy like a complete Monte Carlo roll-out, we found them empirically much more practicable since they do not require running full episodes while still allowing us to judge the accuracy of the current value estimates. Consequently, this approach can also be applied to non-episodic problems.

We update $\alpha$ immediately after the target networks, avoiding instabilities in the learning process. A similar proposal to (\ref{eq:AdaTE_update}) in an episodic context with continuous action spaces based on full Monte Carlo roll-outs has recently been made by \cite{dorka2021adaptively}. Note that it is possible to maintain a separate $\alpha$ for each bootstrap head, enabling a tailored parametrization for the bias of each approximator. However, we only consider one parameter for the whole ensemble for simplicity in the following. The resulting algorithm is called Ada-TE-BDQN and is shown in \ref{appendix:Ada_TE_BDQN}.

\section{Experiments}\label{sec:Experiments}
We analyze the proposed estimators of the MEV on a statistically motivated real-world example before considering two tabular environments that serve as a proof-of-concept for TE/KE-$Q$-Learning. The experiments with function approximation are carried out in the MinAtar environments of \cite{young2019minatar}, which allow for a thorough algorithmic comparison.

\subsection{Internet Ads}
\begin{figure}[htp!]
    \centering
    \includegraphics[width=0.9\linewidth]{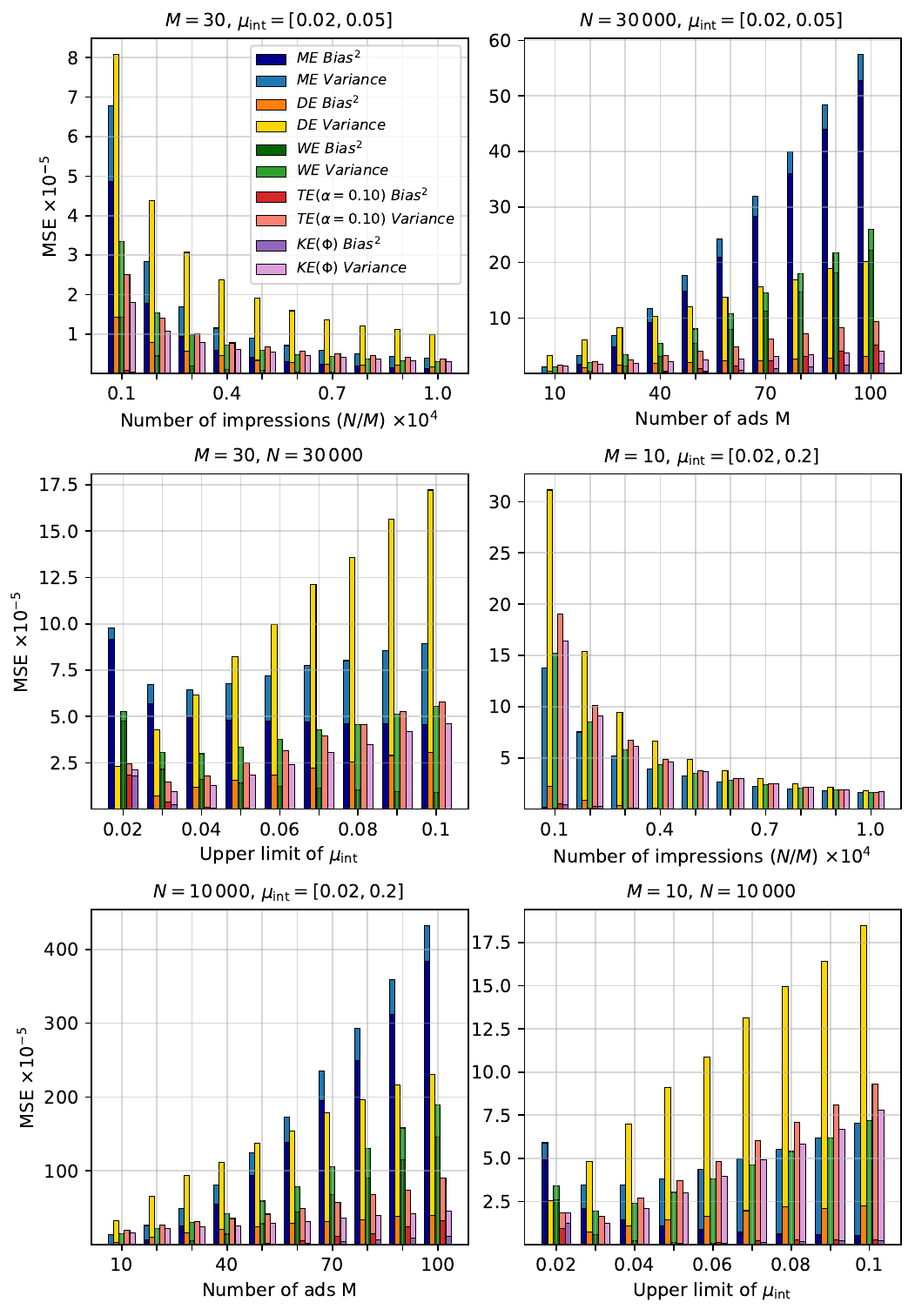}
    \caption{Comparison of ME, DE, WE, TE, and KE on the internet ad problem. Results are averaged over 10\,000 runs.}
    \label{fig:Internet_Ad}
\end{figure}

We consider the internet ad problem previously studied by \cite{van2013estimating}, \cite{d2021gaussian}, and \cite{jiang2021action}. There are $M$ different ads, and each has the same return per click. Consequently, the click rate is the only quantity of interest and is modeled as a Bernoulli variable (click or no click). The true expectations $\mu_1, \ldots, \mu_M$ of these $M$ variables equal the respective click probability and are equally spaced in a specific interval $\mu_{int}$. We consider $N$ customers and assume that the ads are presented equally often to have $N/M$ samples per ad. \rdd{Thus, the sample size for each of the $M$ Bernoulli variables is $N/M \gg 1$, and we refer to this ratio as the number of impressions. In addition, we assume $N/M$ is integer-valued.} The objective is to estimate the maximum true mean accurately, thus finding the best ad based on the given samples. We compare the TE ($\alpha = 0.1$) and the KE (standard Gaussian kernel) with the ME, DE, and WE based on bias, variance, and MSE. Six configurations of the problem are considered by varying the number of customers $N$, the number of ads $M$, or the upper limit of the sampling interval $\mu_{\rm int}$, while the lower limit is always fixed at 0.02. Figure \ref{fig:Internet_Ad} displays the results.

Both TE and KE yield lower MSE than their competitors in most scenarios. We emphasize that $\alpha = 0.1$ was not cherry-picked for this problem, and the TE's performance could thus be further increased by tailoring $\alpha$ for each experiment. In general, all estimators' MSEs decrease with an increasing number of ads, while they are more accurate with a higher number of customers $N$, constituting reasonable observations. The DE often yields large variances, while the ME provides biased estimates. Despite producing a higher MSE than TE and KE in most cases, the WE outperforms the conventional competitors ME and DE.

\subsection{Maximization Bias Example}\label{subsec:Simple_MDP}
We consider the example in Figure 6.5 of \cite{sutton2018reinforcement}. A simple MDP with two non-terminal states A and B is given\rd{, which is visualized in Figure \ref{fig:SimpleMDP_env}.} The agent starts in A. If it goes \rd{\emph{right}} from A, the episode ends, and zero reward is received. If action \rd{\emph{left}} is selected in A, the agent deterministically gets to state B and receives zero reward. There are eight actions to choose from B, but all lead to a terminal state and yield a reward sampled from $\mathcal{N}(-0.1, 1)$. The parameters are $\gamma = 1$, $\epsilon = 0.1$, and \rdd{learning rate} $\tau = 0.1$. Since this is an undiscounted task, the expected return starting with action \rd{\emph{left}} is $-0.1$, and the agent should always prefer going \rd{\emph{right}}. However, due to the random selection of the $\epsilon$-greedy strategy, the \rd{\emph{left}} action will always be picked at least $5\%$ in expectation.

\begin{figure}[ht]
    \centering
    \includegraphics[width=0.5\linewidth]{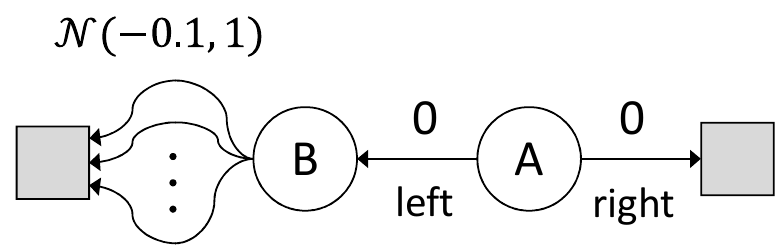}
    \caption{\rd{\footnotesize{The} environment of the Maximization Bias Example; compare Figure 6.5 of \cite{sutton2018reinforcement}.}}
    \label{fig:SimpleMDP_env}
\end{figure}

\begin{figure}[ht]
    \centering
    \includegraphics[width=\linewidth]{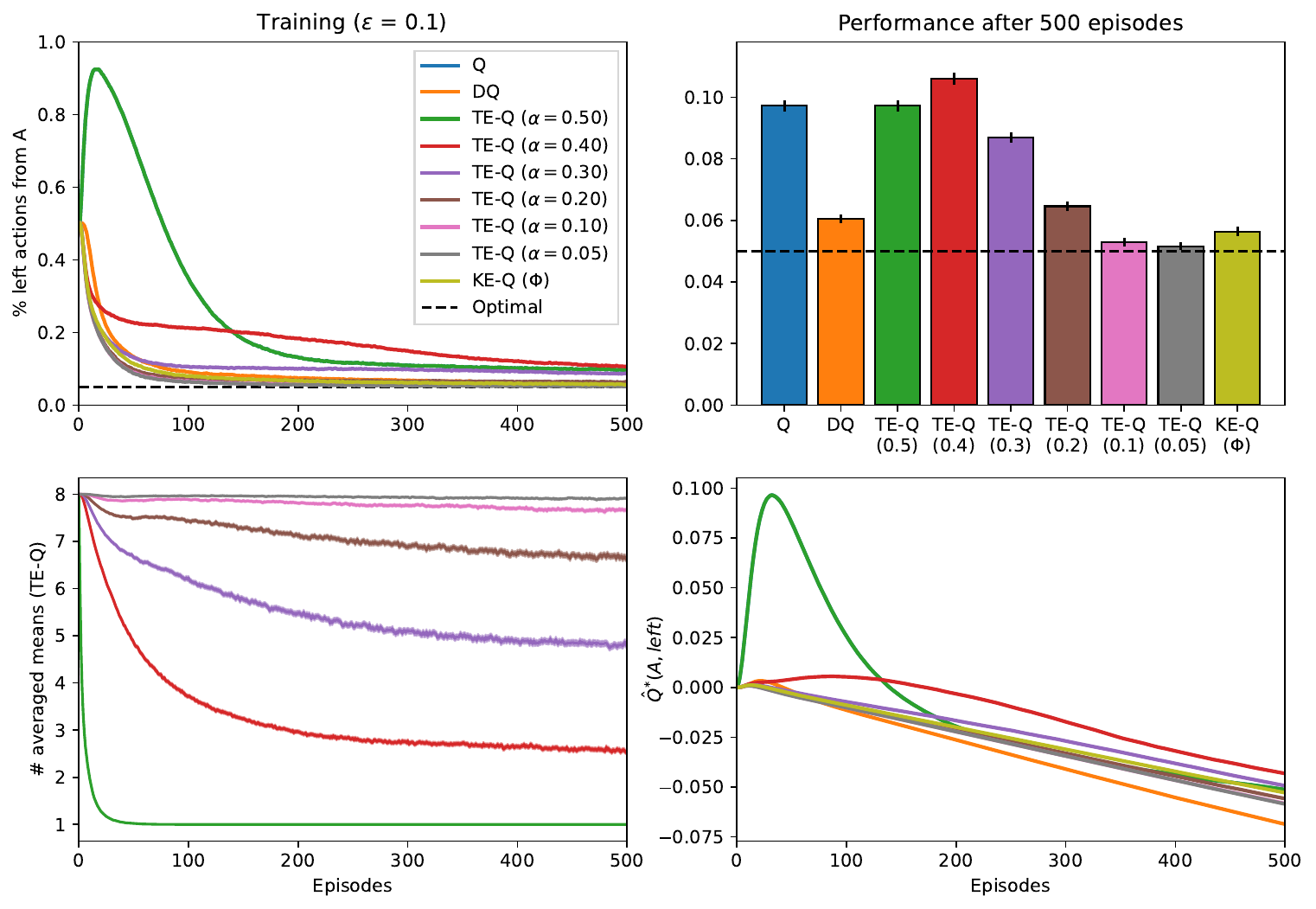}
    \caption{Maximization Bias Example from \cite{sutton2018reinforcement} with parameters $\gamma = 1$, $\epsilon = 0.1$, and $\tau = 0.1$. Q and DQ refer to $Q$-Learning and Double $Q$-Learning, respectively. Results are averaged over $100\,000$ runs and 95\% confidence intervals are included. The action-value estimate in the lower-right graph for DQ is generated by averaging over both $Q$-tables.}
    \label{fig:SimpleMDP}
\end{figure}

Figure \ref{fig:SimpleMDP} depicts the results. The upper-left part displays the percentage of selecting action \rd{\emph{left}} in A; the upper-right plot contains the same percentage after 500 training episodes. The lower-right graph shows the estimate of $Q^*$(A, left) over training. Finally, the lower-left plot displays the number of \rdd{non-rejected hypotheses of TE-$Q$-Learning when updating $\hat{Q}^*$(A, left) according to (\ref{eq:KQ_target}), which is equal to the number of $Q$-estimates of the follow-up state B that are averaged during the target computation. Emphasizing the relationship to (\ref{eq:T-Estimator}), we refer to this quantity as the number of averaged means.} $Q$-Learning is the same algorithm as TE-$Q$-Learning with $\alpha = 0.5$ and only included for comparison.

$Q$-Learning initially overestimates the value of the \rd{\emph{left}} action in state A and selects it nearly \rdd{10\% of the cases after 500 episodes, which is twice as optimal considering that $\epsilon = 0.1$}. Double $Q$-Learning performs better and achieves a final rate of roughly $6\%$. On the other hand, TE-$Q$-Learning can modulate the overestimation bias through its significance level $\alpha$ and reaches a near-optimal selection percentage for $\alpha \leq 0.10$. Interestingly, TE-$Q$ ($\alpha = 0.4$) performs worse after 500 episodes than $Q$-Learning. Although the initial overestimation is not as large as for $\alpha = 0.5$, the effect is more persistent, and more interactions are needed to reduce the estimate. For additional insights, we display the number of means averaged by the TE. TE-$Q$-Learning with $\alpha = 0.5$ naturally considers only the maximum sample mean, which is non-unique in the first several episodes since all action-value estimates are initialized with zero. The lower the significance level of TE-$Q$ gets, the more means are averaged until nearly all sample means are considered for $\alpha \leq 0.1$. Furthermore, KE-$Q$-Learning with the standard Gaussian kernel performs reasonably well and achieves a final selection rate below Double $Q$-Learning.

\subsection{Cliff Walking}\label{subsec:CliffWalking}
We examine the Cliff Walking task from Example 6.6 in \cite{sutton2018reinforcement}, which is an undiscounted, episodic task with start and goal states. Our environment is a grid of width 10 and height 5\rd{, being depicted in Figure \ref{fig:CW_env}.} Start state S is the lower-left grid point; goal state G is the lower-right grid point. All transitions are rewarded with $-1$, except those which lead to the grid points directly between S and G. Those are referred to as \rd{\emph{The Cliff}}, yield reward $-100$, and send the agent back to S. Actions are the four movement directions up, down, right, and left. Performance is measured via the return during an episode.

\begin{figure}[ht]
    \centering
    \includegraphics[width=0.5\linewidth]{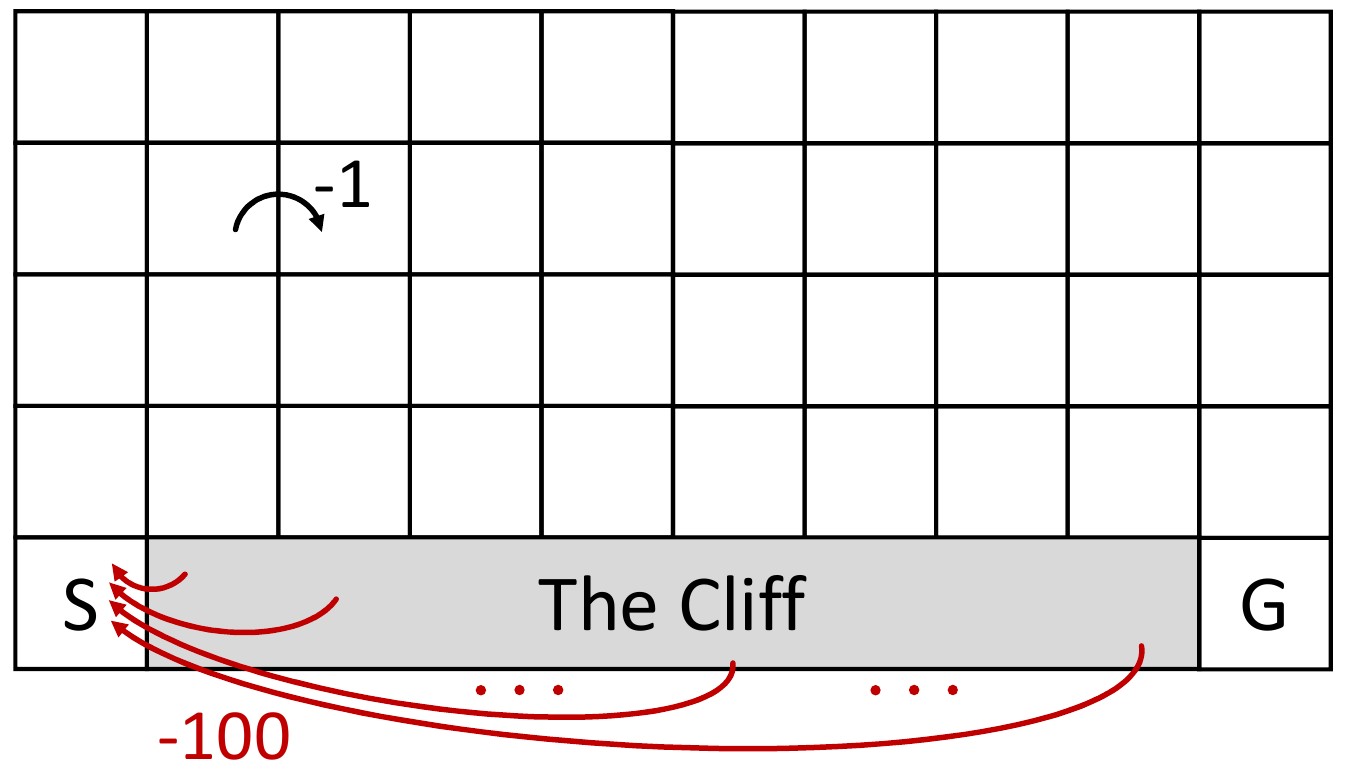}
    \caption{\rd{\footnotesize{The} Cliff Walking environment; compare Example 6.6 of \cite{sutton2018reinforcement}.}}
    \label{fig:CW_env}
\end{figure}

\begin{figure}[ht]
    \centering
    \includegraphics[width=\linewidth]{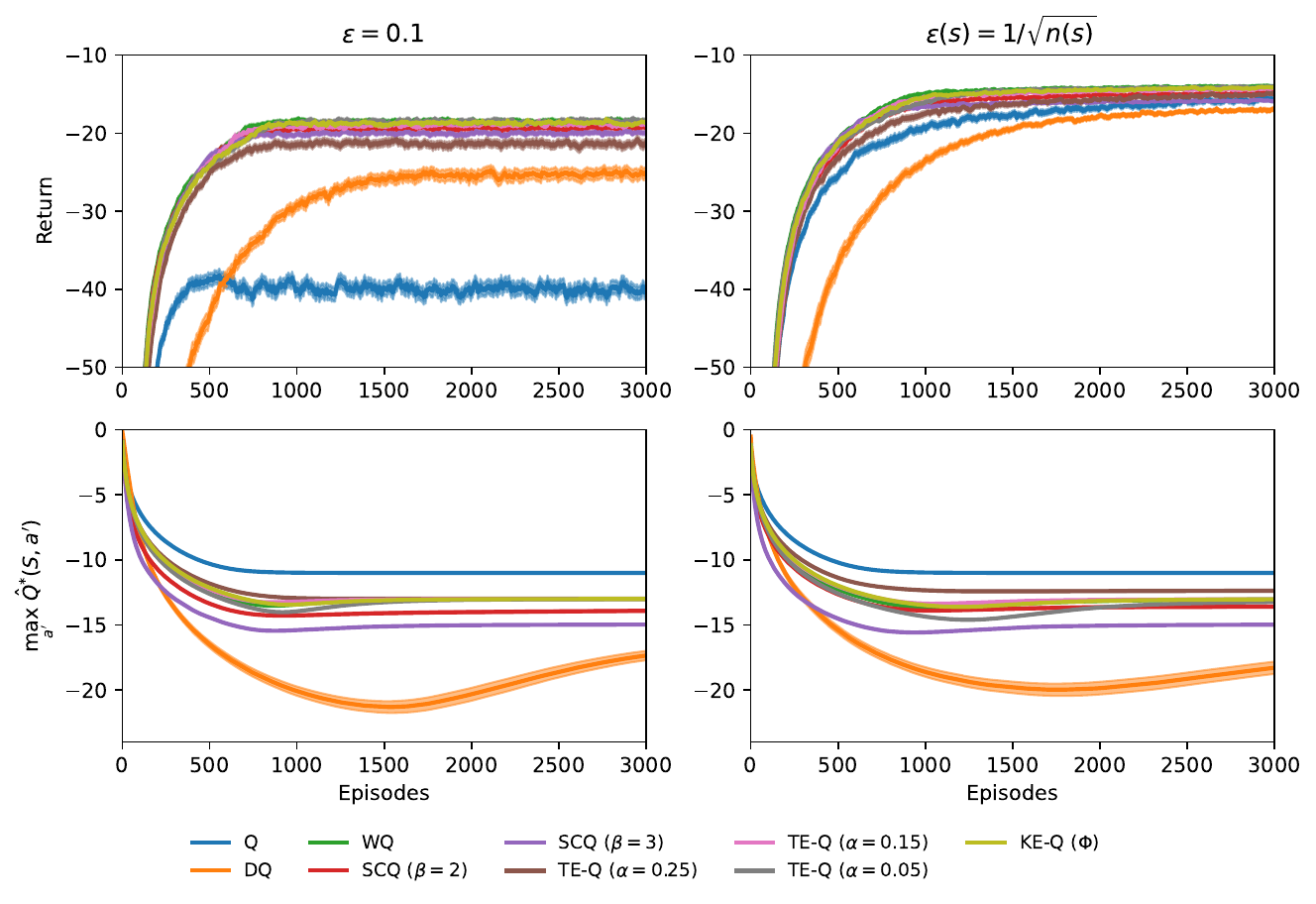}
    \caption{Cliff Walking example from \cite{sutton2018reinforcement} with parameters ${\gamma = 1}$, $\tau\rdd{(s,a)} = 0.1(100 + 1) / \left[100 + n(s,a)\right]$, and two different $\epsilon$-greedy strategies. Results are averaged over 500 runs, exponentially smoothed for visualization purposes, and 95\% confidence intervals are included. The maximum action-value estimate of the start state for DQ is computed by averaging this quantity over both $Q$-tables.}
    \label{fig:CW_I_II}
\end{figure}

Figure \ref{fig:CW_I_II} follows the setup of \cite{zhu2021self} and contains results for constant $\epsilon = 0.1$ and annealing $\epsilon\rdd{(s)} = 1 / \sqrt{n(s)}$, where $n(s)$ is the number of times state $s$ has been visited. The learning rate is $\tau\rdd{(s,a)} = 0.1(100 + 1) / \left[100 + n(s,a)\right]$, with $n(s,a)$ being the number of updates for the state-action pair. Next to $Q$- and Double $Q$-Learning, we consider Weighted $Q$-Learning (WQ, \citealt{d2016estimating}) and Self-correcting $Q$-Learning (SCQ, \citealt{zhu2021self}) with $\beta \in \{2,3\}$, following the recommendation of the authors. \rdd{Crucially, the SCQ is a recent modification of $Q$-Learning that leverages a self-correcting mechanism. The idea is to combine subsequent, correlated action-value estimates into a single self-correcting estimator without increasing the computational burden of the procedure. Due to its solid theoretical foundation and competitive empirical results, we include the SCQ as a benchmark algorithm. Its deep version will also serve as a baseline for the experiments with function approximation in Section \ref{subsec:MinAtar}.}

\rdd{In the Cliff Walking task, we} run each algorithm for 3000 episodes and average the results over 500 independent runs. Additionally, we display the maximum action-value estimate of the start state S over training. For comparison, since at least eleven steps are necessary to walk across our map, it holds for the optimal policy: $\max_{a'} Q^{*}(S,a') = -11$.

We see the strong performance of the newly proposed algorithms for both exploration strategies. Similar to the example in Section \ref{subsec:Simple_MDP}, especially TE-$Q$ with $\alpha = 0.05$ and KE-$Q$ are appropriate for this task and achieve the highest returns together with WQ. Furthermore, the higher action-value estimates for $Q$-Learning are apparent, while Double $Q$-Learning leads to severe underestimation. Finally, the returns are higher for all algorithms with $\epsilon\rdd{(s)} = 1 / \sqrt{n(s)}$ than with a constant exploration rate, which is reasonable since action selection yields a higher probability of selecting greedy in the long-term.

\begin{figure}[ht]
    \centering
    \includegraphics[width=\linewidth]{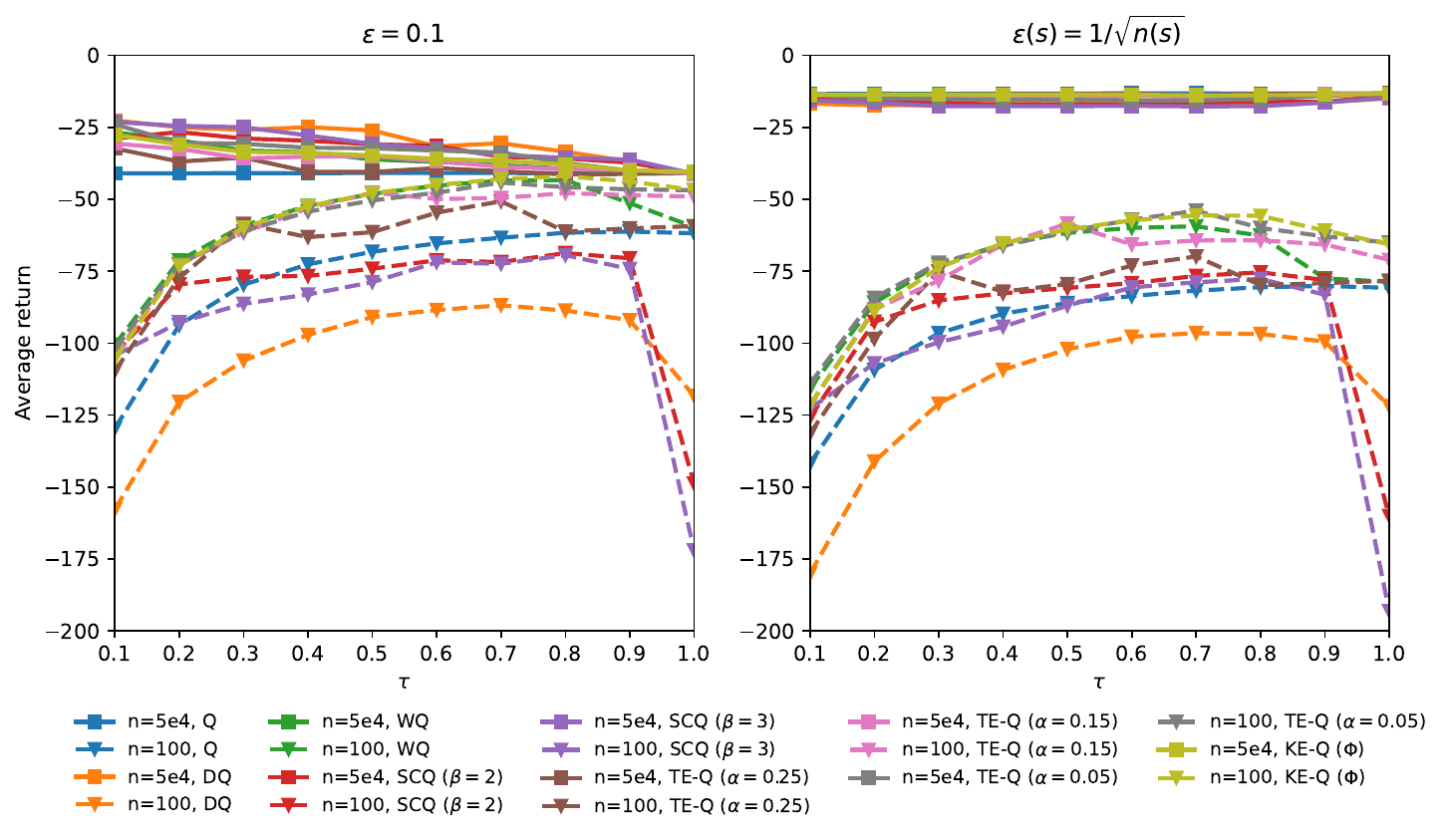}
    \caption{Cliff Walking example adapted from \cite{van2009theoretical}. The algorithms' interim (dotted lines) and asymptotic (solid lines) return averages are analyzed for different learning rates. The number of episodes is $n$.}
    \label{fig:CW_III_IV}
\end{figure}

To investigate the interim and asymptotic behavior of the algorithms for different learning rates, we run an analysis similar to \cite{van2009theoretical}\rd{, and depict the results in Figure \ref{fig:CW_III_IV}.} Precisely, we consider learning rates in $\left\{0.1, 0.2, \ldots, 0.9, 1.0\right\}$ and employ the two exploration strategies ${\epsilon = 0.1}$ and $\epsilon\rdd{(s)} = 1 / \sqrt{n(s)}$ again. We analyze the average return over the first 100 episodes and average the results over 5000 runs for the interim performance. \rd{For the asymptotic scenario, we run each algorithm for $50\,000$ episodes, compute the average return of the episodes, and average the results over 5 runs.}

The TE-$Q$ and KE-$Q$ algorithms offer the most robust interim progress across learning rates for both exploration strategies, while the DQ and SCQ expose a severe performance drop when $\tau = 1$. This might be because both algorithms rely on two different $Q$-tables, and complete replacement of the entries yields instabilities in this case. Regarding the asymptotic analysis, the SCQ and DQ algorithms improve on $Q$-Learning and are marginally above WQ, TE-$Q$, and KE-$Q$ for $\epsilon = 0.1$, while the return differences are close to zero for the annealing exploration strategy due to long-term greedy action selection.

\subsection{MinAtar}\label{subsec:MinAtar}
We select the MinAtar \citep{young2019minatar} environments to test the proposed Deep RL algorithms. MinAtar is a testbed incorporating several Atari games from the Arcade Learning Environment \citep{bellemare2013arcade}, which is considered a challenging benchmark for modern AI algorithms for sequential decision making. MinAtar is based on a reduced state-representation, incorporates sticky actions \citep{machado2018revisiting}, and is designed to enable thorough algorithmic comparisons due to reduced computation times. Following \cite{young2019minatar}, the network structure consists of a convolutional and a fully-connected layer. The remaining hyperparameters match \cite{young2019minatar}, except that we use the Adam \citep{kingma2014adam} optimizer, which led to much more stable results \rd{than using RMSprop \citep{hinton2012neural}} during our experiments. \ref{appendix:exp_hyperparams} contains the full list of specifications. 

\rd{The compared algorithms are the DQN \citep{mnih2015human}, DDQN \citep{van2016deep}, Self-Correcting DQN (SCDQN, \citealt{zhu2021self}), MaxMin DQN \citep{lan2020maxmin}, BDQN \citep{osband2016deep}, TE-BDQN, KE-BDQN (with standard Gaussian cdf), and Ada-TE-BDQN.} For the parametrization of the BDQN and its modifications, we follow \cite{osband2016deep} by using $K = 10$ bootstrap heads, each corresponding to one fully-connected layer, and setting $p = 1$ for the masking Bernoulli distribution. The BDQN uses the target computation of the DDQN, which we apply consequently. Furthermore, we scale the gradients of the convolutional core part for the bootstrap-based algorithms by $1 / K$, which was also recommended by \cite{osband2016deep}. We consider $\beta \in \{2,3,4\}$ for the SCDQN, \rd{the number of networks $N \in \{2,3\}$ for the MaxMin DQN,} and $\alpha \in \{0.1, 0.2, 0.3, 0.4\}$ for the TE-BDQN. The bias parameter of the Ada-TE-BDQN is initialized with $\alpha = 0.25$ and we consider two step sizes $\tau_{\rm Ada} \in \{10^{-4}, 10^{-5}\}$ with horizon $T_{\rm Ada} = 32$.

\begin{figure}[htbp]
    \centering
    \includegraphics[width=\textwidth]{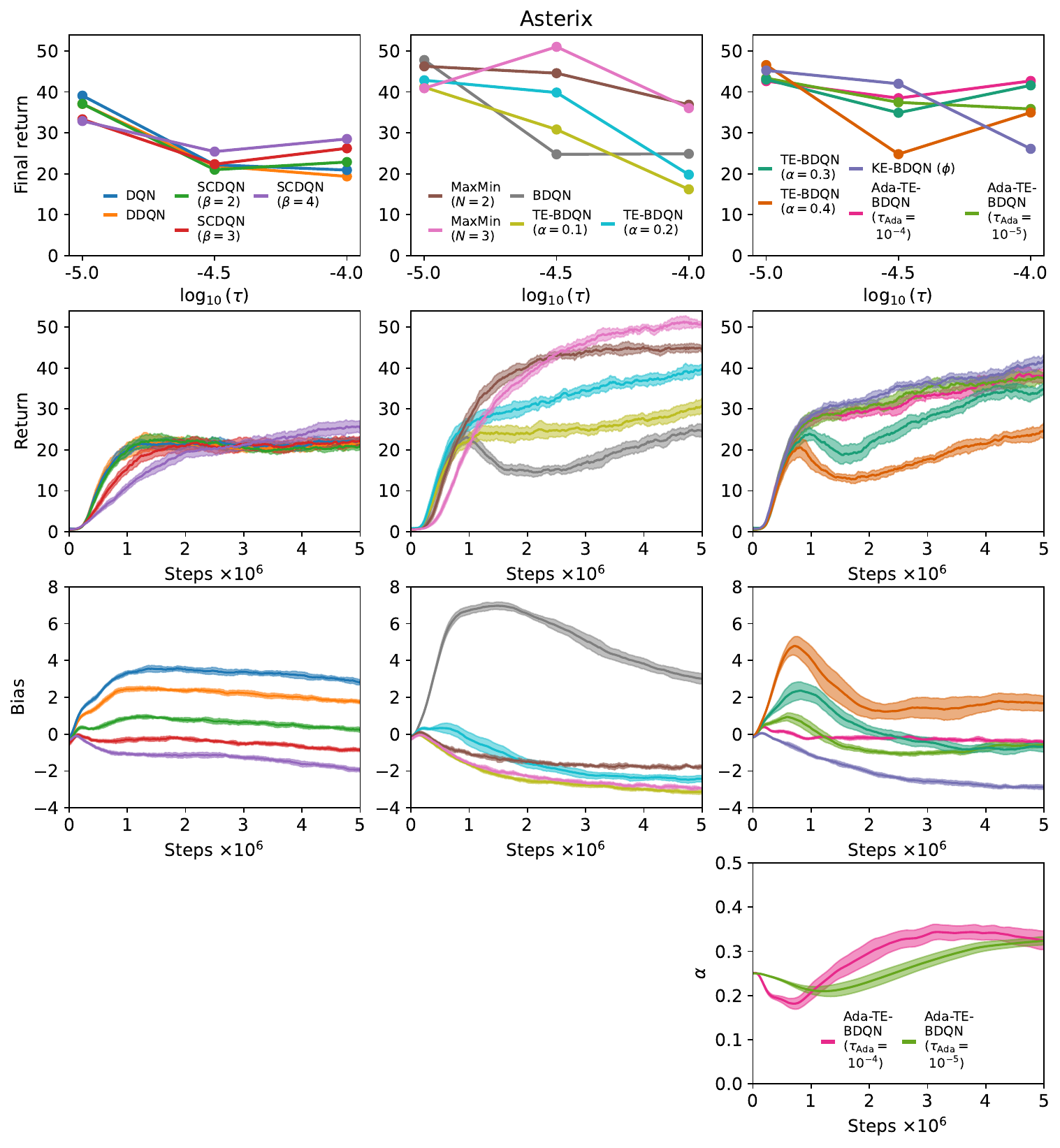}
    \caption{Algorithm comparison on Asterix. \rd{The first row shows the final return of different learning rates. The second and third row show the return and bias over time for $\tau = 10^{-4.5}$. The dynamic behavior of the $\alpha$ for the Ada-TE-BDQN is displayed in row four. Regarding algorithms, the left column includes the DQN, DDQN, and SCDQN; the middle column displays the MaxMin DQN, BDQN, and two TE-BDQNs; and the right column contains the remaining TE-BDQNs, the KE-BDQN, and the Ada-TE-BDQN results.}}
    \label{fig:MinAtar_Asterix}
\end{figure}

\begin{figure}[htbp]
    \centering
    \includegraphics[width=\textwidth]{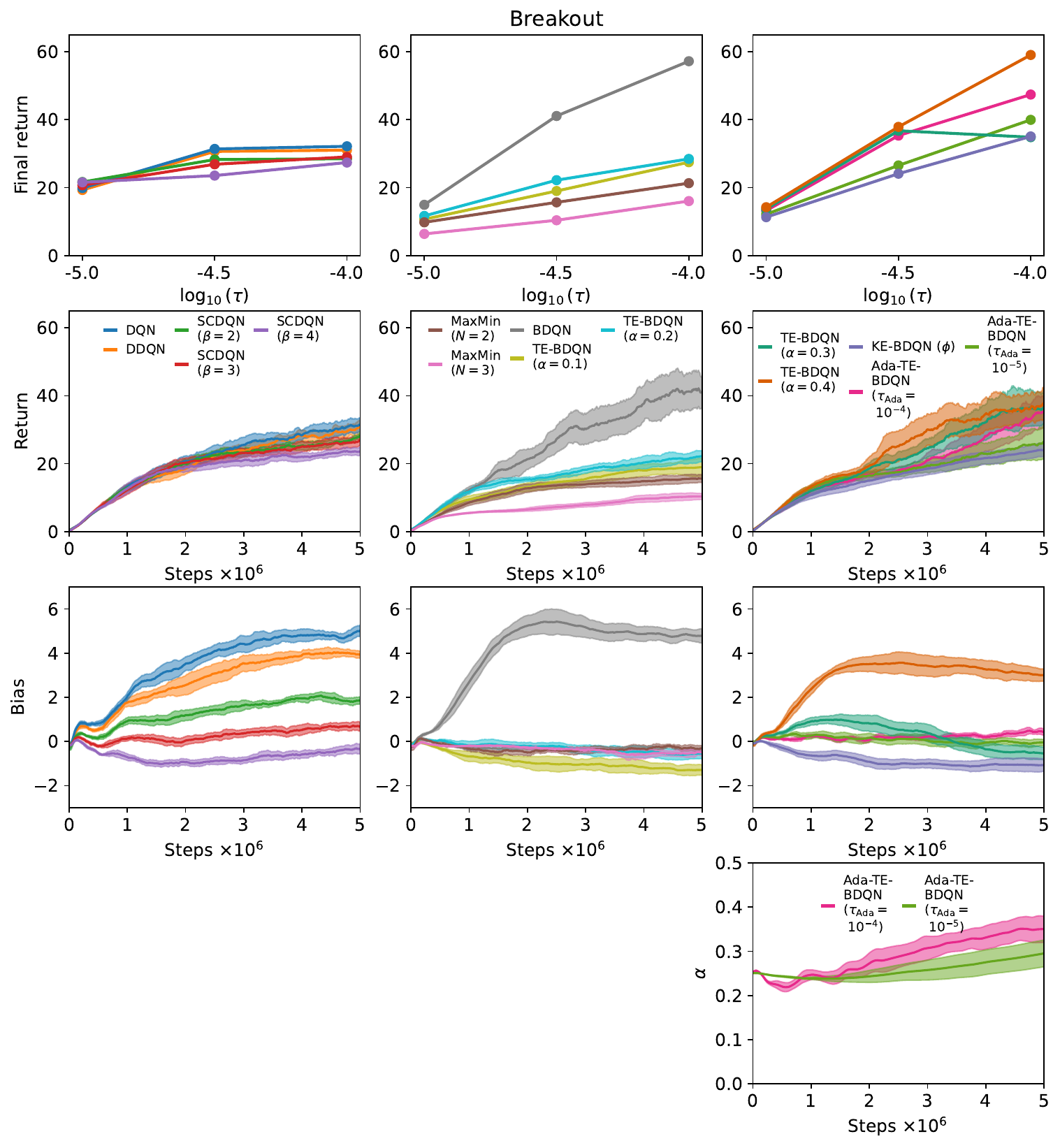}
    \caption{Algorithm comparison on Breakout. \rd{The first row shows the final return of different learning rates. The second and third row show the return and bias over time for $\tau = 10^{-4.5}$. The dynamic behavior of the $\alpha$ for the Ada-TE-BDQN is displayed in row four. Regarding algorithms, the left column includes the DQN, DDQN, and SCDQN; the middle column displays the MaxMin DQN, BDQN, and two TE-BDQNs; and the right column contains the remaining TE-BDQNs, the KE-BDQN, and the Ada-TE-BDQN results.}}
    \label{fig:MinAtar_Breakout}
\end{figure}

\begin{figure}[htbp]
    \centering
    \includegraphics[width=\textwidth]{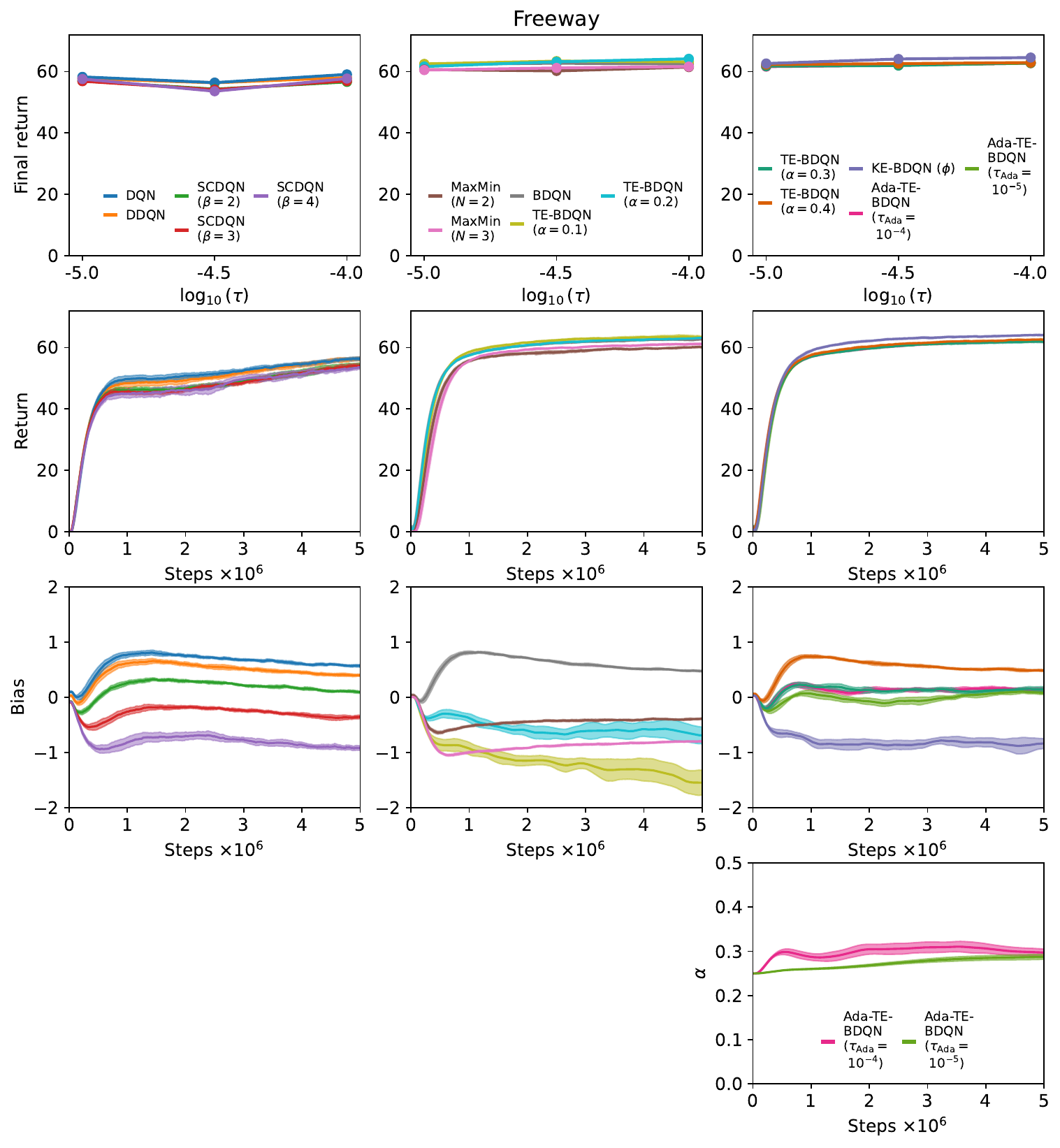}
    \caption{Algorithm comparison on Freeway. \rd{The first row shows the final return of different learning rates. The second and third row show the return and bias over time for $\tau = 10^{-4.5}$. The dynamic behavior of the $\alpha$ for the Ada-TE-BDQN is displayed in row four. Regarding algorithms, the left column includes the DQN, DDQN, and SCDQN; the middle column displays the MaxMin DQN, BDQN, and two TE-BDQNs; and the right column contains the remaining TE-BDQNs, the KE-BDQN, and the Ada-TE-BDQN results.}}
    \label{fig:MinAtar_Freeway}
\end{figure}

\begin{figure}[htbp]
    \centering
    \includegraphics[width=\textwidth]{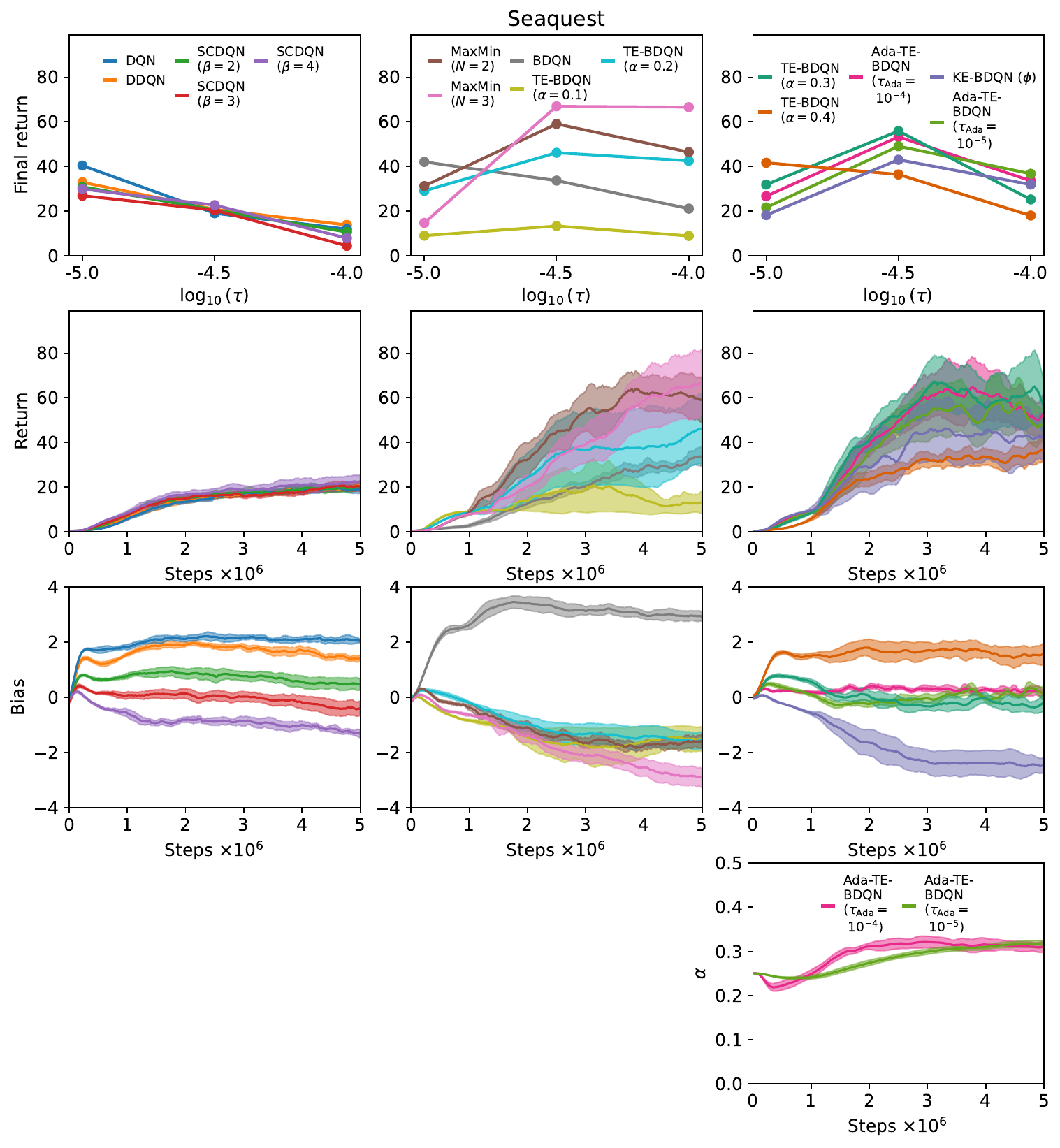}
    \caption{Algorithm comparison on Seaquest. \rd{The first row shows the final return of different learning rates. The second and third row show the return and bias over time for $\tau = 10^{-4.5}$. The dynamic behavior of the $\alpha$ for the Ada-TE-BDQN is displayed in row four. Regarding algorithms, the left column includes the DQN, DDQN, and SCDQN; the middle column displays the MaxMin DQN, BDQN, and two TE-BDQNs; and the right column contains the remaining TE-BDQNs, the KE-BDQN, and the Ada-TE-BDQN results.}}
    \label{fig:MinAtar_Seaquest}
\end{figure}

\begin{figure}[htbp]
    \centering
    \includegraphics[width=\textwidth]{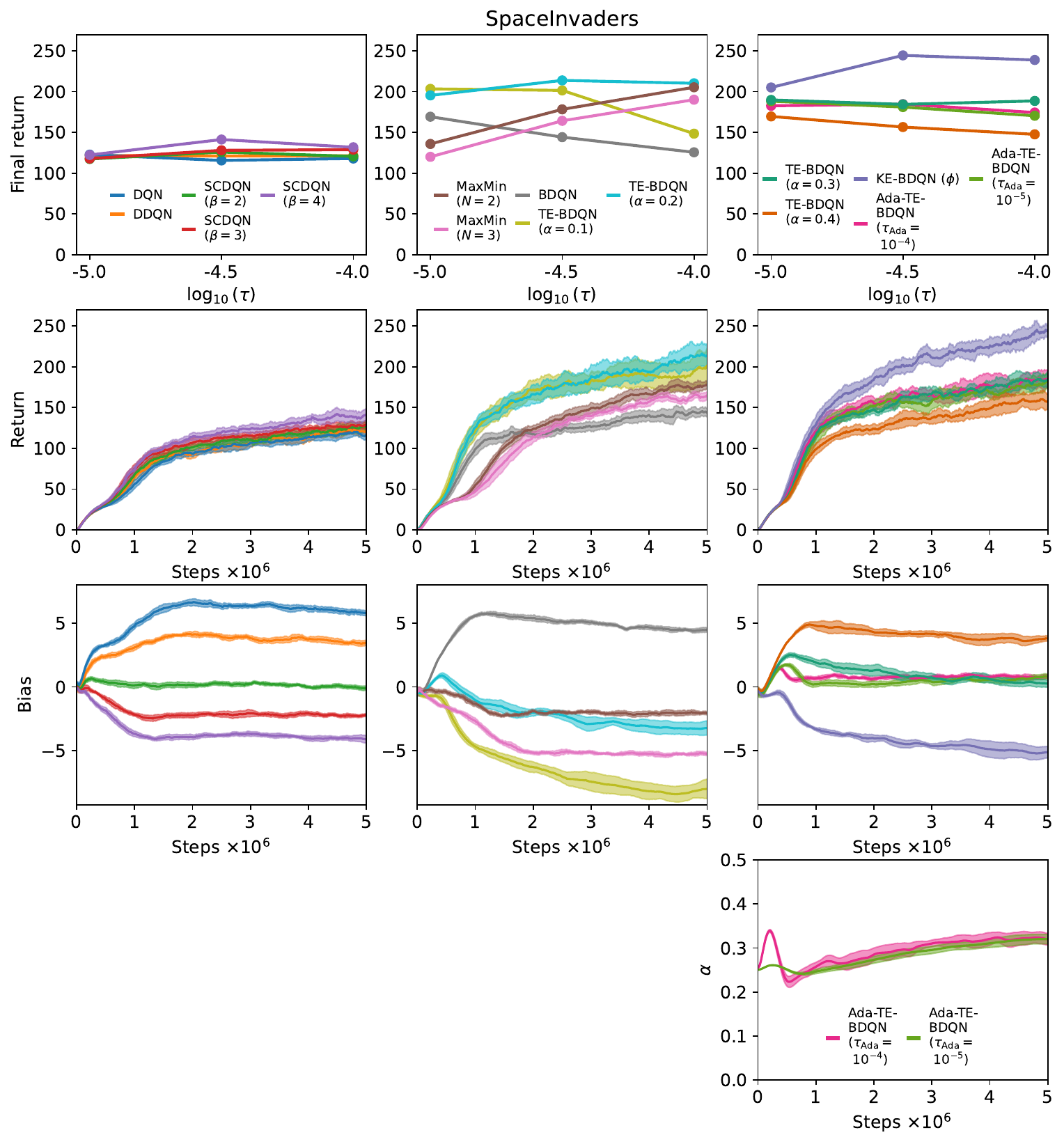}
    \caption{Algorithm comparison on SpaceInvaders. \rd{The first row shows the final return of different learning rates. The second and third row show the return and bias over time for $\tau = 10^{-4.5}$. The dynamic behavior of the $\alpha$ for the Ada-TE-BDQN is displayed in row four. Regarding algorithms, the left column includes the DQN, DDQN, and SCDQN; the middle column displays the MaxMin DQN, BDQN, and two TE-BDQNs; and the right column contains the remaining TE-BDQNs, the KE-BDQN, and the Ada-TE-BDQN results.}}
    \label{fig:MinAtar_SpaceInvaders}
\end{figure}

To check the robustness of the algorithms, we analyze three different learning rates for each environment and algorithm: $\tau \in \{10^{-5}, 10^{-4.5}, 10^{-4}\}$. Every $10\,000$ steps during an experiment, we average the return of 10 test episodes. For the BDQN and its variants, the majority vote of the ensemble is applied. Additionally, we run for all algorithms bias estimation episodes from random initial states sampled from the replay buffer, following Algorithm \ref{algo:Bias_estimation}. The number of those episodes are 10 for the DQN, DDQN, and SCDQN, while we run only 3 episodes for each head of the BDQN-based algorithms due to computation time. We repeat all experiments for ten independent runs, exponentially smooth the results for clarity, and include $95\%$ point-wise confidence intervals over the runs.

Figures \ref{fig:MinAtar_Asterix} - \ref{fig:MinAtar_SpaceInvaders} depict the results. \rd{We show the final return across learning rates in the first row of each plot, while the second and third rows contain the return and bias plots, respectively, for $\tau = 10^{-4.5}$. Lastly, the fourth row displays the dynamic behavior of the $\alpha$ for the Ada-TE-BDQN. \rdd{Please note that the bias and return curves for $\tau \in \{10^{-5}, 10^{-4}\}$ are provided in \ref{appendix:Minatar_further}.} The BDQN-based algorithms are generally comparable to the MaxMin DQN and outperform their competitors DQN, DDQN, and SCDQN. Especially the KE-BDQN and Ada-TE-BDQN show a robust performance across environments, although the algorithms' variances are relatively high in Seaquest. As expected, the DQN is affected by massive overestimations, while the DDQN can reduce the $Q$-estimates in comparison. Although the DE theoretically underestimates the MEV, the DDQN still offers a positive bias in the experiments. This observation is in line with \cite{van2016deep} and might be explained by the fact that the DE, as introduced in Section \ref{subsec:DE}, would require two independent networks. However, the DDQN, as commonly implemented, uses the main and target networks for action evaluation and selection, respectively, and these two networks are time-delayed copies of each other and thus are not independent.}

As theoretically discussed in prior sections, a larger $\alpha$ in the TE-BDQN yields larger $Q$-estimates and, consequently, a larger estimation bias. The adaptive mechanism of Ada-TE-BDQN, especially for $\tau_{\rm Ada} = 10^{-4}$, results in approximately unbiased action-value estimates. Throughout environments, the $\alpha$ of the Ada-TE-BDQN mostly increases during training but stays on moderate values in approximately $[0.2, 0.4]$. Large values with $\alpha \approx 0.5$ are not achieved even in later stages of training, indicating the criticality of the ME combined with function approximation.

\subsection{Discussion}
The experimental results confirm that we can embed statistical thinking in form of the TE/KE into value-based RL methods to control their estimation bias. In addition, the experiments support the finding of \cite{lan2020maxmin} that unbiased $Q$-estimation does not necessarily translate into the best return performance. For example, in SpaceInvaders\rd{, depicted in Figure \ref{fig:MinAtar_SpaceInvaders}}, the KE-BDQN is return-wise the strongest algorithm despite its severe negative bias. However, the TE-BDQN ($\alpha = 0.1$) offers an even lower bias but cannot match the return of the KE-BDQN. There seems to be a critical level - or path over time - of estimation bias for a given MDP, which yields maximum performance. Careful selection of a bias control parameter like $\alpha$ for the TE or the kernel function for the KE thus constitutes a crucial component in designing temporal-difference algorithms.

Analyzing the behavior of the algorithms in more detail, we see that the estimated bias \emph{changes over time}. With random initialization of the networks and a couple of zero-return episodes, all algorithms' bias is shortly approximately zero. As soon as some non-zero rewards are observed, the different target specifications affect the update routine and result in severely different bias plots over time. Besides the Ada-TE-BDQN, each algorithm reveals its tendency towards over- or underestimation, although exceptions are possible. For example, the TE-BDQN ($\alpha = 0.3$) offers slight overestimations during the first three million steps in Breakout\rd{, as shown in Figure \ref{fig:MinAtar_Breakout},} before shifting towards underestimation. Importantly, \rdd{\textbf{none}} of the non-adaptive algorithms shows reliable convergence to zero-bias as training proceeds, which agrees with the observations of \cite{van2016deep}. Finally, we summarize the core findings of our investigation:

\begin{enumerate}
    \item \emph{Absolute bias minimization does not equal return maximization.} In order to maximize performance in a real application, different bias control configurations should be considered, for which the TE/KE-BDQN build a flexible framework.
    \item \emph{Approximately unbiased estimation offers a robust baseline across tasks.} Although it is not always the return-maximizing choice, using a scheme for approximately unbiased value-estimation appears more robust across tasks than fixing a particular bias control parameter. The Ada-TE-BDQN is a powerful candidate for such a scheme \rd{since it almost achieves zero bias during the MinAtar experiments.}
    \item \emph{The compatibility between bias control algorithms and exploration schemes requires systematic analysis.} The impact on exploration most likely constitutes an essential factor in the occasional return-improvement of a biased procedure over an unbiased one \citep{liang2021temporal}. Further study needs to generate insights on how these components interact and, crucially, whether the algorithmic approaches are compatible. Can we use the Ada-TE-BDQN with a modified exploration scheme to boost return performance? Do we maintain unbiased action-value estimation in this process? Can we achieve or even improve on the return peaks of a fine-tuned bias control configuration in this manner? Exploration and action-value estimation are even in off-policy RL not necessarily orthogonal and thus constitute a crucial path for future research. \rdd{We emphasize that similar considerations are prevalent in the bandit literature \citep{lattimore2020bandit, slivkins2019introduction}, whose insights might be leveraged to further analyze the interplay of exploration and action-value estimation.}
\end{enumerate}

\section{Related Works}\label{sec:related_work}
Next to \cite{hasselt2010double}, \cite{d2016estimating}, \rd{\cite{lan2020maxmin}}, and \cite{zhu2021self}, several proposals have been made to further tackle the issue of estimation bias in temporal-difference algorithms. \cite{zhang2017weighted} proposed a hybrid between the ME and the DE called Weighted Double Estimator. It relies on a hyperparameter on the positive real axis, for which the authors propose a heuristic based on empirical experiences. \cite{lee2013bias} proposed Bias-corrected $Q$-Learning, which incorporates a correction term depending on the reward variance. \rd{The Randomized Ensembled Double $Q$-Learning (REDQ, \citealt{chen2021randomized}) is an extension of MaxMin $Q$-Learning \citep{lan2020maxmin} and applies a minimization operator over a subset of the ensemble.} The Action-Candidate based Clipped Double Estimator \citep{jiang2021action} extends the DE by creating a so-called candidate set of indices of which the maximizing one will be picked. Furthermore, the Clipped Double Estimator of \cite{fujimoto2018addressing} and the Truncated Quantile Critic (TQC, \citealt{kuznetsov2020controlling}) algorithm are relevant contributions to addressing the overestimation issue in actor-critic frameworks. Finally, \cite{lee2021sunrise} \rd{pursued} a re-weighting strategy of sampled transitions based on uncertainty estimates from an ensemble. Apart from methodological extensions, \cite{chen2021investigation} recently reported that a lower learning rate or an adequate schedule could also avoid the massive overestimations of $Q$-Learning. However, lowering the learning rate can come at the expense of impractically slow learning, as seen in our Breakout experiments \rd{in Figure \ref{fig:MinAtar_Breakout}}, and constitutes thus not a practical option to address the issue of action-value estimation.

Recently, some proposals have been made to minimize the estimation bias of temporal-difference algorithms through online parameter adjustments in the spirit of the Ada-TE-BDQN. \cite{liang2021temporal} expand the work of \cite{fox2016taming}, \cite{fox2019toward} by using an ensemble to adjust the temperature parameter in a maximum entropy framework. \cite{kuznetsov2021automating} and \cite{dorka2021adaptively} introduce adaptive variants of the TQC by adjusting the number of quantiles to drop based on recent near on-policy trajectories. Finally, \cite{wang2021adaptive} generalize MaxMin $Q$-Learning and REDQ by changing the size of the subset of the ensemble on which the minimization operator is performed. The metric driving the adjustment is the ensemble's function approximation error since it is argued that high approximation error is connected with the overestimation of action-values.

\section{Conclusion}\label{sec:conclusion}
Reinforcement learning as a domain of artificial intelligence has made significant breakthroughs in a diverse set of real-world applications, particularly in the last decade. A key issue of frequently applied temporal-difference algorithms is the propagation of biased action-value estimates. We address this topic by proposing the $T$-Estimator and the $K$-Estimator for the underlying problem of estimating the maximum expected value of random variables. Both estimators are easy to compute and allow to flexibly interpolate between over- and underestimation bias, leading to promising modifications of $Q$-Learning and the Bootstrapped DQN algorithm. Coupled with the dynamic selection procedure of the significance level of TE, our work constitutes an important step towards unbiased estimation of action-values with function approximation.

In future research, we will analyze the discussed interplay of action-value estimation and exploration. As methodological extensions, we will investigate possibilities to extend the two-sample testing procedures into continuous \rd{action} spaces to modify policy gradient methods because the latter constitute an elementary class of methods in several application domains. Furthermore, next to the considered procedure, there are alternative approaches for uncertainty quantification in the neural network scenario. For example, the regularization technique dropout \citep{srivastava2014dropout} \rd{can} be applied similar to \cite{d2021gaussian} to obtain the required variance estimate for the newly proposed algorithms, and the Bootstrapped DQN \rd{can} be enhanced by adding random prior functions \citep{osband2018randomized}. Finally, our work estimates the \rd{bias for given $Q$-estimates via Algorithm \ref{algo:Bias_estimation}, leading to one scalar bias estimate for the whole state-action distribution. While we analyze how this scalar changes throughout training, we do not delve into a detailed differentiation of how this bias is distributed across the state-action space at a specific point during training. We acknowledge that assessing complex MDPs in this fashion might result in an over-simplification. In particular, this line of investigation might lead to more tailored solutions, and one could consider, \rdd{for example}, an individual significance level of the TE for different regions of the state-action space.}

\section*{Acknowledgments}
We would like to thank Niklas Paulig for fruitful discussions in the early stages of this work. Furthermore, we are grateful to the Center for Information Services and High Performance Computing at TU Dresden for providing its facilities for high throughput calculations. \rd{Moreover, we appreciate the valuable feedback of two anonymous reviewers, which helped to improve this paper thoroughly. Finally, we would like to thank the participants of the Conference on Reinforcement Learning and Decision Making (RLDM) 2022, the German Probability and Statistics Days (GPSD) 2023, and the Conference on Computational Statistics (COMPSTAT) 2023 for their fruitful feedback on this work.}


\bibliographystyle{elsarticle-harv} 
\bibliography{bib}

\newpage
\appendix

\gdef\thesection{Appendix \Alph{section}}
\section{Analytic forms for Section \ref{sec:Analysis_Bias_Var_MSE}}
\label{appendix:analytic_forms}

\gdef\thesection{\Alph{section}}
\subsection{Maximum Estimator}
Consider the following setup: $M=2$, $X_1 \sim \mathcal{N}(\mu_1, \sigma^2)$, $X_2 \sim \mathcal{N}(\mu_2, \sigma^2)$, and given sample sizes $|S_1|$, $|S_2|$, from which follows: $\hat{\mu}_1 \sim \mathcal{N}(\mu_1, \frac{\sigma^2}{|S_1|})$ and $\hat{\mu}_2 \sim \mathcal{N}(\mu_2, \frac{\sigma^2}{|S_2|})$. Regarding the ME, we use the expectation in (\ref{eq:ME_exp}) to compute the bias. However, we can alternatively use the following closed-form solutions \citep{nadarajah2008exact}:
\begin{small}
\begin{align*}
    \operatorname{E}\left[\max(\hat{\mu}_1, \hat{\mu}_2)\right] &= \mu_1 \Phi\left(\frac{\mu_1 -\mu_2}{\theta}\right) + \mu_2 \Phi\left(\frac{\mu_2 -\mu_1}{\theta}\right) + \theta \phi\left(\frac{\mu_1 -\mu_2}{\theta}\right),\\
    \operatorname{E}\left\{\left[\max(\hat{\mu}_1, \hat{\mu}_2)\right]^2\right\} &= \left(\frac{\sigma^2}{|S_1|} + \mu_{1}^{2}\right) \Phi\left(\frac{\mu_1 -\mu_2}{\theta}\right) + \left(\frac{\sigma^2}{|S_2|} + \mu_{2}^{2}\right) \Phi\left(\frac{\mu_2 -\mu_1}{\theta}\right)\\
    &+ (\mu_1 + \mu_2) \theta\phi\left(\frac{\mu_1 -\mu_2}{\theta}\right),
\end{align*}
\end{small}
where $\phi$ is the standard Gaussian pdf and $\theta = \sqrt{\frac{\sigma^2}{|S_1|} + \frac{\sigma^2}{|S_2|}}$. The expectation of the squared ME can be used to compute the variance:
\begin{equation*}
    \operatorname{Var}\left[\max(\hat{\mu}_1, \hat{\mu}_2)\right] = \operatorname{E}\left\{\left[\max(\hat{\mu}_1, \hat{\mu}_2)\right]^2\right\} - \operatorname{E}\left[\max(\hat{\mu}_1, \hat{\mu}_2)\right]^2.
\end{equation*}

\subsection{Double Estimator}
The expectation of the DE is given in (\ref{eq:DE_exp}), which directly yields the bias. As already mentioned, what we refer to as the DE throughout the paper is actually the CVE whenever possible, thus we compute the variance of the latter for this example. For notation, we use \rd{$\hat{\mu}_{i}^{A} = \hat{\mu}_{i}(S_{i}^{A})$} and $\hat{f}_{i}^{A}$, $\hat{F}_{i}^{A}$ for the pdf and cdf of $\hat{\mu}_{i}^{A}$, respectively, and similar\rdd{ly} for $S^B$. We assume that the sample $S$ is split evenly between \rd{$S^A$ and $S^B$}, so that the theoretical mean distribution $\hat{f}_{i}^{A}$ equals $\hat{f}_{i}^{B}$. The DE estimate when index selection is performed on subsample $S^A$ is denoted with $\hat{\mu}^{DE, A}_{*}$, and when selecting based on $S^B$ with $\hat{\mu}^{DE, B}_{*}$. It follows:
\begin{align}\label{eq:Var_DE}
    \operatorname{Var}\left(\hat{\mu}^{\rdd{\textrm{CVE}}}_{*}\right) &= \operatorname{Var}\left(\frac{\hat{\mu}^{DE, A}_{*} + \hat{\mu}^{DE, B}_{*}}{2} \right) \nonumber\\
    &= \frac{1}{4} \operatorname{Var}\left(\hat{\mu}^{DE, A}_{*}\right) + \frac{1}{4} \operatorname{Var}\left(\hat{\mu}^{DE, B}_{*}\right) + \frac{1}{2} \operatorname{Cov}\left(\hat{\mu}^{DE, A}_{*}, \hat{\mu}^{DE, B}_{*}\right) \nonumber\\
    &=  \frac{1}{2} \operatorname{Var}\left(\hat{\mu}^{DE, A}_{*}\right) + \frac{1}{2} \operatorname{Cov}\left(\hat{\mu}^{DE, A}_{*}, \hat{\mu}^{DE, B}_{*}\right),
\end{align}
because $\operatorname{Var}\left(\hat{\mu}^{DE, A}_{*}\right) = \operatorname{Var}\left(\hat{\mu}^{DE, B}_{*}\right)$. Using definition:
\begin{equation}\label{eq:Var_DEA}
    \operatorname{Var}\left(\hat{\mu}^{DE, A}_{*}\right) = \operatorname{E}\left[ \left(\hat{\mu}^{DE, A}_{*}\right)^2 \right] - \operatorname{E}\left[\hat{\mu}^{DE, A}_{*}\right]^2,
\end{equation}
in which:
\begin{small}
\begin{equation*}
     \operatorname{E}\left[ \left(\hat{\mu}^{DE, A}_{*}\right)^2 \right] =  \operatorname{E}\left[ \left(\hat{\mu}^{B}_{1}\right)^2 \right] \int_{-\infty}^{\infty} \hat{f}_{1}^{A}(x) \hat{F}_{2}^{A}(x)dx + \operatorname{E}\left[ \left(\hat{\mu}^{B}_{2}\right)^2 \right] \int_{-\infty}^{\infty} \hat{f}_{2}^{A}(x) \hat{F}_{1}^{A}(x)dx,
\end{equation*}
\end{small}
where we compute: $\operatorname{E}\left[ \left(\hat{\mu}^{B}_{1}\right)^2 \right] = \operatorname{Var}\left(\hat{\mu}^{B}_{1}\right) + \operatorname{E}\left[\hat{\mu}^{B}_{1}\right]^2$; and $\operatorname{E}\left[ \left(\hat{\mu}^{B}_{2}\right)^2 \right]$ analogously, so that (\ref{eq:Var_DEA}) is complete. To compute the covariance in (\ref{eq:Var_DE}), we have:
\begin{equation*}
    \operatorname{Cov}\left(\hat{\mu}^{DE, A}_{*}, \hat{\mu}^{DE, B}_{*}\right) = \operatorname{E}\left[\hat{\mu}^{DE, A}_{*} \hat{\mu}^{DE, B}_{*} \right] - \operatorname{E}\left[\hat{\mu}^{DE, A}_{*} \right] \operatorname{E}\left[\hat{\mu}^{DE, B}_{*} \right],
\end{equation*}
with the expectation of the product being\\

\resizebox{0.97\linewidth}{!}{
\begin{math}
\begin{aligned}
     \hspace{-0.91cm}\operatorname{E}\left[\hat{\mu}^{DE, A}_{*} \hat{\mu}^{DE, B}_{*} \right] &= \operatorname{E}\left\{ \left[\mathcal{I}\left( \hat{\mu}_{1}^{A} > \hat{\mu}_{2}^{A}\right) \hat{\mu}_{1}^{B} +
     \mathcal{I}\left( \hat{\mu}_{1}^{A} \leq \hat{\mu}_{2}^{A}\right) \hat{\mu}_{2}^{B}\right] 
     \left[\mathcal{I}\left( \hat{\mu}_{1}^{B} > \hat{\mu}_{2}^{B}\right) \hat{\mu}_{1}^{A} +
     \mathcal{I}\left( \hat{\mu}_{1}^{B} \leq \hat{\mu}_{2}^{B}\right) \hat{\mu}_{2}^{A}\right]
     \right\}\\
     &= \operatorname{E}\left[\mathcal{I}\left(\hat{\mu}_{1}^{A} > \hat{\mu}_{2}^{A}\right)\hat{\mu}_{1}^{A} \right]
     \operatorname{E}\left[\mathcal{I}\left(\hat{\mu}_{1}^{B} > \hat{\mu}_{2}^{B}\right)\hat{\mu}_{1}^{B} \right] +
     \operatorname{E}\left[\mathcal{I}\left(\hat{\mu}_{1}^{A} > \hat{\mu}_{2}^{A}\right)\hat{\mu}_{2}^{A} \right]
     \operatorname{E}\left[\mathcal{I}\left(\hat{\mu}_{1}^{B} \leq \hat{\mu}_{2}^{B}\right)\hat{\mu}_{1}^{B} \right]\\
     &+ \operatorname{E}\left[\mathcal{I}\left(\hat{\mu}_{1}^{A} \leq \hat{\mu}_{2}^{A}\right)\hat{\mu}_{1}^{A} \right]
     \operatorname{E}\left[\mathcal{I}\left(\hat{\mu}_{1}^{B} > \hat{\mu}_{2}^{B}\right)\hat{\mu}_{2}^{B} \right] +
     \operatorname{E}\left[\mathcal{I}\left(\hat{\mu}_{1}^{A} \leq \hat{\mu}_{2}^{A}\right)\hat{\mu}_{2}^{A} \right]
     \operatorname{E}\left[\mathcal{I}\left(\hat{\mu}_{1}^{B} \leq \hat{\mu}_{2}^{B}\right)\hat{\mu}_{2}^{B} \right].
\end{aligned}
\end{math}
}
\bigskip

This expression is simplified using $\mathcal{I}\left( \hat{\mu}_{1}^{A} > \hat{\mu}_{2}^{A}\right) = 1 - \mathcal{I}\left( \hat{\mu}_{1}^{A} \leq \hat{\mu}_{2}^{A}\right)$ to get
\begin{equation*}
    \operatorname{E}\left[\hat{\mu}^{DE, A}_{*} \hat{\mu}^{DE, B}_{*} \right] = \mu_{1}^{2} + 2 I_1 (\mu_2 - \mu_1) + (I_1 - I_2)^2,
\end{equation*}
where
\vspace{-0.1cm}
\begin{align*}
    I_1 &= \operatorname{E}\left[\mathcal{I}(\hat{\mu}_{1}^{A} \leq \hat{\mu}_{2}^{A}) \hat{\mu}_{1}^{A} \right] = \mu_1 - \int_{-\infty}^{\infty} x \hat{f}_{1}^{A}(x)\hat{F}_{2}^{A}(x) dx,\\
    I_2 &=  \operatorname{E}\left[\mathcal{I}(\hat{\mu}_{1}^{A} \leq \hat{\mu}_{2}^{A}) \hat{\mu}_{2}^{A} \right] = \int_{-\infty}^{\infty} x \hat{f}_{2}^{A}(x)\hat{F}_{1}^{A}(x) dx.
\end{align*}

\subsection{T-Estimator and K-Estimator}\label{app:TE_KE_subapp}
Regarding the expectation of the KE, we have:
\smallskip
\begin{small}
\begin{align*}
    \operatorname{E}&\left[\hat{\mu}^{\rdd{\textrm{KE}}}_*\right] = \operatorname{E}\left[\hat{\mu}^{\rdd{\textrm{KE}}}_* \mathcal{I}(\hat{\mu}_1 > \hat{\mu}_2)\right] + \operatorname{E}\left[\hat{\mu}^{\rdd{\textrm{KE}}}_* \mathcal{I}(\hat{\mu}_1 \leq \hat{\mu}_2)\right]\\
    &=\operatorname{E}\left[ \left\{\left[\sum_{j=1}^{2} \kappa\left( \frac{\hat{\mu}_j - \hat{\mu}_1}{\theta_{1j}} \right)\right]^{-1}\sum_{j=1}^{2} \kappa\left( \frac{\hat{\mu}_j - \hat{\mu}_1}{\theta_{1j}} \right)\hat{\mu}_j\right\} \mathcal{I}\left(\hat{\mu}_1 > \hat{\mu}_2\right) \right]\\
    &+ \operatorname{E}\left[ \left\{\left[\sum_{j=1}^{2} \kappa\left( \frac{\hat{\mu}_j - \hat{\mu}_2}{\theta_{2j}} \right)\right]^{-1}\sum_{j=1}^{2} \kappa\left( \frac{\hat{\mu}_j - \hat{\mu}_2}{\theta_{2j}} \right)\hat{\mu}_j\right\} \mathcal{I}\left(\hat{\mu}_1 \leq \hat{\mu}_2\right) \right]\\
    &= \operatorname{E}\left\{\frac{1}{\kappa(0) + \kappa\left(\frac{\hat{\mu}_2 - \hat{\mu}_1}{\theta_{12}}\right)} \left[ \kappa(0) \hat{\mu}_1 + \kappa\left(\frac{\hat{\mu}_2 - \hat{\mu}_1}{\theta_{12}}\right)\hat{\mu}_2 \right] \mathcal{I}(\hat{\mu}_1 > \hat{\mu}_2)\right\}\\
    &+ \operatorname{E}\left\{\frac{1}{\kappa\left(\frac{\hat{\mu}_1 - \hat{\mu}_2}{\theta_{21}}\right) + \kappa(0)} \left[ \kappa\left(\frac{\hat{\mu}_1 - \hat{\mu}_2}{\theta_{21}}\right)\hat{\mu}_1 + \kappa(0) \hat{\mu}_2 \right] \mathcal{I}(\hat{\mu}_1 \leq \hat{\mu}_2)\right\}\\
    &= \int_{-\infty}^{\infty} \int_{-\infty}^{x_1} \frac{1}{\kappa(0) + \kappa\left( \frac{x_2-x_1}{\theta_{12}} \right)} \left[\kappa(0) x_1 + \kappa\left( \frac{x_2-x_1}{\theta_{12}} \right) x_2 \right] \hat{f}_1(x_1) \hat{f}_2(x_2) dx_2 dx_1\\
    &+ \int_{-\infty}^{\infty} \int_{-\infty}^{x_2} \frac{1}{\kappa\left( \frac{x_1-x_2}{\theta_{21}} \right) + \kappa(0)} \left[ \kappa\left( \frac{x_1-x_2}{\theta_{21}} \right) x_1 + \kappa(0) x_2 \right] \hat{f}_1(x_1) \hat{f}_2(x_2) dx_1 dx_2,
\end{align*}
\end{small}
where $\theta_{ij} = \sqrt{\frac{\sigma^2}{|S_i|} + \frac{\sigma^2}{|S_j|}}$ and $\hat{f}_i$ is the pdf of $\hat{\mu}_i$. For the variance, we can compute $\operatorname{E}\left\{[\hat{\mu}^{\rdd{\textrm{KE}}}_*]^2\right\}$ analogously. Since the TE is a special case of the KE with $\kappa(T) = \mathcal{I}(T \geq z_{\alpha})$, the above formula is also applicable for TE. \rdd{We emphasize that we numerically approximate all integrals when using the formulas of \ref{appendix:analytic_forms}.}

\setcounter{figure}{0}
\gdef\thesection{Appendix \Alph{section}}
\section{\rd{Expectation of the KE for known variances}}
\label{appendix:Exp_TEKE}
\rd{In the following, we detail an expression for the expectation of the KE in case the variances of the underlying random variables are known. Since the KE generalizes the TE, the expression similarly holds for the TE.}

\begin{Corollary}
    \rd{The expectation of the KE is:
\begin{small}
\begin{align*}
    \operatorname{E}[\hat{\mu}^{\rdd{\textnormal{KE}}}_*] &= \sum_{i=1}^{M} \int_{-\infty}^{\infty} \int_{-\infty}^{x_i} \cdots \int_{-\infty}^{x_i} \\
    &\left[ \sum_{j=1}^{M} \kappa\left(\frac{x_j - x_i}{\theta_{ij}}\right) \right]^{-1} \left[ \sum_{j=1}^{M} \kappa\left(\frac{x_j - x_i}{\theta_{ij}}\right) x_j  \right] \left[\prod_{j=1}^{M} \hat{f}_j(x_j) \right] \left[\prod\limits_{\substack{j=1 \\ j\neq i}}^M dx_j \right] dx_i,
\end{align*}
\end{small}
where $\theta_{ij} = \sqrt{\frac{\sigma_i^2}{|S_i|} + \frac{\sigma_j^2}{|S_j|}}$, $\hat{f_j}$ is the pdf of $\hat{\mu}_j$, being asymptotically normal, and $\sigma_i^2$ are known.}
\end{Corollary}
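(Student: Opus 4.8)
The plan is to mirror the derivations of the ME expectation in (\ref{eq:ME_exp}) and of the TE expectation in (\ref{eq:E_TE}), by partitioning the sample space according to which sample mean realises the ME. Writing $i^*$ for the index with $\hat{\mu}_{i^*} = \max_j \hat{\mu}_j$, the events $\{i^* = i\}$ for $i = 1, \ldots, M$ form a partition up to a null set, since ties among the continuously distributed $\hat{\mu}_1,\ldots,\hat{\mu}_M$ occur with probability zero. I would therefore begin by writing
$$\operatorname{E}[\hat{\mu}^{KE}_*] = \sum_{i=1}^{M} \operatorname{E}\left[\hat{\mu}^{KE}_* \, \mathcal{I}(i^* = i)\right],$$
so that the outer sum over $i$ in the claimed expression corresponds exactly to conditioning on the identity of the maximising variable.

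On the event $\{i^* = i\}$ we have $\hat{\mu}^{ME}_* = \hat{\mu}_i$, so each test statistic in (\ref{eq:K-Estimator}) reduces to $T_j = (\hat{\mu}_j - \hat{\mu}_i)/\theta_{ij}$; here the assumption that the variances $\sigma_i^2$ are known is what lets me replace the estimated denominators $\sqrt{\hat{\sigma}_i^2/|S_i| + \hat{\sigma}_j^2/|S_j|}$ by the deterministic $\theta_{ij}$. On this event the KE thus takes the explicit form
$$\hat{\mu}^{KE}_* = \left[\sum_{j=1}^{M}\kappa\left(\frac{\hat{\mu}_j - \hat{\mu}_i}{\theta_{ij}}\right)\right]^{-1} \sum_{j=1}^{M}\kappa\left(\frac{\hat{\mu}_j - \hat{\mu}_i}{\theta_{ij}}\right)\hat{\mu}_j,$$
where the $j = i$ summand contributes the fixed weight $\kappa(0)$, consistently with $(\hat{\mu}_i - \hat{\mu}_i)/\theta_{ii} = 0$.

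Next I would exploit the independence of $\hat{\mu}_1, \ldots, \hat{\mu}_M$ (inherited from the independence of $X_1,\ldots,X_M$). Conditioning on $\hat{\mu}_i = x_i$, integrated against $\hat{f}_i(x_i)\,dx_i$, the constraint $\{i^* = i\}$ is precisely $\hat{\mu}_j \le x_i$ for all $j \neq i$, which I encode as the integration limits $\int_{-\infty}^{x_i}$ for each $j \neq i$. The joint density factors as $\prod_{j=1}^M \hat{f}_j(x_j)$, and substituting $\hat{\mu}_j = x_j$ into the ratio above reproduces the integrand of the claimed formula term by term. Summing over $i$ then gives the asserted expression, so the derivation is the term-by-term generalisation to arbitrary $M$ of the base case $M = 2$ worked out in \ref{appendix:analytic_forms}.

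The step to be most careful about is that the normalising factor $[\sum_j \kappa(T_j)]^{-1}$ is itself random and coupled to the numerator, so the expectation of the ratio does \emph{not} split into a ratio of expectations; the entire ratio must remain inside the multivariate integral against the joint density, which is exactly why the nested integrals cannot be simplified further. This coupling is also the reason the assumption of known variances is needed: with estimated $\hat{\sigma}^2$ the denominators $\theta_{ij}$ would carry additional randomness through the sample variances, whose joint law with the sample means would then have to be propagated through both the kernel weights and the normaliser, destroying the clean product form of the joint density.
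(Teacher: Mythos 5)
Your proposal is correct and takes essentially the same route as the paper: the paper's justification for this corollary is precisely the generalization of the $M=2$ computation in \ref{appendix:analytic_forms}, which, as you do, decomposes the expectation over the (almost-surely disjoint) events $\{\hat{\mu}_i = \max_j \hat{\mu}_j\}$, uses the known variances to make the denominators $\theta_{ij}$ deterministic, and integrates the full ratio against the factored joint density $\prod_j \hat{f}_j(x_j)$ with limits $(-\infty, x_i]$ for $j \neq i$. Your closing observations --- that the random normalizer must stay inside the multivariate integral and that estimated variances would destroy the clean product form --- accurately capture why the formula takes this nested-integral shape and why the known-variance assumption is stated.
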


\begin{proof}
    \rd{Follows immediately from generalizing the derivation in Appendix \ref{app:TE_KE_subapp} from the case $M = 2$ to higher dimensions.}
\end{proof}

\setcounter{figure}{0}
\gdef\thesection{Appendix \Alph{section}}
\section{\rd{Optimizing the Example from Section \ref{sec:Analysis_Bias_Var_MSE}}}
\label{appendix:opt_Gaussian}
\rd{The chosen specification of the beta distribution in the example with two Gaussian random variables from \rdd{Figure \ref{fig:MSE_KE} of} Section \ref{sec:Analysis_Bias_Var_MSE} results in a rather strong underestimation for large $\mu_1 - \mu_2$. To find a better fitting parametrization, we numerically solved the optimization problem of minimizing the squared bias for the range of $\mu_1 \in [0,5]$ over the parameters $\mathfrak{a}$, $\mathfrak{b}$ of the $\mathcal{B}_{\mathfrak{a}, \mathfrak{b}}$ kernel. To enable comparability between the estimators, we run the identical optimization for the parameter $\lambda$ of the Gaussian kernel $\Phi_{\lambda}$ (deviating from the unit variance specification) and the level of significance of the TE. The optimized kernel functions alongside specifications from Figures \ref{fig:MSE_TE} and \ref{fig:MSE_KE} are depicted in Figure \ref{fig:Kernel_funcs}, while Figure \ref{fig:MSE_KE_TE_opt} displays the performance of the optimized estimators.}

\rd{The functions in Figure \ref{fig:Kernel_funcs} are normalized to $[0,1]$ by division through $\kappa(0)$ of the respective kernel. Optimizing the standard deviation of the Gaussian cdf yields $\lambda \approx 0.84$, which is close to the unit variance specification. On the other hand, the bias-optimal value for the significance level of the TE is $\approx 0.14$. Regarding the $\mathcal{B}_{\mathfrak{a}, \mathfrak{b}}$ specification, one needs to recall that the beta kernel is capable of approximating both the optimized TE and the optimized KE with the (non-standard) Gaussian cdf kernel. Following Figures \ref{fig:Kernel_funcs} and \ref{fig:MSE_KE_TE_opt}, the optimized TE is favorable in this scenario since the optimized beta cdf is in line with the optimized TE. \rdd{We emphasize that the numerical optimization of the beta CDF yields different solutions depending on the starting values. However, all solutions result in a distribution with zero variance.}}

\begin{figure}[!htb]
    \centering
    \includegraphics[width=0.5\linewidth]{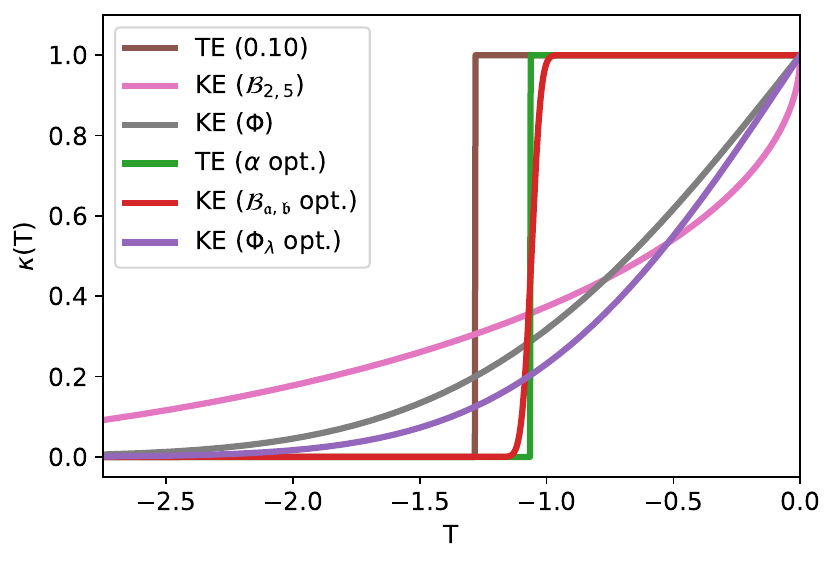}
    \caption{\rd{Original kernel functions and optimized specification for minimizing the squared bias in Figures \ref{fig:MSE_TE} and \ref{fig:MSE_KE}.}}
    \label{fig:Kernel_funcs}
\end{figure}

\begin{figure}[!htb]
\centering
\includegraphics[width=\textwidth]{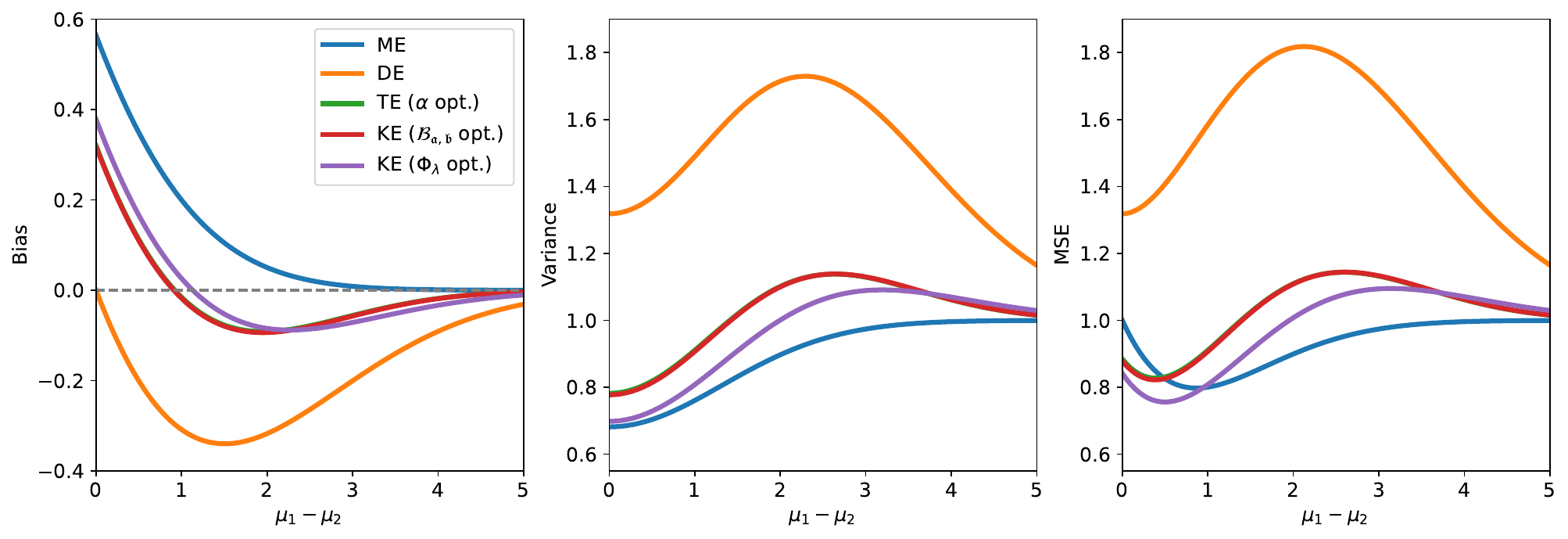}
\caption{\rd{Bias, variance, and MSE for different estimators of the MEV in the case of two Gaussian random variables using the optimized TE and KE's shown in Figure \ref{fig:Kernel_funcs}. The optimized beta kernel opts for a similar solution as the optimized TE.}}
\label{fig:MSE_KE_TE_opt}
\end{figure}

\gdef\thesection{Appendix \Alph{section}}
\section{\rd{Proof of Convergence of TE/KE-$Q$-Learning}}\label{appendix:convergence_proof}
\rd{We sketch the proof of \rdd{Theorem} \ref{pro:KE_Q_convergence} for the discounted case by building on the works of \cite{singh2000convergence}, \cite{hasselt2010double}, and \cite{fujimoto2018addressing}. In particular, we need the following Lemma of \cite{singh2000convergence}:}

\begin{Lemma}\label{lemma:conv_stoch_approx}
    \rd{Consider a stochastic process $(\zeta_t, \Delta_t, F_t)$, $t \geq 0$ where $\zeta_t, \Delta_t, F_t: X \rightarrow \mathbb{R}$ satisfy the equation:
    \begin{equation*}
        \Delta_{t+1}(x_t) = [1- \zeta_t(x_t)]\Delta_t(x_t) + \zeta_t(x_t) F_t(x_t),
    \end{equation*}
    where $x_t \in X$\rdd{, with} $t = 0,1,2,\ldots$\rdd{, and $\bigcup_{t=0}^{\infty} x_t = X$.} Let $P_t$ be a sequence of increasing $\sigma$-fields such that $\zeta_0$ and $\Delta_0$ are $P_0$-measurable and $\zeta_t, \Delta_t,$ and $F_{t-1}$ are $P_t$-measurable, $t=1,2,\ldots$. Assume that the following hold:
    \begin{enumerate}
        \item The set $X$ is finite.
        \item $\zeta_t(x_t) \in [0,1]$, $\sum_t \zeta_t(x_t) = \infty$, $\sum_t[\zeta_t(x_t)]^2 < \infty$ with probability 1 and $\rdd{\forall x \in X \setminus \{x_t\}}: \zeta\rdd{_t}(x) = 0$.
        \item $\rdd{\|} \operatorname{E}[F_t\vert P_t]\rdd{\|_\infty} \leq k \rdd{\|}\Delta_t\rdd{\|_\infty} + c_t$ where $k \in [0,1)$ and $c_t$ converges to 0 with probability 1. \rdd{Here $\rdd{\|}y\rdd{\|_\infty} \rdd{= \max_{i}(\vert y_i\vert)}$ \rdd{for some $y \in \mathbb{R}^n$} denotes the maximum norm.}
        \item $\operatorname{Var}[F_t(x\rdd{_t})\vert P_t] \leq K(1+ k\rdd{\|} \Delta_t \rdd{\|_\infty})^2,$ where $K$ is some constant.
    \end{enumerate}
    Then $\Delta_t$ converges to 0 with probability 1.}
\end{Lemma}

\rd{Moreover, we use the following additional lemmata:}

\begin{Lemma}\label{lemma:n_eff_num}
    \rd{Let $\tau_t$, $t \geq 0$, be a random sequence with $\mathbb{P}(\tau_t \in [0,1]) = 1$ $\forall t$. Define the random process $x_{t+1} = x_t (1- \tau_t) + \tau_{t}$ for $t \geq 1$ and some $x_0 \in [\varepsilon,1]$, where $0 < \varepsilon \leq 1$. Then $\mathbb{P}(x_t \in [\varepsilon, 1]) = 1, \rdd{\forall t \ge 0}$ \rdd{holds.}}
\end{Lemma}
\begin{proof}
    \rd{We prove the lemma via induction. The statement is true per assumption for $x_0$. Now assume the statement holds for some $t \geq 0$: ${\rdd{\mathbb{P}(}x_t \in [\varepsilon, 1]\rdd{) = 1}}$. Then follows: $x_{t+1} = (1-\tau_t) x_t + \tau_t \geq (1-\tau_t) \cdot \varepsilon + \tau_t = \varepsilon + \tau_t(1-\varepsilon) \geq \varepsilon$ \rdd{with probability 1}. Further, we have $x_{t+1} = (1-\tau_t) x_t + \tau_t \leq (1-\tau_t) \cdot 1 + \tau_t = 1$  \rdd{with probability 1}. Combining the bounds, we find that $\rdd{\mathbb{P}(}x_{t+1} \in [\varepsilon, 1]\rdd{) = 1}$, proving the induction step and the lemma.}
\end{proof}

\begin{Lemma}\label{lemma:n_eff_den}
   \rd{Let $\tau_t$, $t \geq 0$, be a random sequence with $\mathbb{P}(\tau_t \in [0,1]) = 1$ $\forall t$\rdd{,} $\mathbb{P}(\sum_t \tau_t = \infty) = 1$\rdd{, and $\mathbb{P}(\sum_t \tau_{t}^2 < \infty) = 1$}. Define the random process $x_{t+1} = x_t (1- \tau_t)^2 + \tau_{t}^2$ for $t \geq 1$ and some $x_0 > 0$. Then holds ${\mathbb{P}(\lim_{t \rightarrow \infty} x_t = 0)}=1$.}
\end{Lemma}
\begin{proof}
\rd{We begin by unfolding the recursively defined process using simple algebra as follows; compare \citet[Chapter 3]{jerri1996}:
\begin{equation}\label{eq:x_t_unfolded}
    x_t = x_0 \prod_{i=0}^{t-1}(1-\tau_i)^2 + \sum_{i=0}^{t-1}\tau_{i}^2 \prod_{j=i+1}^{t-1}(1-\tau_j)^2.
\end{equation}
Furthermore, we notice that $\mathbb{P}(x_t > 0) = 1$ for all $t \geq 0$ due to the conditions on $x_0$ and $\tau_t$. Thus, to prove that $\rdd{\mathbb{P}(}\lim_{t \rightarrow \infty} x_t = 0\rdd{) = 1}$, we need to show that each summand in (\ref{eq:x_t_unfolded}) converges to zero \rdd{with probability 1}. \rdd{At first}, we have to prove:
\begin{equation}\label{eq:lim_prod}
    \rdd{\mathbb{P}\Bigg(}\lim_{t \rightarrow \infty} \left[ \prod_{i=i_0}^{t} (1-\tau_i)^2 \right] = 0\rdd{\Bigg) = 1},
\end{equation}
for any $i_0 \in \mathbb{N}_0$, where $\mathbb{N}_0$ is the set of natural numbers including zero. To see this, we notice \rdd{that with probability 1}:
\begin{equation}
    0 \leq \prod_{i=i_0}^{t} (1-\tau_i)^2 \leq \prod_{i=i_0}^{t} (1-\tau_i) \leq \prod_{i=i_0}^{t} \exp(-\tau_i) = \exp\left(-\sum_{i=i_0}^{t}\tau_i \right),
\end{equation}
because $\exp(x) \geq 1+x$ for all $x \in \mathbb{R}$. It follows \rdd{with probability 1}:
\begin{equation}\label{eq:lim_last}
    0 \leq  \lim_{t \rightarrow \infty} \left[ \prod_{i=i_0}^{t} (1-\tau_i)^2 \right] \leq \lim_{t \rightarrow \infty} \left[\exp\left(-\sum_{i=i_0}^{t}\tau_i \right)\right] = 0,
\end{equation}
which proves (\ref{eq:lim_prod}). Note that the last equality in (\ref{eq:lim_last}) follows from the condition that $\mathbb{P}(\sum_t \tau_t = \infty) = 1$. \rdd{From (\ref{eq:lim_prod}) and the assumption that $\mathbb{P}(\sum_t \tau_{t}^2 < \infty) = 1$, it immediately follows that both summands in (\ref{eq:x_t_unfolded}) converge to zero with probability 1. Consequently,} we deduce ${\mathbb{P}(\lim_{t \rightarrow \infty} x_t = 0) = 1}$.}
\end{proof}

\rd{On this basis, we can prove \rdd{Theorem} \ref{pro:KE_Q_convergence}, which we restate in the following.}
\begin{Theorem}
    \rd{Let the following regularity conditions be fulfilled:
    \begin{enumerate}
    \item The MDP is finite.
    \item $\gamma \in [0,1).$
    \item The learning rates satisfy $\tau_t(s,a) \in [0,1]$, $\sum_t \tau_t(s,a) = \infty$, $\sum_t \tau_t^{2}(s,a) < \infty$ all with probability 1 for all $s \in \mathcal{S}, a \in \mathcal{A}$.
    \item The reward function is bounded.
    \item \rdd{Each state-action pair is visited infinitely often.}
    \end{enumerate}
    Then \rdd{the following} holds for the random sequence of action-value estimates $\hat{Q}^{*}_{t}$ generated by TE/KE-$Q$-Learning:
    $$\mathbb{P}\left[\lim_{t\rightarrow \infty}\hat{Q}^{*}_{t}(s,a)=Q^{*}(s,a)\right]=1 \quad \forall s \in \mathcal{S}, a \in \mathcal{A}.$$}
\end{Theorem}
\begin{Sketch}
\normalfont 
    \rd{The idea is to apply Lemma \ref{lemma:conv_stoch_approx} with $P_t = \{\hat{Q}^{*}_0, s_0, a_0, \tau_0, r_1, s_1, \ldots, s_t, a_t \}$, $X = \mathcal{S} \times \mathcal{A}$, $\Delta_t = \hat{Q}^{*}_t - Q^*$, $\rdd{\zeta_t = \tau_t}$\rdd{, and $F_t(s_t, a_t) = r_t + \gamma \operatorname{KE_a} \hat{Q}^{*}_{t}(s_{t+1}, a) - Q^*(s_t,a_t)$}. We first note that conditions 1 and 2 of Lemma \ref{lemma:conv_stoch_approx} hold by regularity conditions 1 and 3 of KE/TE-$Q$-Learning, respectively. Furthermore, condition 4 of Lemma \ref{lemma:conv_stoch_approx} is guaranteed by assuming a bounded reward function (regularity condition 4); see also the derivations in \cite{barber2023smoothed} for an explanation of this point. Finally, condition 3 of Lemma \ref{lemma:conv_stoch_approx} needs to be shown.}

    \rd{Following a similar route as \cite{fujimoto2018addressing}, we observe:
    \begin{align*}
        \Delta_{t+1}(s_t, a_t) &= \hat{Q}^{*}_{t+1}(s_t, a_t) - Q^*(s_t, a_t)\\
    &= \rdd{\hat{Q}^{*}_{t}(s_t, a_t)  +\tau_t(s_t,a_t)\left[r_t + \gamma \operatorname{KE_a} \hat{Q}^{*}_{t}(s_{t+1}, a) - \hat{Q}^{*}_{t}(s_t, a_t)\right]}\\
    &- \rdd{Q^*(s_t,a_t) + \tau_t(s_t,a_t)Q^*(s_t,a_t) - \tau_t(s_t,a_t)Q^*(s_t,a_t)}\\
        &= [1-\tau_t(s_t,a_t)] \left[\hat{Q}^{*}_{t}(s_t, a_t) - Q^*(s_t, a_t)\right]\\
        &+ \tau_t(s_t,a_t)\left[r_t + \gamma \operatorname{KE_a} \hat{Q}^{*}_{t}(s_{t+1}, a) - Q^*(s_t,a_t) \right]\\
        &= [1-\tau_t(s_t,a_t)]\Delta_{t}(s_t, a_t) + \tau_t(s_t,a_t) F_t(s_t,a_t),
    \end{align*}
    where $F_t(s_t, a_t) = r_t + \gamma \operatorname{KE_a} \hat{Q}^{*}_{t}(s_{t+1}, a) - Q^*(s_t,a_t) = F_{t}^{Q}(s_t,a_t) + c_t$ if we set $F_{t}^{Q}(s_t,a_t) = r_t + \gamma \max_a \hat{Q}^{*}_{t}(s_{t+1}, a) - Q^*(s_t,a_t)$ and $c_t = \gamma \operatorname{KE_a} \hat{Q}^{*}_{t}(s_{t+1}, a) - \gamma \max_a \hat{Q}^{*}_{t}(s_{t+1}, a)$. We know from standard $Q$-Learning that $\rdd{\|}\operatorname{E}\left[F_{t}^{Q} \vert P_t \right]\rdd{\|_\infty} \leq \gamma \rdd{\|}\Delta_t \rdd{\|_\infty}$. Hence, to show the remaining condition 3 of Lemma \ref{lemma:conv_stoch_approx} and thus prove \rdd{Theorem} \ref{pro:KE_Q_convergence}, we need to show that
    $$\lim_{t \rightarrow \infty} c_t = \lim_{t\rightarrow \infty} \left[\gamma \operatorname{KE_a} \hat{Q}^{*}_{t}(s_{t+1}, a) - \gamma \max_a \hat{Q}^{*}_{t}(s_{t+1}, a)\right] = 0$$
    with probability 1. For convenience, we recall the definition of the KE-operator:
    \begin{align}\label{eq:KE_a}
    \operatorname{KE_a} \hat{Q}^{*}_{t}(s_{t+1}, a) &= \left\{\sum_{a\in \mathcal{A}} \kappa\left[T_{\hat{Q}^{*}_t}(s_{t+1},a)\right]\right\}^{-1} \sum_{a'\in \mathcal{A}} \kappa\left[T_{\hat{Q}^{*}_t}(s_{t+1},a)\right] \hat{Q}^{*}_t(s_{t+1},a),\\
    T_{\hat{Q}^{*}_t}(s_{t+1},a) &= \frac{\hat{Q}^{*}_t(s_{t+1},a) - \max_{a' \in \mathcal{A}}\hat{Q}^{*}_t(s_{t+1},a')}{\sqrt{\widehat{\operatorname{Var}}_t\left[\hat{Q}^{*}_t(s_{t+1},a)\right] + \widehat{\operatorname{Var}}_t\left[\hat{Q}^{*}_t(s_{t+1}, a^*) \right]}},\label{eq:KE_a2}
\end{align}
for a maximizing action $a^* \in \{a \in \mathcal{A} \mid \hat{Q}^{*}_t(s_{t+1}, a) = \max_{a' \in \mathcal{A}}\hat{Q}^{*}_t(s_{t+1},a')\}$. To ensure that $\operatorname{KE_a} \hat{Q}^{*}_{t}(s_{t+1}, a)$ converges to $\max_a \hat{Q}^{*}_{t}(s_{t+1}, a)$ \rdd{with probability 1}, we require that the weights of all non-maximizing actions during the summation in (\ref{eq:KE_a}) converge to \rdd{zero with probability 1}. Since we imposed the condition on the kernel function $\kappa(\cdot)$ that $\lim_{x\rightarrow -\infty} \kappa(x) = 0$, we have:
$$\rdd{\mathbb{P}\Big(}\lim_{t\rightarrow \infty} \kappa\left[T_{\hat{Q}^{*}_t}(s_{t+1},a)\right] = 0\rdd{\Big) = 1} \iff \rdd{\mathbb{P}\Big(}\lim_{t\rightarrow \infty} T_{\hat{Q}^{*}_t}(s_{t+1},a) = -\infty\rdd{\Big) = 1},$$
for \rdd{all $a \neq a^*$}. To see whether the test statistics for these actions get arbitrarily small, we need to closely investigate (\ref{eq:KE_a2}). We note that the numerator of (\ref{eq:KE_a2}) is bounded from above by \rdd{zero} and bounded from below by some real number $C_1 \leq 0$ \rdd{with probability 1} due to assumption 4. Thus, to ensure $\rdd{\mathbb{P}\Big(}\lim_{t\rightarrow \infty} T_{\hat{Q}^{*}_t}(s_{t+1},a) = -\infty\rdd{\Big) = 1}$, we require $\rdd{\mathbb{P}\Big(}\lim_{t\rightarrow \infty} \widehat{\operatorname{Var}}_t\left[\hat{Q}^{*}_t(s_{t+1},\rdd{a'})\right] = 0\rdd{\Big) = 1}$ \rdd{$\forall a' \in \mathcal{A}$}. In a classical statistical setup, where the variance of a mean estimate converges to 0 with increasing sample size, the proof would already be finished. However, since we also perform incremental updates of the variance, we need to analyze the limit behaviour of:
\begin{equation}\label{eq:var_est}
    \widehat{\operatorname{Var}}_t\left[\hat{Q}^{*}_t(s_{t+1},a)\right] = \frac{ \widehat{\sigma^{2}}_{\text{pro},t}(s_{t+1}, a)}{n_{\text{eff},t}(s_{t+1},a)},
\end{equation}
\rdd{where $a \in \mathcal{A}$.} The numerator of (\ref{eq:var_est}) is lower-bounded by \rdd{zero} and upper-bounded by some real number $C_2 \geq 0$ \rdd{with probability 1}, again due to the boundedness assumption on the rewards. Therefore, we have: 
$$\rdd{\mathbb{P}\Big(}\lim_{t\rightarrow \infty} \widehat{\operatorname{Var}}_t\left[\hat{Q}^{*}_t(s_{t+1},a)\right] = 0\rdd{\Big) = 1} \iff \rdd{\mathbb{P}\Big(}\lim_{t\rightarrow \infty} n_{\text{eff},t}(s_{t+1},a) = \infty\rdd{\Big) = 1}.$$
The effective sample size is computed via $n_{\text{eff},t}(s_{t+1},a) = \frac{\left[\omega_t(s_{t+1},a)\right]^2}{\omega_{t}^2(s_{t+1},a)}$, where numerator and denominator are updated for a visited state-action pair $(s_t, a_t)$ as follows:
\begin{align}
\omega_{t+1}(s_{t},a_t) &\leftarrow [1-\tau_t(s_t,a_t)]\omega_t(s_{t},a_t) + \tau_t(s_t,a_t),\nonumber \\
\omega_{t+1}^2(s_{t},a_t) &\leftarrow [1-\tau_t(s_t,a_t)]^2\omega_{t}^2(s_{t},a_t) + [\tau_{t}(s_t,a_t)]^2.\label{eq:proc_2}
\end{align}
If we set $\omega_0(s,a) \in [\varepsilon,1]$ $\forall s \in \mathcal{S}, a \in \mathcal{A}$ for some $0 < \varepsilon \leq 1$, then \rdd{it} follows $\omega_t(s,a) \in [\varepsilon, 1]$ $\forall t$ and $\forall s \in \mathcal{S}, a \in \mathcal{A}$ with probability 1 from Lemma \ref{lemma:n_eff_num}. Furthermore, from Lemma \ref{lemma:n_eff_den} \rdd{it} follows that $\lim_{t\rightarrow \infty} \omega_{t}^2(s,a) = 0$ $\forall s \in \mathcal{S}, a \in \mathcal{A}$ with probability 1 if we set, for example, $\omega_{0}^2(s,a) \in (0,1]$ $\forall s \in \mathcal{S}, a \in \mathcal{A}$. Combining these findings yields $\lim_{t\rightarrow \infty} n_{\text{eff},t}(s,a) = \infty$ $\forall s \in \mathcal{S}, a \in \mathcal{A}$ with probability 1, completing the proof.\qed}
\end{Sketch}

\gdef\thesection{Appendix \Alph{section}}
\section{Adaptive TE-BDQN Algorithm}\label{appendix:Ada_TE_BDQN}

\begin{algorithm}[H]
\begin{small}
\DontPrintSemicolon
\setstretch{1.10}
\SetAlgoLined
 \textbf{initialize} Action-value estimate networks with $K$ outputs $\left\{\hat{Q}^{*}_k \right\}^{K}_{k=1}$, masking distribution $M$, empty replay buffer $D$\\
\Repeat{}{
    Initialize $s$\\
    Pick a value function to act: $k \sim \text{Uniform}\{1,\ldots,K\}$\\
    \Repeat{$s$ is terminal}{
      Choose action $a$ from state $s$ with greedy policy derived from $\hat{Q}^{*}_k$\\
      Take action $a$, observe reward $r$ and next state $s'$\\
      \rd{Sample bootstrap masks $m = (m_1, \ldots, m_K)$}\\
      Add $(s, a, r, s', m)$ to replay buffer $D$\\
      Sample random minibatch of transitions $\left\{(s_i, a_i, s'_{i}, r_i, \rd{m^i})\right\}_{i=1}^{B}$ from $D$\\
      Perform gradient descent step based on (\ref{eq:BootDQN_KE_TE_grad})\\
      Every $C$ steps: \\
      \quad Reset $\rd{\theta_k^{-} = \theta_k}$ for $k = 1,\ldots, K$\\
      \quad Run partial episodes to update $\alpha$ via:
    $$
    \rd{\alpha \leftarrow \alpha + \frac{\tau_{\rm Ada}}{K} \sum_{k=1}^{K} \sum_{\Tilde{t} = 1}^{T_{\rm Ada}} \left[R_k(s_{\Tilde{t},k}, a_{\Tilde{t},k}) - \hat{Q}^{*}_k (s_{\Tilde{t},k}, a_{\Tilde{t},k}; \theta_k) \right]}$$\\
    $s \leftarrow s'$\\
    }
}
\caption{Ada-TE-BDQN}
\end{small}
\end{algorithm}

\setcounter{table}{0}
\gdef\thesection{Appendix \Alph{section}}
\section{Hyperparameters in MinAtar}\label{appendix:exp_hyperparams}
Table \ref{tab:hyperparams} details the settings for the experiments in MinAtar \citep{young2019minatar}. All algorithms were implemented using PyTorch \citep{paszke2019pytorch} and the computation was performed on Intel(R) Xeon(R) CPUs E5-2680 v3 (12 cores) @ 2.50GHz. The source code is available at: \url{https://github.com/MarWaltz/TUD_RL}. Note that we replaced some extreme outlier seeds for the bootstrap-based algorithms in Breakout and Seaquest environments. For example, the algorithm TE-BDQN led to an astonishing peak performance with a test return of over 200 in a Breakout run, while it got stuck in a rare occasion on Seaquest. Including those exceptions would paint an unrealistic picture of the actual capabilities of the algorithm. \rdd{A similar phenomenon} was observed for all bootstrap-based algorithms, and we argue that those rare instabilities are due to the algorithm's dependence on the initialization of the bootstrap heads.
\begin{table}[H]
    \def\arraystretch{1}
    \centering
    \begin{tabular}{l|l}
    Hyperparameter & Value\\
    \toprule
    
    Batch size ($B$) & 32\\
    Discount factor ($\gamma$) &  0.99 \\
    Loss function & MSE \\
    Min. replay buffer size & $5\,000$ \\
    Max. replay buffer size & $100\,000$ \\
    Optimizer & Adam \\
    Target network update frequency ($C$) & $1\,000$\\
    Initial exploration rate* ($\epsilon_{\rm initial}$) & 1.0 \\
    Final exploration rate* ($\epsilon_{\rm final}$) & 0.1 \\
    Test exploration rate* ($\epsilon_{\rm test}$) & 0.0 \\
    Exploration steps* & $100\,000$ \\
    Bernoulli mask probability\textsuperscript{\textdagger} ($p$) & 1.0 \\
    Number of bootstrap heads\textsuperscript{\textdagger} ($K$) & 10 \\
    Initial bias parameter\textsuperscript{\textdaggerdbl} ($\alpha$) & 0.25\\
    Time horizon\textsuperscript{\textdaggerdbl} ($T_{\rm Ada})$ & 32\\
    \end{tabular}
    \caption{List of hyperparameters used in the MinAtar experiments. Parameters with a * are used by DQN, DDQN, SCDQN, \rd{and MaxMin DQN,} while the ones with a \textsuperscript{\textdagger} are relevant for BDQN, TE-BDQN, KE-BDQN, and Ada-TE-BDQN. An \textsuperscript{\textdaggerdbl} exclusively refers to Ada-TE-BDQN.}
    \label{tab:hyperparams}
\end{table}

\setcounter{figure}{0}
\gdef\thesection{Appendix \Alph{section}}
\section{\rdd{Further results for MinAtar}}\label{appendix:Minatar_further}
\rdd{For completeness, we present the return and bias development during training in the MinAtar environments with learning rates $\tau \in \{10^{-5}, 10^{-4}\}$ in Figures \ref{fig:MinAtar_Asterix_LRcomp} - \ref{fig:MinAtar_SpaceInvaders_LRcomp}.}

\begin{figure}[htbp]
    \centering
    \includegraphics[width=\textwidth]{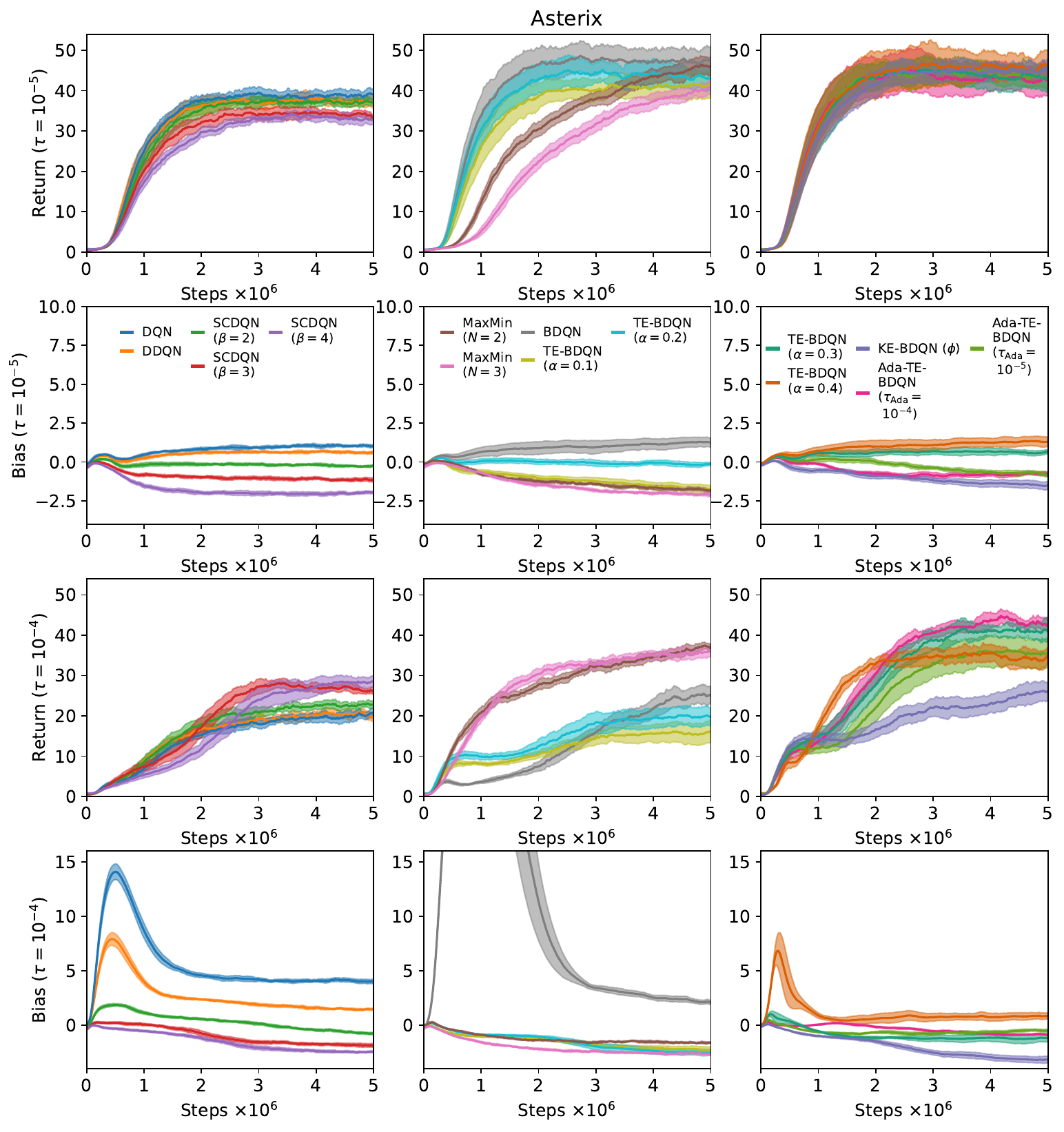}
    \caption{\rdd{Algorithm comparison on Asterix for learning rates $\tau \in \{10^{-5}, 10^{-4}\}$. The first two rows show the return and bias over time for $\tau = 10^{-5}$, while the results for $\tau=10^{-4}$ are displayed in rows three and four. Regarding algorithms, the left column includes the DQN, DDQN, and SCDQN; the middle column displays the MaxMin DQN, BDQN, and two TE-BDQNs; and the right column contains the remaining TE-BDQNs, the KE-BDQN, and the Ada-TE-BDQN results. The peak of the bias curve of the BDQN in row four of column two is at approximately 50, which we do not display to ensure the readability of the other curves.}}
    \label{fig:MinAtar_Asterix_LRcomp}
\end{figure}

\begin{figure}[htbp]
    \centering
    \includegraphics[width=\textwidth]{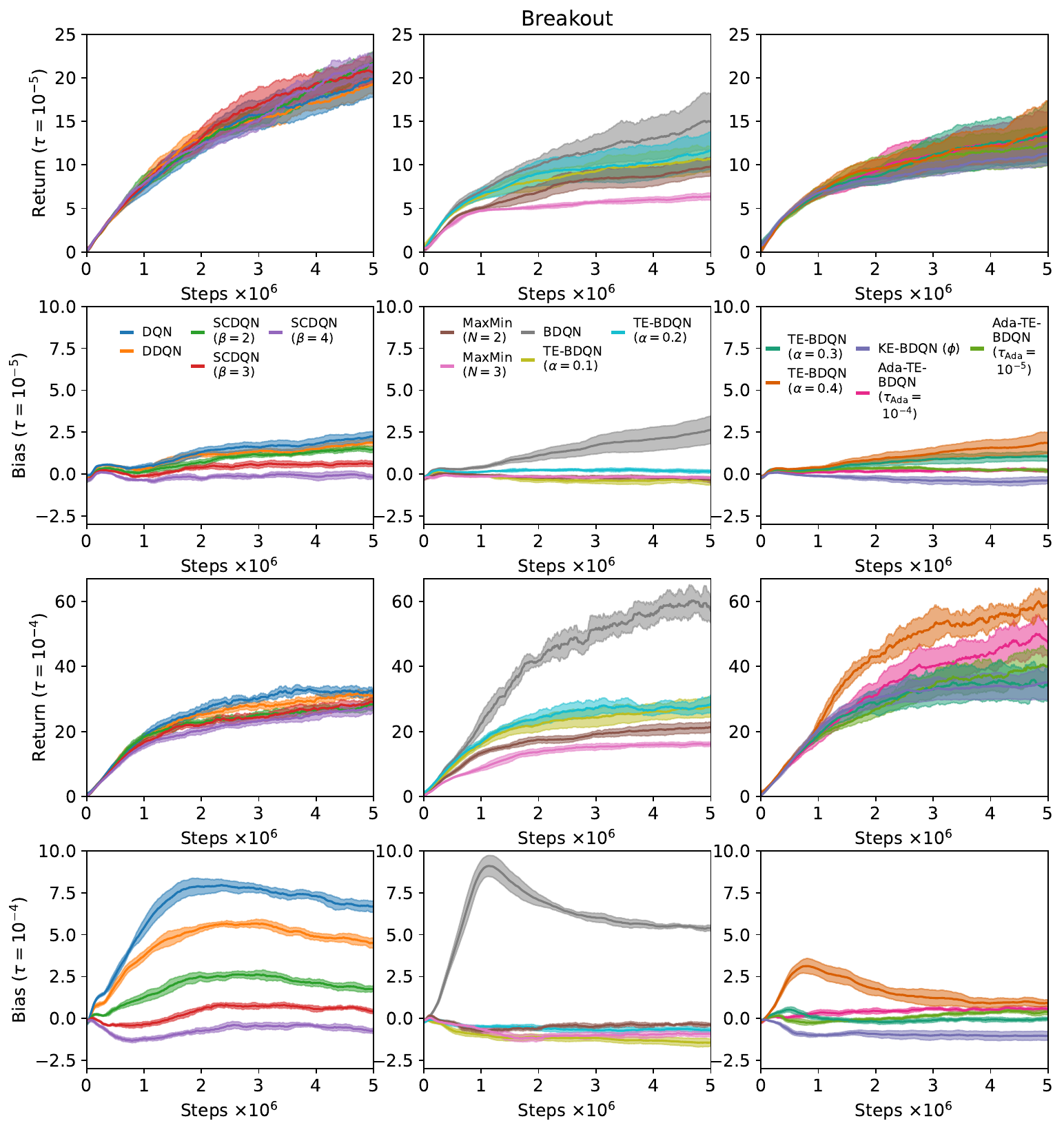}
    \caption{\rdd{Algorithm comparison on Breakout for learning rates $\tau \in \{10^{-5}, 10^{-4}\}$. The first two rows show the return and bias over time for $\tau = 10^{-5}$, while the results for $\tau=10^{-4}$ are displayed in rows three and four. Regarding algorithms, the left column includes the DQN, DDQN, and SCDQN; the middle column displays the MaxMin DQN, BDQN, and two TE-BDQNs; and the right column contains the remaining TE-BDQNs, the KE-BDQN, and the Ada-TE-BDQN results.}}
    \label{fig:MinAtar_Breakout_LRcomp}
\end{figure}

\begin{figure}[htbp]
    \centering
    \includegraphics[width=\textwidth]{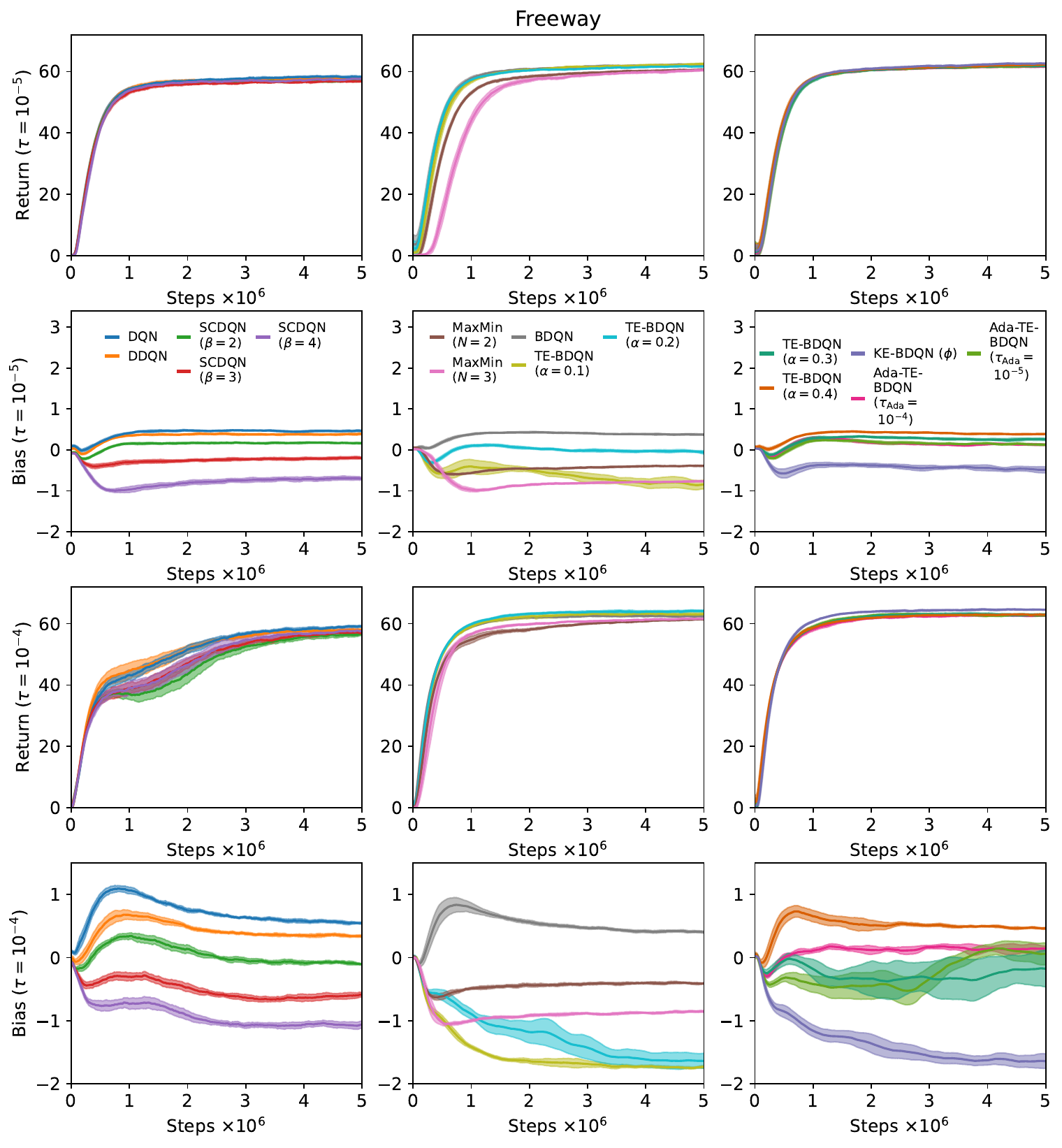}
    \caption{\rdd{Algorithm comparison on Freeway for learning rates $\tau \in \{10^{-5}, 10^{-4}\}$. The first two rows show the return and bias over time for $\tau = 10^{-5}$, while the results for $\tau=10^{-4}$ are displayed in rows three and four. Regarding algorithms, the left column includes the DQN, DDQN, and SCDQN; the middle column displays the MaxMin DQN, BDQN, and two TE-BDQNs; and the right column contains the remaining TE-BDQNs, the KE-BDQN, and the Ada-TE-BDQN results.}}
    \label{fig:MinAtar_Freeway_LRcomp}
\end{figure}

\begin{figure}[htbp]
    \centering
    \includegraphics[width=\textwidth]{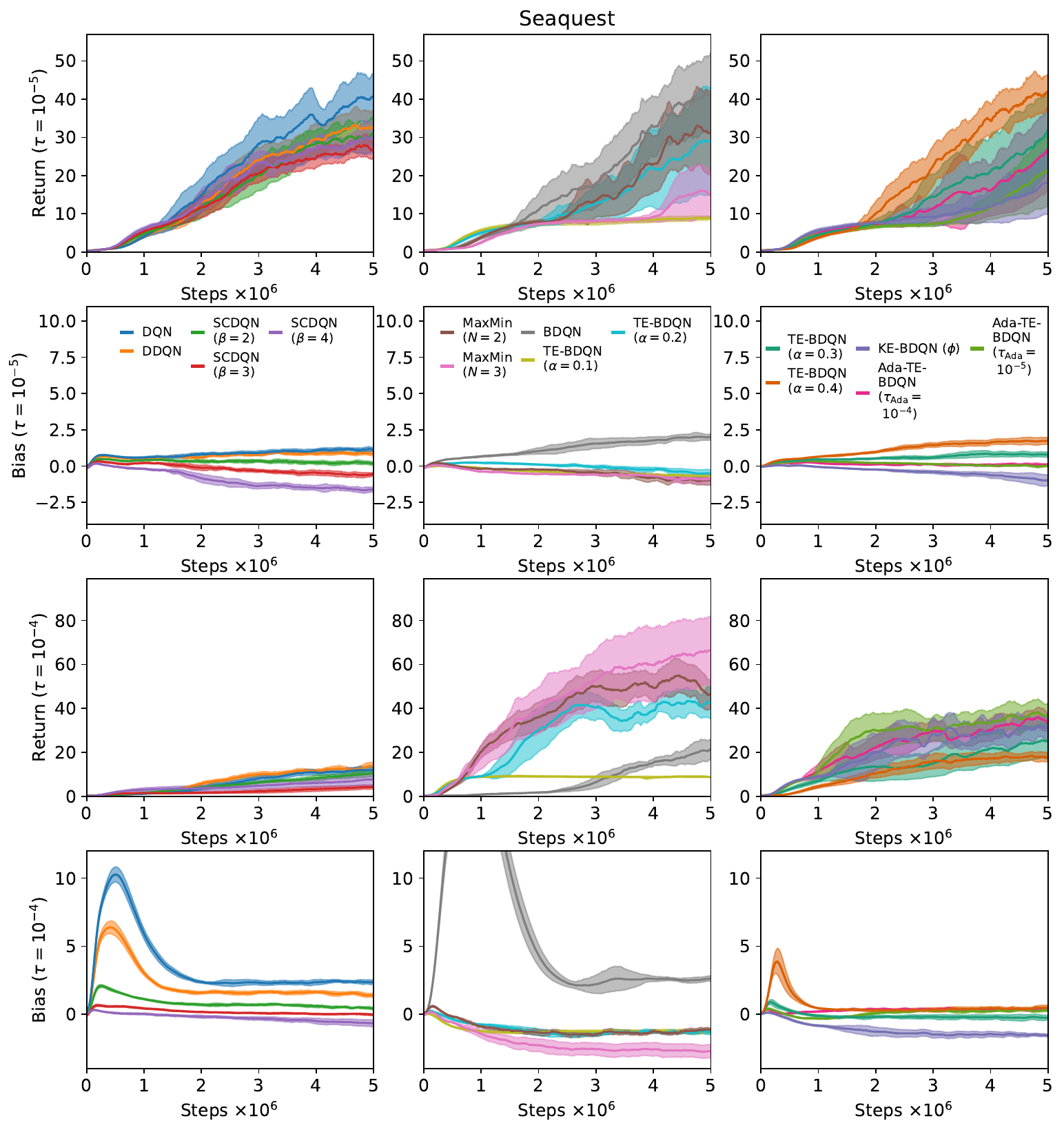}
    \caption{\rdd{Algorithm comparison on Seaquest for learning rates $\tau \in \{10^{-5}, 10^{-4}\}$. The first two rows show the return and bias over time for $\tau = 10^{-5}$, while the results for $\tau=10^{-4}$ are displayed in rows three and four. Regarding algorithms, the left column includes the DQN, DDQN, and SCDQN; the middle column displays the MaxMin DQN, BDQN, and two TE-BDQNs; and the right column contains the remaining TE-BDQNs, the KE-BDQN, and the Ada-TE-BDQN results. The peak of the bias curve of the BDQN in row four of column two is at approximately 20, which we do not display to ensure the readability of the other curves.}}
    \label{fig:MinAtar_Seaquest_LRcomp}
\end{figure}

\begin{figure}[htbp]
    \centering
    \includegraphics[width=\textwidth]{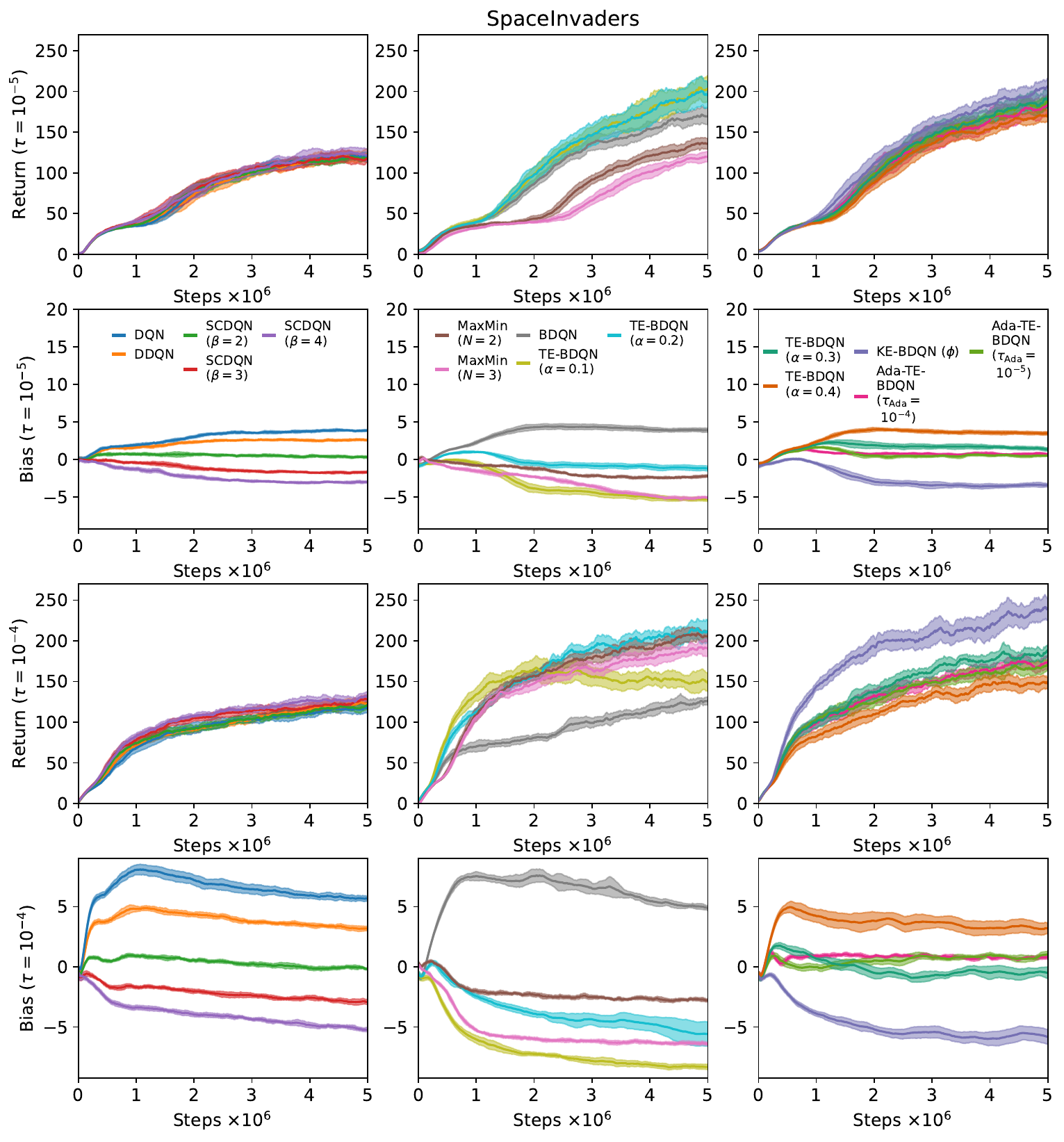}
    \caption{\rdd{Algorithm comparison on SpaceInvaders for learning rates $\tau \in \{10^{-5}, 10^{-4}\}$. The first two rows show the return and bias over time for $\tau = 10^{-5}$, while the results for $\tau=10^{-4}$ are displayed in rows three and four. Regarding algorithms, the left column includes the DQN, DDQN, and SCDQN; the middle column displays the MaxMin DQN, BDQN, and two TE-BDQNs; and the right column contains the remaining TE-BDQNs, the KE-BDQN, and the Ada-TE-BDQN results.}}
    \label{fig:MinAtar_SpaceInvaders_LRcomp}
\end{figure}






\end{document}